\let\OldStatex\Statex
\renewcommand{\Statex}[1][0]{%
  \setlength\@tempdima{\algorithmicindent}%
  \OldStatex\hskip\dimexpr#1\@tempdima\relax}
\algnewcommand\algorithmicinput{\textbf{Input:}}
\algnewcommand\Input{\item[\algorithmicinput]}
\algnewcommand\algorithmicoutput{\textbf{Output:}}
\algnewcommand\Output{\item[\algorithmicoutput]}
\algnewcommand\algorithmicglobalvars{\textbf{Global Variables:}}
\algnewcommand\GlobalVars{\item[\algorithmicglobalvars]}
\newcommand{\comment}[1]{}
\newtheorem{thm}{Theorem}
\newtheorem{lem}[thm]{Lemma}
\theoremstyle{definition}
\newtheorem{problem}{Problem}
\newcommand{\ubar}[1]{\underaccent{\bar}{#1}}
\newcommand{\R}{\mathbb{R}}
\DeclareMathOperator{\Sym}{\mathbb{S}}
\def \PSD{\mathbb{S}_{+}}
\DeclareMathOperator{\tr}{tr}
\DeclareMathOperator{\Span}{span}
\DeclareMathOperator{\Diag}{Diag}
\def \ones{\mathbbm{1}}
\def \transpose{^\mathsf{T}}
\DeclareMathOperator{\Graph}{graph}
\DeclareMathOperator{\Orthogonal}{O}
\DeclareMathOperator{\SO}{SO}
\DeclareMathOperator*{\argmax}{argmax}
\DeclareMathOperator{\Gaussian}{\mathcal{N}}  %
\DeclareMathOperator{\Langevin}{Langevin}  %
\def \Graph {G}  %
\def \Nodes {\mathcal{V}}  %
\def \Edges {\mathcal{E}}  %
\newcommand{\directed}[1]{\vec{#1}}  %
\def \dEdges{\directed{\Edges}}  %
\def \edge{\lbrace i,j \rbrace}  %
\def \dedge{(i,j)}  %
\def \incEdges{\delta}
\def\Lap{L}  %
\def \tran{t}
\def \rot{R}
\newcommand{\true}[1]{\ubar{#1}}
\newcommand{\noisy}[1]{\tilde{#1}}
\newcommand{\est}[1]{\hat{#1}}
\def \ttran{\true{\tran}}
\def \trot{\true{\rot}}
\def \optsym{*}
\def \ntran{\noisy{\tran}}
\def \nrot{\noisy{\rot}}
\def \rotsym{\rho}  %
\def \transym{\tau}  %
\def \RotW{W^{\rotsym}}  %
\def \LapRotW{\Lap}  
\def \LapRotWO{\Lap^{f}}
\def \LapRotWC{\Lap^{c}}
\def \projsym{\Pi}
\newcommand{\rounded}[1]{\projsym(#1)}
\newcommand{\nearestRounded}[1]{\projsym_{N}{(#1)}}
\def \Sphere{Sphere}
\def \Grid{Grid}
\def \Torus{Torus}
\def \selection{x}
\def \EO{\Edges^{f}}
\def \EC{\Edges^{c}}
\def \Eopt{\Edges^{\optsym}}
\def \objectiveF{f}
\def \dualF{\objectiveF_{D}}
\def \optValAC{p^{\optsym}}
\def \supergradient{g}
\def \Laplacian{\Lap}
\def \weight{w}
\def \City10K{\emph{City10K}}
\def \Intel{\emph{Intel}}
\def \AIS2Klinik{\emph{AIS2Klinik}}
\def \Sphere{\emph{Sphere}}
\def \Torus{\emph{Torus}}
\def \Grid{\emph{Grid}}
\def \MAC{MAC}
\title{MAC: Graph Sparsification by Maximizing Algebraic Connectivity}
\author{Kevin J. Doherty$^{1}$, Alan Papalia$^{1}$, Yewei Huang$^{2}$, David M. Rosen$^{3}$, Brendan Englot$^{2}$, and John J. Leonard$^{1}$%
  \thanks{$^{1}$Massachusetts Institute of Technology (MIT), Cambridge, MA 02139.
    \texttt{\{kdoherty, apapalia, jleonard\}@mit.edu}.\\
    $^{2}$Stevens Institute of Technology, Hoboken, NJ 07030. \texttt{\{yhuang85, benglot\}@stevens.edu} \\
    $^{3}$Northeastern University, Boston, MA 02115. \texttt{d.rosen@northeastern.edu}}%
}%
\begin{document}
\maketitle

\begin{abstract}
  Simultaneous localization and mapping (SLAM) is a critical capability in
  autonomous navigation, but memory and computational limits make long-term
  application of common SLAM techniques impractical; a robot must be able to
  determine what information should be retained and what can safely be
  forgotten. In graph-based SLAM, the number of edges (measurements) in a pose
  graph determines both the memory requirements of storing a robot's
  observations and the computational expense of algorithms deployed for
  performing state estimation using those observations, both of which can grow
  unbounded during long-term navigation. Motivated by these challenges, we
  propose a new general purpose approach to sparsify graphs in a manner that
  maximizes \emph{algebraic connectivity}, a key spectral property of graphs
  which has been shown to control the estimation error of pose graph SLAM
  solutions. Our algorithm, MAC (for \emph{maximizing algebraic connectivity}),
  is simple and computationally inexpensive, and admits formal \emph{post hoc}
  performance guarantees on the quality of the solution that it provides. In
  application to the problem of pose-graph SLAM, we show on several benchmark
  datasets that our approach quickly produces high-quality sparsification
  results which retain the connectivity of the graph and, in turn, the quality
  of corresponding SLAM solutions.
  
\end{abstract}

\begin{IEEEkeywords}
SLAM, Mapping, Localization, Learning and Adaptive Systems
\end{IEEEkeywords}

\section*{Supplementary Material}
\label{sec:supplemental-material}

Open source code implementing \MAC{} and reproducing our experimental results on
benchmark pose-graph SLAM datasets has been made available
\url{https://github.com/MarineRoboticsGroup/mac.git}.

\section{Introduction}
\label{sec:intro}

The problem of simultaneous localization and mapping (SLAM), in which a robot
aims to jointly infer its pose and the location of environmental landmarks, is a
critical capability in autonomous navigation. However, as we aim to scale SLAM
algorithms to the setting of lifelong autonomy, particularly on compute- or
memory-limited platforms, a robot must be able to determine \emph{what
  information should be kept, and what can safely be forgotten}
\cite{rosen2021advances}. In particular, in the setting of graph-based SLAM and
rotation averaging, the number of edges in a measurement graph determines both
the memory required to store a robot's observations as well as the computation
time of algorithms employed for state estimation using this measurement graph.

While there has been substantial work on the topic of measurement pruning (or
\emph{sparsification}) in lifelong SLAM (e.g. \cite{carlevarisbianco13iros,
  carlevaris2014conservative, johannsson12rssw, kurz2021geometry,
  kretzschmar2012information}), most existing methods rely on heuristics for
sparsification whereby little can be said about the quality of the statistical
estimates obtained from the sparsified graph versus the original. Recent work on
performance guarantees in the setting of pose-graph SLAM and rotation averaging
identified the spectral properties---specifically the \emph{algebraic
  connectivity} (also known as the \emph{Fiedler value})---of the measurement
graphs encountered in these problems to be central objects of interest,
controlling not just the \emph{best possible} expected performance (per earlier work on
Cram\'er-Rao bounds \cite{boumal2014cramer, chen2021cramer,
  khosoussi2014novel}), but also the \emph{worst-case} error of estimators
\cite{rosen2019se, doherty2022performance}.
These observations suggest the algebraic connectivity as a natural measure of
graph quality for assessing SLAM graphs. This motivates our use of the algebraic
connectivity as an \emph{objective} in formulating the graph sparsification
problem.

Specifically, we propose a spectral approach to pose graph sparsification which
maximizes the algebraic connectivity of the measurement graph subject to a
constraint on the number of allowed edges.\footnote{Our method is closely
  related to, but should not be confused with the algorithms typically employed
  for \emph{spectral sparsification} \cite{spielman2011spectral} (see Section
  \ref{sec:lap-eig-opt} for a discussion).} This corresponds to the well-known \emph{E-optimal}
design criterion from the theory of optimal experimental design (TOED)
\cite{pukelsheim2006optimal}. The specific problem we formulate turns out to be
an instance of the \emph{maximum algebraic connectivity augmentation} problem,
which is NP-Hard \cite{mosk2008maximum}. Consequently, computing globally
optimal solutions may not be feasible in general. To address this, we propose to
solve a computationally tractable convex relaxation and \emph{round} solutions obtained
to the relaxed problem to approximate feasible solutions of the original
problem. Relaxations of this form have been considered previously; in particular,
\citet{ghosh2006growing} developed a semidefinite program relaxation to solve
problems of the form we consider. However, these techniques do not scale to the
size of typical problems encountered in graph-based SLAM. To this end, we
propose a first-order optimization approach that we show is practically fast for
even quite large SLAM problems. Moreover, we show that the \emph{dual} to our
relaxation provides tractable, high-quality bounds on the suboptimality of the
solutions we provide \emph{with respect to the original problem}.

The \MAC{} algorithm was initially proposed and presented in our prior work
\cite{doherty2022spectral}. Since our original proposal, it has been
successfully applied and adapted to several new settings, such as measurement
selection for multi-robot navigation under bandwidth constraints in
\emph{Swarm-SLAM} \cite{lajoie2024swarm} and the \emph{OASIS} algorithm for
sensor arrangement \cite{kaveti2023oasis}. The recent work of
\citet{nam2023spectral} extended the MAC algorithm by removing the edge budget
parameter and instead introducing a regularization term based on the largest
eigenvalue of the graph adjacency matrix (see Section \ref{sec:applications}).
This manuscript expands upon our prior work \cite{doherty2022spectral} in
several key ways: it is self-contained, including all proofs and derivations; we
implement a new dependent randomized rounding procedure which we show offers a
dramatic improvement in the quality of results over our initially proposed
approach; and we provide several new experimental results, including new
comparisons with prior work \cite{khosoussi2019reliable} which specifically
optimizes the \emph{D-optimality} criterion, new benchmark datasets, and a
multi-session lidar SLAM application.

The remainder of this paper proceeds as follows: In Section
\ref{sec:related-work} we discuss recent results on the importance of algebraic
connectivity in the context of pose-graph SLAM, previous work on the maximum
algebraic connectivity augmentation problem, and prior work on network design
with applications in pose graph sparsification. Section \ref{sec:background}
provides background on relevant mathematical preliminaries in graph theory, our
application to pose-graph SLAM, and an introduction to the problem of maximum
algebraic connectivity augmentation. In Section \ref{sec:methods}, we formulate
the \MAC{} algorithm: we present a convex relaxation of the algebraic
connectivity maximization problem, present a first-order optimization approach
for solving the relaxation, and describe simple rounding procedures for
obtaining approximate solutions to the original problem from a solution to the
relaxed problem. In Section \ref{sec:experimental-results} we demonstrate our
approach in application to pose-graph SLAM, comparing our approach with the
greedy \emph{D-optimal} sparsification procedure of
\citet{khosoussi2019reliable} as well as a na\"ive ``topology unaware''
baseline. We show on several benchmark datasets and a multi-session SLAM
scenario using real data from the University of Michigan North Campus Long Term
(NCLT) dataset that our method is capable of \emph{quickly} producing
well-connected sparse graphs that retain the accuracy of the maximum-likelihood
estimators employed to solve pose-graph SLAM using the sparsified graphs.

\section{Related Work}
\label{sec:related-work}

Laplacian eigenvalue optimization is a rich area of study with numerous
applications to numerical computing (such as in the design of preconditioners),
networking, and robot perception, control, and estimation. A comprehensive
overview of spectral graph theory and its relevant application areas is well
beyond the scope of this paper. In our discussion of related works, we loosely
group previous works into the more general aspects of Laplacian eigenvalue
optimization problems (Section \ref{sec:lap-eig-opt}) including problem
formulations and solution techniques, and robotics-specific applications
(Section \ref{sec:applications}) related to spectral properties of graphs (and
algorithms that make use of them).\footnote{For additional background and contemporary applications in robotics, we would also refer the interested reader to the \emph{Robotics: Science and Systems} 2023 Workshop on Spectral Graph-Theoretic Methods for Estimation and Control: \url{https://sites.google.com/view/sgtm2023}.} Many papers considering fundamental aspects
of Laplacian eigenvalue optimization problems do so in the context of an
application of interest (and likewise, many applications motivate new solution
techniques, for example), and so necessarily there is considerable overlap in
our own discussions.

We particularly emphasize existing works making use of the algebraic
connectivity, but we also aim to place the algebraic connectivity (as an object
of study, and as a quantity to be optimized) in context as it relates to other
important spectral properties of graphs (and their applications in robotics).

\subsection{Laplacian eigenvalue optimization}\label{sec:lap-eig-opt}

The spectral properties of graph Laplacians are often closely related to
properties of problems parameterized by those graphs. For example, spectral
properties are key determiners of mixing times in Markov processes, effective
resistance of a network, and error in consensus estimation problems
\cite{boyd2006convex}. The importance of algebraic connectivity specifically has
been recognized since at least 1973, with the seminal work of
\citet{fiedler1973algebraic}. This, in turn, makes the spectral properties of a
graph's Laplacian a natural choice of objective for optimization when
considering problems of graph design, where we would like to find the
\emph{best} graph for a particular task.

The theory of optimal experimental design (TOED) \cite{pukelsheim2006optimal}
exposes another close connection with Laplacian eigenvalue optimization.
Specifically, TOED, as its name suggests, seeks to identify a set of
measurements that will enable a parameter of interest to be determined as
accurately as possible. So-called A-optimality, T-optimality, E-optimality, and
D-optimality are common criteria, each of which corresponds to optimizing a
different property of the \emph{information matrix} describing the distribution
of interest. Briefly, A-optimal designs minimize the trace of the inverse of the
information matrix, D-optimal designs maximize the determinant of the
information matrix, E-optimal designs maximize the smallest eigenvalue of the
information matrix, and T-optimal designs maximize the trace of the information
matrix. For statistical estimation problems parameterized by graphs, the
information matrix commonly corresponds closely with the graph Laplacian (see,
e.g. \cite{khosoussi2014novel,chen2021cramer} for the case of SLAM), making each of these
optimality criteria functions of the Laplacian spectrum. For example,
D-optimality corresponds to maximize the determinant of the Laplacian, and
E-optimal designs maximize the smallest (nonzero) eigenvalue of the graph
Laplacian, which is precisely the algebraic connectivity of the graph.

The combinatorial nature of graphs, as well as the nonlinear relationship between the
edges of a graph and the spectrum of its Laplacian, make Laplacian eigenvalue
optimization problems challenging to solve computationally (see, e.g. \cite{mosk2008maximum}). To
alleviate this challenge, many previous works have considered convex (especially
semidefinite program (SDP)) relaxations of otherwise challenging optimization
problems involving Laplacian eigenvalues (see \citet{boyd2006convex} for an
overview). These can be viewed as \emph{eigenvalue optimization} problems more
generally, the properties of which have been studied extensively (see, e.g.
\cite{lewis1996eigenvalue,lewis2003mathematics,shapiro1995eigenvalue}).
Obtaining a globally optimal solution to the original problem from a solution to
the relaxation is also nontrivial. Commonly, ``rounding'' procedures are
introduced which allow us to compute an approximate solution to the original
problem from a solution to the relaxation. In some cases, search methods (such
as branch-and-bound) or mixed-integer programming can be used to solve the
original combinatorial problem while gaining some efficiency through the use of
relaxations to prune parts of the solution space.

The specific problem of maximizing the algebraic connectivity subject to
cardinality constraints, which we consider in this paper, has been explored
previously for a number of related applications. \citet{ghosh2006growing}
consider a semidefinite program relaxation of the algebraic connectivity
maximization problem. \citet{nagarajan2018maximizing}, and more recently
\citet{somisetty2023optimal}, considered a mixed-integer optimization approach.
While the \MAC{} algorithm could make use of any solution method for the
relaxation we consider (or the original, constrained problem), unfortunately
neither of these methods scales to the types of problems we are considering. For
example, \citet{nagarajan2018maximizing} provides methods which are capable of
solving algebraic connectivity maximization problems to optimality, but for
graphs with tens of nodes, computation times for the these methods are on the
order of seconds to minutes. The graphs commonly encountered in SLAM (and which
we consider here) typically have \emph{thousands} of nodes and edges.\footnote{The
  monograph of \citet{boyd2006convex} discusses practical scalability issues
  with semidefinite program relaxations of Laplacian eigenvalue optimization
  problems and the potential computational improvements afforded by a
  subgradient approach for large-scale problems, which is exactly the approach
  we take in our development of \MAC{}.}

A related line of research is that of \emph{spectral sparsifiers}
\cite{spielman2011spectral}. Spectral sparsifiers are typically constructed as
sparse (reweighted) subgraphs (often by a combination of random sampling and
reweighting of edges) with the goal of preserving, to within a desired
tolerance) the \emph{entire} spectrum of the Laplacian. In contrast, the \MAC{}
algorithm proposed here aims to preserve the algebraic connectivity (a single
eigenvalue) of the Laplacian. While the construction of the MAC algorithm
affords it some weak spectral preservation properties, we are unable to, for
example, guarantee approximation of the Laplacian spectrum to within some
tolerance specified \emph{a priori} in the same way that spectral sparsifiers
aim to do.

\subsection{Applications of graph sparsification in robotics}\label{sec:applications}

Spectral properties of graphs encountered in SLAM appear in both error analysis
of maximum-likelihood estimators (e.g. in Cram\'er-Rao lower bounds) as well as
in the design of high-quality graphs (whether through graph sparsification
\cite{khosoussi2014novel, khosoussi2019reliable} or \emph{active SLAM}
\cite{chen2021cramer}). Algebraic connectivity specifically has appeared in the
context of rotation averaging \cite{boumal2014cramer}, linear SLAM problems and
sensor network localization \cite{khosoussi2014novel, khosoussi2019reliable},
and pose-graph SLAM \cite{chen2021cramer} as a key quantity controlling
estimation performance. In particular, \citet{boumal2014cramer} observed that
the inverse of the algebraic connectivity bounds (up to constants) the
Cram\'er-Rao lower bound on the expected mean squared error for rotation
averaging. More recently, it also appeared as the key quantity controlling the
\emph{worst-case} error of estimators applied to measurement graphs in
pose-graph SLAM and rotation averaging \cite{rosen2019se,
  doherty2022performance} (where \emph{larger} algebraic connectivity is
associated with lower error). More practically, the E-optimality
criterion is both less computationally expensive to compute \emph{and} to
optimize.

\citet{khosoussi2019reliable} established many of the first results for optimal
graph sparsification (i.e. measurement subset selection) in the setting of SLAM
(notably, they examine the somewhat broader problem of \emph{estimation over
  graphs}). They consider an approach based on the D-optimality criterion
(corresponding to the product of the Laplacian eigenvalues). The convex
relaxation they consider is perhaps the closest existing work in the SLAM
literature to ours. However, being based on an SDP relaxation, its runtime makes
its application impractical at the scale of many SLAM problems. They also
introduce the Greedy ESP algorithm, based on greedy submodular optimization,
which offers better runtime and, as they show, often provides solutions close in
quality to their convex relaxation. Our experimental results provide a
comparison with this approach, demonstrating that \MAC{} and Greedy ESP often
produce very similar results in practice, but \MAC{} can often be significantly
faster. The reasons for this speedup are two-fold. First, in each iteration Greedy ESP calculates the value of the objective that would be attained after adding \emph{each individual candidate edge} to the current solution set; this requires a total of $C$ objective evaluations (where $C$ is the size of the current candidate set), each of which entails a large-scale eigenvalue computation (which is the single most expensive computation in both \MAC{} and Greedy ESP, even when allowing for incremental matrix factorization of $\Laplacian$).  Consequently, Greedy ESP requires a total of $\mathcal{O}(KM)$ evaluations of the objective over the algorithm's execution.  On the other hand, \MAC{} only requires \emph{one} eigenvalue computation per iteration, and typically converges to high-quality solutions to the relaxation in $T \ll K$ iterations, therefore \emph{significantly} reducing the total number of eigenvalue computations required to obtain a solution. Second, \MAC{} also benefits from its use of the E-optimality objective as opposed to the D-optimality objective in Greedy ESP, since computing a single eigenvalue is typically \emph{much} faster than computing the Laplacian
determinant.\footnote{We also observe that since the E-optimality criterion reports the \emph{smallest} eigenvalue of the information matrix, while the D-optimality criterion used in \cite{khosoussi2019reliable} is the product of \emph{all} of the eigenvalues, maximizing the former can be interpreted as maximizing an efficiently-computable \emph{lower bound} on the latter.  This may (at least partially) account for the similar performance of the sparsified graphs returned by each of these methods.}

In recent work, \citet{nam2023spectral} presented an extension of \MAC{} which
does not require specifying an explicit budget constraint, but instead
regularizes the number of edges based on the largest eigenvalue of the adjacency
matrix. They similarly evaluate their approach in the setting of pose-graph
SLAM. Since their work is closely related to \MAC{}, it inherits many of the
computational benefits afforded by \MAC{}. The key insight to their approach is
that the largest eigenvalue of the adjacency matrix is a \emph{convex} and
increasing function of the edges. Therefore, its negation is a concave and
\emph{decreasing} function of the edges, making it straightforward to add this
term in our relaxation to implicitly select for subgraphs with fewer edges. They
also show that subgradients of the largest eigenvalue of the adjacency matrix
admit an elegant and simple construction that is almost identical to that of
Fiedler value subgradients. The difficulty in applying their approach lies in
the interpretation of the regularization parameter which trades off between
well-connected graphs and sparse graphs. Proper normalization of the competing
objectives is generally required, which in their work requires computing
eigenvalues of the Laplacian and adjacency matrix for the ``full'' graph
(containing all of the candidate edges). Since these matrices may be dense, this
computation can be expensive. We instead focus on formulations with explicit
budget constraints because they have a clear interpretation and they allow us to
avoid the expense of forming (or performing computations with) the Laplacian for
the full (potentially dense) graph. Nonetheless, in Section \ref{sec:conclusion}
we discuss potential extensions of our work in a similar spirit to the work of
\citet{nam2023spectral}, some of the computational considerations, and potential
paths forward.

Several methods have been proposed to reduce the \emph{number of states} which
need to be estimated in a SLAM problem (e.g. \cite{carlevarisbianco13iros,
  carlevaris2014conservative, johannsson12rssw, huang13ecmrb}), typically by
marginalizing out state variables. This procedure is usually followed by an edge
pruning operation to mitigate the unwanted increase in graph density. Previously
considered approaches rely on linearization of measurement models at a
particular state estimate in order to compute approximate marginals and perform
subsequent pruning. Consequently, little can be said concretely about the
quality of the statistical estimates obtained from the sparsified graph compared
to the original graph. In contrast, our approach does not require linearization,
and provides explicit performance guarantees on the graph algebraic connectivity
as compared to the globally optimal algebraic connectivity (which is itself
linked to both the \emph{best} and \emph{worst} case performance of estimators
applied to the SLAM problem).

\citet{tian2023spectral} applied spectral sparsifiers
\cite{spielman2011spectral} in the setting of multi-robot rotation averaging and
translation estimation. They focus on spectral sparsification as a means to
achieve communication efficiency during distributed optimization. That is to
say, their application of spectral sparsification is part of the distributed
estimation algorithm they design. In contrast, our application of \MAC{} to SLAM
relies on sparsification as a kind of preprocessing step that occurs
\emph{before} the maximum-likelihood estimation procedure, and in which edges
that are discarded are not recovered.

Many other applications in robotics exist for which the algebraic connectivity
or other spectral properties of graphs play a key role. For example,
\citet{somisetty2023optimal} consider as their application of interest the
problem of cooperative localization. \citet{Kim09oceans} consider planning
underwater inspection routes informed by the Cram\'er-Rao lower bound. OASIS
\cite{kaveti2023oasis} is an algorithm very similar to \MAC{} for determining
approximate E-optimal sensor arrangements. \citet{papalia2022prioritized} use
optimal design criteria in the development of a trajectory planning approach for
multi-robot systems equipped with range sensors. These spectral properties of
graphs appear quite commonly in problems of multi-agent formation control (see,
e.g., \cite{mesbahi_egerstedt2010}). Motivated by this rich problem space, we
suspect that the \MAC{} algorithm or some of the insights from its application
to SLAM may be relevant in new application areas.

\section{Background}
\label{sec:background}

This section introduces notation and relevant background for understanding both
the MAC algorithm itself and its application to graph-based SLAM problems.
Sections \ref{sec:lin-alg} and \ref{sec:graph-theory} give a brief overview of
the notation and concepts from linear algebra and graph theory that will be
important for our algorithm, most critically the Laplacian of a weighted graph
and its spectrum. Section \ref{sec:cvx-opt} introduces some preliminaries in
convex optimization that will be useful for the exposition of our algorithm in
Section \ref{sec:methods}. Section \ref{sec:slam} gives an overview of the
problems of pose-graph SLAM and rotation averaging, which are the robotics
applications of interest in this paper.

This section is intended to provide
background helpful for understanding \emph{why} the algebraic connectivity is a
natural choice of objective function for ``good'' graphs in SLAM problems, as
well as our experimental results in Section \ref{sec:experimental-results}, but
is not necessary for understanding the \MAC{} algorithm itself.

\subsection{Linear algebra}\label{sec:lin-alg}

For a real, symmetric, $n \times n$ matrix $S$, $\lambda_1(S), \ldots, \lambda_n(S)$
denote the (necessarily real) eigenvalues of $S$ in increasing order. $\PSD^{n}$ is
the set of $n \times n$ symmetric positive-semidefinite matrices. $\otimes$ denotes the
\emph{Kronecker product}. $I_n \in \R^{n \times n}$ is the identity matrix, and
$\mathds{1}_n \in \R^n$ is the all-ones vector; we may drop the $n$ subscript
when the dimension is clear from context. For a matrix $A$, $\ker(A)$ denotes
the \emph{kernel} (nullspace) of $A$. Given a set $V$ of vectors, the
\emph{span} $\Span(V)$ of $V$ is the set of all linear combinations of $v_i \in
V$. If $V$ contains a single vector $v$, then $\Span(v) = \{\alpha v \mid \alpha
\in \R\}$ is simply the set containing all scalar multiples of $v$.

\subsection{Graph theory}\label{sec:graph-theory}

A \emph{undirected graph} is a pair $\Graph = (\Nodes, \Edges)$ comprised of a
finite set $\Nodes$ of elements called \emph{nodes}, a set of unordered pairs
$\edge \in \Edges \subset \Nodes \times \Nodes$ called \emph{edges}. Similarly,
a \emph{directed graph} is a pair $\directed{\Graph} = (\Nodes, \dEdges)$ of
nodes and ordered pairs $\dedge \in \dEdges$ called \emph{directed edges}.

A \emph{weighted undirected graph} is a triple $\Graph = (\Nodes, \Edges,
\weight)$ comprised of a finite set of nodes $\Nodes$, a set of edges $\edge \in
\Edges \subset \Nodes \times \Nodes$, and a set of \emph{weights} $\weight_{ij}
\in \R$ in correspondence with each edge $\edge$.

The \emph{Laplacian matrix} associated with a weighted undirected graph (with $n
= |\Nodes|$) is a symmetric $n \times n$ matrix $\Laplacian \in \PSD^{n}$ with $i,j$-entries:
\begin{equation}
    \Laplacian_{ij} = \begin{cases} \sum_{e \in \incEdges(i)} \weight_e,& i = j, \\
      -\weight_{ij},& \edge \in \Edges, \\
      0,& \edge \notin \Edges. \end{cases}
    \label{eq:lap}
\end{equation}
  where $\incEdges(i)$ denotes the set of edges \emph{incident to} node $i$. The
  Laplacian of a graph has several well-known properties that we will use here.

  The Laplacian of a graph, denoted $\Laplacian(\Graph)$ (or simply $\Laplacian$
  when the corresponding graph is clear from context) can be written as a sum of
  the Laplacians of the subgraphs induced by each of its individual edges. A Laplacian is
  always positive-semidefinite, and the ``all ones'' vector $\mathds{1}$ of
  length $n$ is always in its kernel. We will write the eigenvalues of a
  Laplacian as $\lambda_1(\Laplacian) = 0 \leq \lambda_2(\Laplacian) \leq \dots
  \leq \lambda_n(\Laplacian)$.

  The second smallest eigenvalue of the Laplacian, $\lambda_2(\Laplacian)$, is
  the \emph{algebraic connectivity} or \emph{Fiedler value}. An eigenvector
  attaining this value is called a \emph{Fiedler vector}. The Fiedler value
  $\lambda_2(\Laplacian)$ may not be a \emph{simple} eigenvalue; there may be
  multiple eigenvectors with corresponding eigenvalues equal to
  $\lambda_2(\Laplacian)$. The algebraic connectivity is a non-decreasing
  function of the edges of $\Graph$ \cite{fiedler1973algebraic}: for two graphs
  $\Graph$ and $H$ with edge sets $\Edges_{\Graph} \subseteq \Edges_{H}$, we
  have:
  \begin{equation}\label{eq:alg-conn-monotonicity}
    \lambda_2(\Laplacian(\Graph)) \leq \lambda_2(\Laplacian(H)).
  \end{equation}
  A graph has positive algebraic connectivity $\lambda_2(\Laplacian) > 0$ if and
  only if it is connected, and more generally the number of connected components
  of a graph is equal to the number of zero eigenvalues of its Laplacian.

\subsection{Convex optimization}\label{sec:cvx-opt}

A function $f: \R^n \rightarrow \R$ is \emph{convex} if and only if
for all $x,y \in \R^n$ and $\theta \in [0,1]$:
\begin{equation}
  f(\theta x + (1 - \theta) y) \leq \theta f(x) + (1- \theta) f(y).
\end{equation}
The function $f$ is said to be \emph{concave} if $-f$ is convex \cite[Ch.
3]{boyd2004convex}. If $f$ is differentiable everywhere, then $f$ is convex if
and only if
\begin{equation}\label{eq:cvx-first-order-defn}
  f(y) \geq f(x) + \nabla f(x)\transpose (y - x),
\end{equation}
for all $x, y \in \R^n$. The geometric interpretation of eq.
\eqref{eq:cvx-first-order-defn} is that the hyperplane tangent to the graph of $f$ at $x$
must lie \emph{below} the graph of $f$ itself. The inequality \eqref{eq:cvx-first-order-defn}
holds in the opposite direction for \emph{concave} functions, and likewise the
geometric interpretation is that the hyperplane tangent to the graph of a concave function at
any point must lie \emph{above} the function's graph at all points.

If $f$ is convex but is \emph{not} differentiable at a point $x$, its
gradient is not defined, and therefore \eqref{eq:cvx-first-order-defn} does not
hold. Rather, there will be a \emph{set} of vectors $g \in \R^n$
that satisfy:
\begin{equation}\label{eq:subgradient}
  f(y) \geq f(x) + g\transpose (y - x).
\end{equation}
We call each vector $g$ for which \eqref{eq:subgradient} holds a
\emph{subgradient} of $f$ at $x$, and the set of all such vectors, denoted
$\partial f(x)$, the \emph{subdifferential} of $f$ at $x$ (cf.\ e.g.\
\cite[Sec. 23]{rockafellar2015convex}). Geometrically, each subgradient $g$ determines a \emph{supporting} hyperplane to the graph of $f$ at $x$.\footnote{Note that in contrast to the differentiable case, in which there is a \emph{unique} tangent hyperplane (determined by the gradient), a convex but nonsmooth function can have \emph{several} supporting hyperplanes at a given point $x$.  Indeed, a convex (but not necessarily smooth) function $f$ is differentiable at $x$ if and only if the subdifferential $\partial f(x) = \lbrace g \rbrace$ is a \emph{singleton}, in which case $\nabla f(x) = g$.} The equivalent expressions for concave functions are sometimes
called \emph{supergradients} and \emph{superdifferentials}, and similarly admit
an interpretation as a set of approximating hyperplanes that lie \emph{above} the graph of $f$. Since this work is concerned with optimization of
\emph{concave} functions, we will follow the common convention and slightly abuse notation by using ``$\partial f(x)$" to
refer to the \emph{superdifferential} of $f$ at $x$.

\subsection{Pose-graph SLAM and rotation averaging}\label{sec:slam}

Our primary application of interest in this paper is pose-graph SLAM. Pose-graph
SLAM is the problem of estimating $n$ unknown translations $t_i, \ldots, t_n \in
\R^d$ and rotations $\rot_i, \ldots, \rot_n \in \SO(d)$ given a subset of
measurements of their pairwise relative transforms $(\ntran_{ij}, \nrot_{ij})$.
This problem admits a natural graphical representation $\Graph \triangleq
(\Nodes, \dEdges)$ where the nodes $\Nodes$ correspond to latent (unknown) poses
$(\tran_i, \rot_i)$ and the edges $(i,j) \in \dEdges$ correspond to the noisy measurements. For each edge $\dedge \in \dEdges$, we assume that the corresponding measurement $(\ntran_{ij}, \nrot_{ij})$ is sampled according to:
\begin{subequations}
  \begin{equation}
    \nrot_{ij} = \trot_{ij}\rot_{ij}^\epsilon, \quad \rot_{ij}^{\epsilon} \sim \Langevin(I_d, \kappa_{ij})
  \end{equation}%
  \begin{equation}
    \ntran_{ij} = \ttran_{ij} + \tran_{ij}^\epsilon, \quad \tran_{ij}^{\epsilon} \sim \Gaussian(0, \transym_{ij}^{-1}I_d),
  \end{equation}%
  \label{eq:gen-model-pgo}
\end{subequations}%
where $(\ttran_{ij}, \trot_{ij})$ is the true relative transform between nodes
$i$ and $j$. Under this noise model, the maximum-likelihood estimation (MLE) problem for pose-graph SLAM 
takes the following form (cf. \cite[Problem 1]{rosen2019se}):
\begin{problem}[MLE for pose-graph SLAM]
  \label{se-mle}
  \begin{equation}
    \min_{\substack{{\tran_i \in \R^d} \\ {\rot_i \in \SO(d)}}} \sum_{(i,j) \in \dEdges} \kappa_{ij} \| \rot_j - \rot_i\nrot_{ij}\|^2_F + \transym_{ij}\|t_j - t_i - R_i\ntran_{ij}\|_2^2.
  \end{equation}%
\end{problem}%
\noindent Problem \ref{se-mle} also directly captures the problem of rotation averaging
under the Langevin noise model simply by taking all $\tau_{ij} = 0$
\cite{dellaert2020shonan}.

Prior work \cite{doherty2022performance, rosen2019se} showed that the algebraic
connectivity of the \emph{rotational weight graph} $\RotW = (\Nodes, \Edges,
\kappa)$ (i.e. the graph with nodes corresponding to robot poses and edge
weights equal to each $\kappa_{ij}$ in Problem \ref{se-mle}) controls the
\emph{worst-case} error of solutions to Problem \ref{se-mle}.\footnote{More precisely,
  the eigenvalue appearing in the previous results \cite{doherty2022performance,
    rosen2019se} is a kind of \emph{generalized} algebraic connectivity,
  obtained as an eigenvalue of a matrix comprised of the (latent) \emph{ground
    truth} problem data. The Fiedler eigenvalue of the rotational weight graph
  gives a lower bound on this quantity which can be computed \emph{without
    access to the ground truth} (see, e.g., \citet[Appendix C.3]{rosen2019se}).}
In the setting of SLAM, these results motivate the use of the algebraic
connectivity as a measure of graph ``quality'' for the purposes of
sparsification. That is to say, one way to define a ``good'' approximating pose
graph is as the sparse subgraph with the \emph{largest algebraic connectivity} among
all subgraphs with the desired number of edges, which is precisely how we apply
our algorithm in Section \ref{sec:experimental-results}.

\section{The MAC Algorithm}
\label{sec:methods}

\subsection{Algebraic connectivity maximization}

It will be convenient to partition the edges as $\Edges = \EO \cup \EC,\ \EO
\cap \EC = \emptyset$ into a \emph{fixed} set of edges $\EO$ and a set of $m$
\emph{candidate} edges $\EC$, and where $\LapRotWO$ and $\LapRotWC$ are the
Laplacians of the subgraphs induced by $\EO$ and $\EC$. In our applications to
SLAM, the subgraph induced by $\EO$ on $\Nodes$ will typically be constructed
from sequential odometric measurements (therefore, $|\EO| = n - 1$), but this is
not a requirement of our general approach.\footnote{In particular, to apply our
  approach we should select $\LapRotWO$ and $K$ to guarantee that the feasible
  set for Problem \ref{prob:max-aug-alg-conn} contains at least one tree. Then,
  it is clear that the optimization in Problem \ref{prob:max-aug-alg-conn} will
  always return a connected graph, since $\lambda_2(\LapRotW(\selection)) > 0$
  if and only if the corresponding graph is connected. Note that this condition
  is always easy to arrange: for example, we can start with $\LapRotWO$
  constructed from a tree of $\Graph$, as we do here, or (even more simply) take
  $\EO$ to be the empty set and simply take $K \geq n - 1$.} It will be helpful
in the subsequent presentation to ``overload'' the definition of $\LapRotW$.
Specifically, let $\LapRotW : [0,1]^m \rightarrow \PSD^{n}$ be the affine
map constructing the total graph Laplacian from a weighted combination of edges
in $\EC$:
\begin{equation}
\label{graph_laplacian_function_eq}
  \LapRotW(\selection) \triangleq \LapRotWO + \sum_{k = 1}^m \selection_{k}\LapRotWC_{k},
\end{equation}
where $\LapRotWC_k$ is the Laplacian of the subgraph induced by the weighted
edge $e_k = \{i_k, j_k\}$ of $\EC$. Our goal in this work will be to identify a
subset of $\Eopt \subseteq \EC$ of fixed size $|\Eopt| = K$ (equivalently, a
$K$-sparse binary vector $\selection$) that \emph{maximizes} the algebraic
connectivity $\lambda_2(\LapRotW(\selection))$. This corresponds to the
following optimization problem:
\begin{problem}[Algebraic connectivity maximization]
  \label{prob:max-aug-alg-conn}
  \begin{equation}
    \begin{gathered}
      \optValAC = \max_{\selection \in \{0,1\}^m} \lambda_{2}(\LapRotW(\selection)) \label{eq:connectivity-rot-only} \\
      \sum_{k=1}^m \selection_k = K.
    \end{gathered}
  \end{equation}
\end{problem}

Problem \ref{prob:max-aug-alg-conn} is a variant of the \emph{maximum algebraic
  connectivity augmentation problem}, which is NP-Hard \cite{mosk2008maximum}.
The difficulty of Problem \ref{prob:max-aug-alg-conn} stems, in particular, from
the integrality constraint on the elements of $\selection$. Consequently, our
general approach will be to solve a simpler problem obtained by relaxing the
integrality constraints of Problem \ref{prob:max-aug-alg-conn}, and, if
necessary, \emph{rounding} the solution to the relaxed problem to a solution in
the feasible set of Problem \ref{prob:max-aug-alg-conn}. In particular, we
consider the following \emph{Boolean relaxation} of Problem
\ref{prob:max-aug-alg-conn}:

\begin{problem}[Boolean relaxation of Problem \ref{prob:max-aug-alg-conn}]
  \begin{equation}\label{eq:relaxation}
    \begin{gathered}
      \max_{\selection \in [0,1]^m} \lambda_2(\LapRotW(\selection)) \\
      \mathds{1}\transpose \selection = K.
    \end{gathered}
  \end{equation}
  \label{prob:relaxation}
\end{problem}

Relaxing the integrality constraints of Problem \ref{prob:max-aug-alg-conn}
dramatically alters the difficulty of the problem. In particular, we know (see,
e.g. \cite{ghosh2006growing}):
\begin{lem}\label{lem:concavity}
  The function $\objectiveF(\selection) = \lambda_2(\LapRotW(\selection))$ is
  \emph{concave} on the set $\selection \in [0,1]^m,
  \mathds{1}\transpose\selection = K$.
\end{lem}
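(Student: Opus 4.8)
The plan is to combine a variational characterization of the Fiedler value with the fact that $\LapRotW(\selection)$ is \emph{affine} in $\selection$ by construction \eqref{graph_laplacian_function_eq}. The conceptual point to keep in mind is that, for a generic symmetric matrix $S$, the map $S \mapsto \lambda_2(S)$ is neither convex nor concave; what rescues us is the structural fact recorded in Section~\ref{sec:graph-theory} that every $\LapRotW(\selection)$ is positive-semidefinite and has $\mathds{1}$ in its kernel. Thus $0$ is always the \emph{smallest} eigenvalue, with known eigenvector $\mathds{1}$, and $\lambda_2$ is really the smallest eigenvalue of $\LapRotW(\selection)$ restricted to the \emph{fixed} subspace $\mathds{1}^{\perp}$. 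Concretely, I would establish the constrained Rayleigh-quotient identity
\begin{equation}\label{eq:fiedler-var}
  \objectiveF(\selection) = \lambda_2(\LapRotW(\selection)) = \min_{\substack{v \perp \mathds{1} \\ \|v\|_2 = 1}} v\transpose \LapRotW(\selection)\, v.
\end{equation}

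First I would prove \eqref{eq:fiedler-var}. Fix $\selection$ and let $u_1, \dots, u_n$ be an orthonormal eigenbasis of $\LapRotW(\selection)$ ordered by increasing eigenvalue; since $\mathds{1}$ lies in the eigenspace of the least eigenvalue $0$, I may take $u_1 = \mathds{1}/\sqrt{n}$, so that $u_2, \dots, u_n \perp \mathds{1}$. Any unit $v \perp \mathds{1}$ then expands as $v = \sum_{i \ge 2} c_i u_i$ with $\sum_{i \ge 2} c_i^2 = 1$, whence $v\transpose \LapRotW(\selection)\, v = \sum_{i \ge 2} c_i^2\, \lambda_i(\LapRotW(\selection)) \ge \lambda_2(\LapRotW(\selection))$; taking $v = u_2$ attains this bound, which proves \eqref{eq:fiedler-var}.

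Second, I would observe that for each fixed feasible $v$ the map $\selection \mapsto v\transpose \LapRotW(\selection)\, v$ is \emph{affine} in $\selection$: substituting \eqref{graph_laplacian_function_eq} gives $v\transpose \LapRotWO v + \sum_{k=1}^m \selection_k\, v\transpose \LapRotWC_k v$, a constant plus a linear function of $\selection$. An affine function is in particular concave, so the right-hand side of \eqref{eq:fiedler-var} is a \emph{pointwise infimum} of a family (indexed by the feasible $v$) of concave functions of $\selection$, and is therefore concave on all of $\R^m$. Now on $[0,1]^m$ every coefficient $\selection_k$ is nonnegative, so $\LapRotW(\selection)$ is a sum of the positive-semidefinite $\LapRotWO$ and nonnegatively scaled edge Laplacians $\LapRotWC_k$, hence positive-semidefinite; consequently \eqref{eq:fiedler-var} is valid there and $\objectiveF$ \emph{agrees} on $[0,1]^m$ with this concave infimum. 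As $[0,1]^m$ is convex, $\objectiveF$ is concave on it, and restricting further to the affine slice $\{\mathds{1}\transpose \selection = K\}$ preserves concavity, giving the claim.

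The step requiring the most care is the reduction \eqref{eq:fiedler-var}, and specifically the choice $u_1 = \mathds{1}/\sqrt{n}$: this is exactly where the \emph{known} kernel vector is essential. For a generic second eigenvalue there is no fixed subspace over which $\lambda_2$ is a minimum, and the min-max (Courant--Fischer) representation one would otherwise be forced to use is \emph{not} concave in the matrix argument. The eigenbasis argument above is uniform in the connectivity of the graph---when $\LapRotW(\selection)$ is disconnected one has $\lambda_2 = 0$ and the $0$-eigenspace is multidimensional, but $\mathds{1}/\sqrt{n}$ can still be completed to an ordered orthonormal eigenbasis, so the same expansion applies verbatim---so no separate treatment of the degenerate case is needed.
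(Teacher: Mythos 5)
Your proof is correct, but it follows a different route from the paper's. The paper establishes concavity (inside the appendix proof of Theorem \ref{thm:supergradient}, and by citation to \cite{ghosh2006growing}) by writing $\lambda_2(\LapRotW(\selection)) = -\sigma_2(-\LapRotW(\selection))$, where $\sigma_2$ is the sum of the two largest eigenvalues: since $\lambda_1(\LapRotW(\selection)) \equiv 0$ for a graph Laplacian, $\sigma_2(-\LapRotW(\selection)) = -\lambda_2(\LapRotW(\selection))$, and the composition of the convex matrix function $\sigma_2$ with the affine map $\LapRotW$ is convex, so $\objectiveF$ is concave. You instead prove the constrained Rayleigh-quotient identity $\objectiveF(\selection) = \min\{v\transpose \LapRotW(\selection) v : v \perp \mathds{1},\ \|v\|_2 = 1\}$ and observe that this is a pointwise infimum of affine functions of $\selection$. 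Both arguments pivot on the same structural fact---$\mathds{1}$ is a common kernel vector, so the smallest eigenvalue is pinned at zero---but yours is the more elementary and self-contained of the two (it needs only the spectral theorem and the fact that an infimum of affine functions is concave; it is essentially the argument in the cited reference \cite{ghosh2006growing}), and you are right that the delicate step is fixing $u_1 = \mathds{1}/\sqrt{n}$, including the disconnected case where the zero eigenspace is multidimensional. One bonus worth noting: your representation gives Theorem \ref{thm:supergradient} nearly for free, since the affine function active at the minimizing $v = q_2$ is a majorant of $\objectiveF$ touching it at $\selection$, and its gradient is exactly the supergradient formula \eqref{eq:supergradient}; the paper instead derives that theorem from Overton's characterization of the subdifferential of sums of maximal eigenvalues \cite{overton1993optimality}, which is heavier machinery but yields the \emph{entire} superdifferential rather than a single supergradient.
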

Consequently, solving Problem \ref{prob:relaxation} amounts to maximizing
a concave function over a convex set; this is in fact a convex optimization
problem (one can see this by simply considering minimization of the objective
$-\objectiveF(\selection)$) and hence efficiently \emph{globally solvable} (cf.\ 
e.g.\ \cite{boyd2004convex, bertsekas2016nonlinear}). However, since a solution to Problem
\ref{prob:relaxation} need not be feasible for the original problem (Problem \ref{prob:max-aug-alg-conn}), we must subsequently
\emph{round} solutions of the relaxation \eqref{eq:relaxation} by projecting them onto
the feasible set of Problem \ref{prob:max-aug-alg-conn}.

\subsection{Solving the relaxation}

\begin{small}
  \begin{algorithm}[t]
    \caption{Frank-Wolfe Method\label{alg:frank-wolfe-general}}
    \begin{algorithmic}[1]
      \Input An initial iterate $x^{(0)} \in \mathcal{F}$ in a compact convex feasible
      set $\mathcal{F}$ \newline
      A concave function $f: \mathcal{F} \rightarrow \R$ \newline
      A (super)gradient function $\supergradient: x \mapsto v,\ v \in
      \partial f(x)$
      \Output An approximate solution to $\max_{x \in \mathcal{F}} f(x)$.
      \Function{FrankWolfe}{$x^{(0)}$, $f$, $g$}
      \For {$t = 0, \dotsc, T-1$}
      \State $s^{(t)} \leftarrow \argmax_{s \in \mathcal{F}} s\transpose \supergradient(x^{(t)}) $
      \State $\alpha \leftarrow 2 / (2 + t)$
      \Comment{Compute step size}
      \State $x^{(t+1)} \leftarrow x^{(t)}+ \alpha \left( s^{(t)} - x^{(t)}\right)$
      \EndFor
      \State \Return $x^{(T)}$
      \EndFunction
    \end{algorithmic}
  \end{algorithm}
\end{small}

\begin{small}
  \begin{algorithm}[t]
    \caption{MAC Algorithm \label{alg:mac}}
    \begin{algorithmic}[1]
      \Input An initial iterate $\selection^{(0)} \in [0,1]^m,
      \mathds{1}\transpose \selection^{(0)} = K$
      \Output An approximate solution to Problem \ref{prob:max-aug-alg-conn}
      \Function{MAC}{$\selection^{(0)}$}
      \State Define $\objectiveF: \selection \mapsto \lambda_2(\Laplacian(\selection))$
      \State Let $q_2(\selection)$ be any normalized eigenvector of
      $\Laplacian(\selection)$ with corresponding eigenvalue $\lambda_2(\Laplacian(\selection))$
      \State Define $\supergradient(x)$ with $\supergradient_k(\selection) =
      q_2(\selection)\transpose\LapRotWC_kq_2(\selection)$\Comment{Eq. \eqref{eq:supergradient}}
      \State $\selection^{(T)} \leftarrow $\textsc{FrankWolfe}$(\selection^{(0)},
      \objectiveF, \supergradient)$
      \Comment{Solve Problem \ref{prob:relaxation}}
      \State \Return $\rounded{\selection^{(T)}}$
      \Comment{Round solution (Sec. \ref{sec:rounding})}
      \EndFunction
    \end{algorithmic}
  \end{algorithm}
\end{small}

There are several methods which could, in principle, be used to solve the
relaxation in Problem \ref{prob:relaxation} (see Sec. \ref{sec:related-work}).
For example, Ghosh and Boyd \cite{ghosh2006growing} consider solving an
equivalent semidefinite program. This approach has the advantage of fast
convergence (in terms of the number of iterations required to compute an optimal
solution), but can nonetheless be slow for the large problem instances ($m >
1000$) typically encountered in the SLAM setting. Instead, our algorithm for
\emph{maximizing algebraic connectivity} (\MAC{}), summarized in Algorithm
\ref{alg:mac}, employs an inexpensive subgradient (more precisely,
\emph{supergradient}) approach to solve the relaxation in Problem
\ref{prob:relaxation}, then rounds its solution to an element of the original
feasible set in Problem \ref{prob:max-aug-alg-conn} (see Section \ref{sec:rounding}).

In particular, \MAC{} uses the \emph{Frank-Wolfe method} (also known as the
conditional gradient method), a classical approach for solving convex
optimization problems of the form in Problem \ref{prob:relaxation} (see, e.g.,
\cite[Sec. 2.2]{bertsekas2016nonlinear}). We use a variant of the Frank-Wolfe
method for maximization of concave functions which are not uniquely
differentiable everywhere, simply replacing gradients with supergradients. The
Frank-Wolfe method adapted to this setting is summarized in Algorithm
\ref{alg:frank-wolfe-general}. In words, at each iteration, the Frank-Wolfe
method requires (1) linearizing the objective $f$ at a particular point
$\selection$, (2) maximizing the linearized objective over the (convex) feasible
set, and (3) taking a step in the direction of the solution to the linearized
problem.

The Frank-Wolfe method is particularly advantageous in this setting since the
feasible set for Problem \ref{prob:relaxation} is the intersection of the
hypercube with the linear subspace determined by $\mathds{1}\transpose
\selection = K$ (a linear equality constraint). Consequently this problem
amounts to solving a linear program, which can be done easily (and in fact, as
we will show, admits a simple \emph{closed-form} solution). Formally, the
\emph{direction-finding subproblem} is defined as follows:
\begin{problem}[Direction-finding subproblem]\label{prob:dir-subproblem}
  Fix an iterate $\selection \in [0,1]^m,\ \mathds{1}\transpose \selection = K$
  and let $\supergradient \in \R^m$ be any supergradient of
  $\lambda_2(\Laplacian(\selection))$ at $\selection$. The direction-finding
  subproblem is the following linear program:
  \begin{equation}
    \begin{gathered}
      \max_{s \in [0,1]^m} \supergradient\transpose s, \\
      \mathds{1}\transpose  s = K.
    \end{gathered}
  \end{equation}
\end{problem}
In order to form the linearized objective in Problem \ref{prob:dir-subproblem}
we require a supergradient of the objective function. It turns out, we can
always recover a supergradient of $\lambda_2(\Laplacian(\selection))$ in terms
of a Fiedler vector of $\Laplacian(\selection)$. Specifically, we have the
following theorem (which we prove in the Appendix \ref{app:gradients}):

\begin{thm}[Supergradients of $\lambda_2(\LapRotW(\selection))$]\label{thm:supergradient}
  Let $q_2 \perp \mathds{1}$ be a normalized eigenvector of $\lambda_2(\LapRotW(\selection))$. Then
  the vector $\supergradient \in \R^m$ whose $k$th element is defined by:
\begin{equation}\label{eq:supergradient}
    \supergradient_k = q_2\transpose\LapRotWC_kq_2,
\end{equation}
for all $1 \le k \le m$ is a \emph{supergradient} of $\objectiveF$ at $\selection$. Equivalently:
\begin{equation}\label{eq:supergradient-alt}
  \supergradient = (I_m \otimes q_2) \transpose \begin{pmatrix} \LapRotWC_1 \\ \vdots \\ \LapRotWC_m \end{pmatrix} q_2.
\end{equation}
\end{thm}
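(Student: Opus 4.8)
The plan is to verify directly that $\supergradient$ satisfies the defining inequality for a supergradient of the concave function $\objectiveF$ at $\selection$, namely
\begin{equation*}
  \objectiveF(y) \le \objectiveF(\selection) + \supergradient\transpose(y - \selection)
  \quad \text{for every feasible } y.
\end{equation*}
The one substantive ingredient is the Courant--Fischer variational characterization of $\lambda_2$, specialized to graph Laplacians; once it is in place the rest is bookkeeping with the affine structure of $\LapRotW$.

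The key observation is that \emph{every} weighted graph Laplacian has the all-ones vector $\mathds{1}$ as an eigenvector associated with its smallest eigenvalue $\lambda_1 = 0$. Consequently, for any $y$ in the domain the min-max theorem collapses (via deflation against the unit eigenvector $\mathds{1}/\sqrt{n}$) to
\begin{equation*}
  \lambda_2(\LapRotW(y)) = \min_{\substack{v \perp \mathds{1} \\ \|v\| = 1}} v\transpose \LapRotW(y)\, v .
\end{equation*}
Since by hypothesis the fixed Fiedler vector $q_2$ satisfies $q_2 \perp \mathds{1}$ and $\|q_2\| = 1$, it is admissible in this minimization, so the bound
\begin{equation*}
  \lambda_2(\LapRotW(y)) \le q_2\transpose \LapRotW(y)\, q_2
\end{equation*}
holds for every $y$. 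This is the only inequality in the proof; everything that follows is an identity.

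Next I would exploit the affine structure of $\LapRotW$ in \eqref{graph_laplacian_function_eq}. Expanding the Rayleigh quotient gives
\begin{equation*}
  q_2\transpose \LapRotW(y)\, q_2
  = q_2\transpose \LapRotWO q_2 + \sum_{k=1}^m y_k\, q_2\transpose \LapRotWC_k q_2 ,
\end{equation*}
an affine function of $y$. Evaluating at $y = \selection$ and using that $q_2$ is precisely the eigenvector attaining $\lambda_2$ there yields the identity $q_2\transpose \LapRotW(\selection)\, q_2 = \lambda_2(\LapRotW(\selection)) = \objectiveF(\selection)$. Subtracting, the constant term $q_2\transpose \LapRotWO q_2$ cancels and the remainder is exactly linear in $y - \selection$ with coefficients $q_2\transpose \LapRotWC_k q_2 = \supergradient_k$; combining this with the upper bound above delivers the supergradient inequality. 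The alternate Kronecker expression \eqref{eq:supergradient-alt} is then merely a vectorized restatement, verified by checking that the $k$th row of $(I_m \otimes q_2)\transpose$ extracts $q_2\transpose \LapRotWC_k$.

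I do not anticipate a substantial obstacle here: the argument is essentially two lines once the variational characterization is invoked. The only points requiring care are conceptual rather than technical. First, I must use the \emph{concave} (reversed) form of the supergradient inequality. Second, I must confirm $q_2 \perp \mathds{1}$ so that $q_2$ is genuinely feasible in the Courant--Fischer minimization. It is worth emphasizing that the argument never assumes $\lambda_2$ is simple: \emph{any} unit Fiedler vector orthogonal to $\mathds{1}$ produces a valid supergradient, which is exactly why $\partial \objectiveF(\selection)$ is set-valued (and $\objectiveF$ is nonsmooth) when $\lambda_2$ has multiplicity greater than one. As a byproduct, exhibiting $\lambda_2(\LapRotW(\cdot))$ as a pointwise minimum of affine functions simultaneously reproves the concavity asserted in Lemma \ref{lem:concavity}.
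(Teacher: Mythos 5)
Your proof is correct, and it takes a genuinely different and more elementary route than the paper. The paper obtains \eqref{eq:supergradient} from heavy convex-analytic machinery: it writes $\eta(\selection) = \sigma_2(-\LapRotW(\selection))$, invokes Overton and Womersley's characterization of the subdifferential $\partial\sigma_\kappa$ of a sum of largest eigenvalues (Theorem \ref{subdifferential_of_sum_of_maximum_eigenvalues_thm}), applies the subdifferential chain rule through the affine map $\LapRotW$, and then splits into two cases ($\lambda_2 > 0$, where the top eigenvalue of $-\LapRotW(\selection)$ is simple, and $\lambda_2 = 0$, where it is not) to exhibit $\frac{1}{n}\mathds{1}\mathds{1}\transpose + q_2 q_2\transpose$ as an element of $\partial\sigma_2(-\LapRotW(\selection))$. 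Your argument instead uses only the deflated Courant--Fischer characterization $\lambda_2(\LapRotW(y)) = \min_{v \perp \mathds{1},\, \|v\|=1} v\transpose \LapRotW(y)\, v$, valid because $\mathds{1}$ is an eigenvector for the smallest eigenvalue $0$ of $\LapRotW(y)$ --- note this is where you implicitly use $y \in [0,1]^m$, since nonnegative edge weights are what guarantee $\LapRotW(y) \succeq 0$ and hence that $0$ is the \emph{smallest} eigenvalue; the deflation identity can fail for indefinite matrices, so it is worth stating this restriction explicitly. With that in place, $\objectiveF$ is exhibited as a pointwise minimum of functions affine in $y$, the minimum at $\selection$ is attained at the admissible vector $q_2$, and the supergradient inequality is the standard supporting-hyperplane property of such a minimum --- uniform in the multiplicity of $\lambda_2$, with no case analysis, and yielding the concavity of Lemma \ref{lem:concavity} as a byproduct rather than as a separate consequence of the convexity of $\sigma_2$. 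The trade-off is scope: the paper's route characterizes the \emph{entire} subdifferential $\partial\sigma_2(-\LapRotW(\selection))$, from which one could in principle extract all supergradients of $\objectiveF$ (relevant, e.g., for optimality certification when $\lambda_2$ is non-simple), whereas your argument certifies exactly the one supergradient the algorithm needs --- which is all Theorem \ref{thm:supergradient} claims --- at a fraction of the technical cost.
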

Therefore, supergradient computation can be performed by simply recovering an
eigenvector of $\LapRotW(\selection)$ corresponding to
$\lambda_2(\LapRotW(\selection))$.

Problem \ref{prob:dir-subproblem} is a linear program, for which several efficient numerical
solution techniques exist \cite{bertsekas2016nonlinear}. However, in our case,
Problem \ref{prob:dir-subproblem} turns out to admit a simple, \emph{closed-form} solution
$s^*$ (which we prove in Appendix \ref{app:dir-subproblem}):
\begin{thm}[A closed-form solution to Problem \ref{prob:dir-subproblem}]\label{thm:dir-subproblem}
  Let $\mathcal{S}^*,\ |\mathcal{S}^*| = K$ be the set containing the indices of
  the $K$ \emph{largest} elements of $\supergradient(\selection)$, breaking
  ties arbitrarily where necessary. The vector $s^* \in \R^n$ with element $k$
  given by:
  \begin{equation}\label{eq:dir-subproblem-opt}
    s^*_k = \begin{cases} 1, &  k \in \mathcal{S}^*, \\
      0, &  \text{otherwise,} \end{cases}
  \end{equation}
  is a maximizer for Problem \ref{prob:dir-subproblem}.
\end{thm}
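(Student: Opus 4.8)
The plan is to recognize Problem \ref{prob:dir-subproblem} as a linear program over the \emph{hypersimplex}
$\mathcal{F} = \{s \in [0,1]^m : \mathds{1}\transpose s = K\}$,
and to verify directly that the proposed $s^*$ attains the maximum by a simple exchange (rearrangement) argument. One could instead invoke the fact that a linear objective is maximized at a vertex of $\mathcal{F}$, together with the characterization of the vertices of the hypersimplex as precisely the $0/1$ vectors with exactly $K$ ones; but I prefer a self-contained argument that compares $s^*$ against an \emph{arbitrary} feasible point, since it makes the role of the ordering of $\supergradient$ transparent and avoids importing polytope-vertex machinery.

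First I would check feasibility: $s^*$ has exactly $K$ entries equal to $1$ (those indexed by $\mathcal{S}^*$) and the rest $0$, so $s^* \in \{0,1\}^m \subset [0,1]^m$ and $\mathds{1}\transpose s^* = K$. Next, fix any feasible $s \in \mathcal{F}$ and consider the gap $\supergradient\transpose s^* - \supergradient\transpose s = \sum_k \supergradient_k (s^*_k - s_k)$, which I would split over $k \in \mathcal{S}^*$ and $k \notin \mathcal{S}^*$. On $\mathcal{S}^*$ we have $s^*_k = 1 \ge s_k$, so $s^*_k - s_k \ge 0$; off $\mathcal{S}^*$ we have $s^*_k = 0 \le s_k$, so $s^*_k - s_k \le 0$. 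The key bookkeeping step is mass conservation: since $\mathds{1}\transpose s^* = \mathds{1}\transpose s = K$, the total weight shifted off $\mathcal{S}^*$ equals the total weight shifted onto its complement, i.e. $\delta := \sum_{k \in \mathcal{S}^*}(s^*_k - s_k) = \sum_{k \notin \mathcal{S}^*}(s_k - s^*_k) \ge 0$.

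The argument then closes by the defining property of $\mathcal{S}^*$. Writing $\underline{g} := \min_{k \in \mathcal{S}^*}\supergradient_k$ and $\overline{g} := \max_{k \notin \mathcal{S}^*}\supergradient_k$, the choice of the $K$ largest entries gives $\underline{g} \ge \overline{g}$. Bounding each term by its extremal coefficient (using the established signs) yields $\supergradient\transpose s^* - \supergradient\transpose s \ge \underline{g}\,\delta - \overline{g}\,\delta = (\underline{g} - \overline{g})\,\delta \ge 0$, so $s^*$ is a maximizer. I do not expect a genuine obstacle here: the only subtlety is the treatment of ties among the entries of $\supergradient$, which is exactly why the statement breaks ties arbitrarily. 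Under ties one has $\underline{g} = \overline{g}$ on the boundary indices, so the maximizer need not be unique, but the displayed $s^*$ remains optimal for \emph{any} tie-breaking. If a certificate-style proof were preferred, the same conclusion follows from LP duality by exhibiting dual multipliers (a scalar for the equality constraint and nonnegative multipliers for the box constraints) satisfying complementary slackness at $s^*$; the exchange argument above is essentially an explicit, primal-only version of that certificate.
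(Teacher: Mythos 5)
Your proof is correct, and it takes a genuinely more self-contained route than the paper's. The paper's proof expands $\supergradient\transpose s = \sum_{k} s_k\, q_2\transpose \LapRotWC_k q_2$ using the explicit supergradient formula from Theorem \ref{thm:supergradient}, observes that $\supergradient_k \geq 0$ because each edge Laplacian satisfies $\LapRotWC_k \succeq 0$, and then asserts that optimality of the top-$K$ selection ``follows directly'' — i.e., it leans on the special structure of this particular supergradient and leaves the combinatorial step implicit. Your exchange argument is exactly the missing rigor for that final step: by comparing $s^*$ against an \emph{arbitrary} feasible point, splitting over $\mathcal{S}^*$ and its complement, using mass conservation $\delta = \sum_{k \in \mathcal{S}^*}(s^*_k - s_k) = \sum_{k \notin \mathcal{S}^*}(s_k - s^*_k) \geq 0$, and bounding via $\underline{g} \geq \overline{g}$, you never invoke the sign or provenance of $\supergradient$ at all. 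This buys two things the paper's argument does not: the theorem is established for \emph{any} linear objective over the hypersimplex (relevant, e.g., for variants like the adjacency-regularized extension where coefficient signs are not guaranteed), and the tie-breaking clause in the statement is explicitly justified rather than tacit (under ties $\underline{g} = \overline{g}$ on boundary indices, so the maximizer is non-unique but $s^*$ remains optimal for any tie-break). Conversely, the paper's version is shorter and emphasizes a structurally meaningful fact (nonnegativity of Fiedler-value supergradients), though that fact is not actually needed for optimality under the equality constraint $\mathds{1}\transpose s = K$. Your closing remarks — that vertices of the hypersimplex are exactly the $K$-sparse binary vectors, and that dual multipliers with complementary slackness give a certificate-style alternative — are both accurate; the exchange argument is indeed the primal shadow of that dual certificate. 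One trivial edge case worth a parenthetical if you write this up: when $K = m$ the complement of $\mathcal{S}^*$ is empty, but then $\delta = 0$ and the inequality holds vacuously.
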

In this work, we use a simple decaying step size $\alpha$ to update $\selection$
in each iteration. While in principle, we could instead use a line search method
\cite[Sec. 2.2]{bertsekas2016nonlinear}), this would potentially require many
evaluations of $\objectiveF(\selection)$ within each iteration. Since every
evaluation of $\objectiveF(\selection)$ requires an eigenvalue computation, this
can become a computational burden for large problems.

In general, the Frank-Wolfe algorithm offers \emph{sublinear} (i.e.
$\mathcal{O}(1/T)$ after $T$ iterations) convergence to the globally optimal
solution in the worst case \cite{dunn1978conditional}. However, in this context
it has several advantages over alternative approaches. First, we can bound the
sparsity of a solution after $T$ iterations. In particular, we know that the
solution after $T$ iterations has \emph{at most} $KT$ nonzero entries. Second,
the gradient computation requires only a single computation of the minimal $2$
dimensional eigenspace of an $n \times n$ matrix. This can be performed quickly
using a variety of methods (e.g. the preconditioned Lanczos method or more
specialized procedures, like the TRACEMIN-Fiedler algorithm
\cite{manguoglu2010tracemin,sameh1982trace}, which is what we use in our
implementation). Finally, as we showed, the direction-finding subproblem in
Problem \ref{prob:dir-subproblem} admits a simple \emph{closed-form solution}
(as opposed to a projected gradient method which requires projection onto an
$\ell_1$-ball). In consequence, despite the fact that gradient-based methods may
require many iterations to converge to \emph{globally optimal} solutions,
high-quality approximate solutions can be computed quickly at the scale necessary
for SLAM problems. As we will show in Section \ref{sec:guarantees}, our approach
admits \emph{post hoc} suboptimality guarantees even in the event that we
terminate optimization prematurely (e.g. when a \emph{fast} but potentially
suboptimal solution is required). Critically, these suboptimality guarantees
ensure the quality of the solutions of our approach not only with respect to the
relaxation, but also with respect to the \emph{original problem}.

\subsection{Rounding the solution}\label{sec:rounding}

In the event that the optimal solution to the relaxed problem is integral, we
ensure that we have \emph{also} obtained an optimal solution to the original
problem. However, this need not be the case in general. In the (typical) case
where integrality does not hold, we \emph{project} the solution to the relaxed
problem onto the original constraint set. Any mapping from the feasible
set of the Boolean relaxation to the original feasible set can be used for
rounding. Formally, let:
\begin{equation}\label{eq:generic-rounding}
  \projsym: \left\{ x \in [0,1]^m \mid \mathds{1}\transpose x = K \right\} \rightarrow \left\{ \hat{x} \in \{0,1\}^m \mid \mathds{1}\transpose \hat{x} = K \right\}
\end{equation}
be our generic rounding procedure. In this paper, we consider two possible
instantiations of $\projsym$: a deterministic ``nearest neighbor'' rounding
procedure and the randomized selection algorithm of \citet{madow1949theory}.

\textbf{Nearest-neighbor rounding:} The simplest approach to obtain an
integral solution from a solution $\selection$ to the relaxed problem is to simply round the largest $K$ components of
$s$ to 1, and set all other components to zero:
\begin{equation}\label{eq:rounding}
  \nearestRounded{\selection}_k \triangleq \begin{cases} 1, & \text{if $\selection_k$ is in the largest $K$ elements of $\selection$}, \\
  0, & \text{otherwise}. \end{cases}
\end{equation}
It is straightforward to verify that \eqref{eq:rounding} gives the \emph{nearest
  binary vector} (e.g. in the $\ell_1$ and $\ell_2$ sense) to the relaxed
solution $\selection$ subject to the budget constraint.

\textbf{Randomized rounding:} An alternative approach is to sample a
\emph{random} element of the feasible set, informed by the value of the
relaxed solution. For example, \citet{khosoussi2019reliable} consider constructing an estimate $\hat{\selection} \in \lbrace 0, 1 \rbrace^m$ by sampling it from the distribution $p_{\selection}$ over $m$-dimensional binary vectors defined by:
\begin{equation}
\label{iid_Bernoulli_sampling}
\begin{gathered}
p_{\selection} \colon \lbrace 0,1 \rbrace^m \to [0,1] \\
p_{\selection}(b) = \prod_{k=1}^m \selection_k^{b_k} (1 - \selection_k)^{1 - b_k}.
\end{gathered}
\end{equation}
In words, this is the distribution that \emph{independently} samples each element $b_k \in \lbrace 0, 1 \rbrace$  of $b$ according to $b_k \sim \mathrm{Ber}(\selection_k)$, where $\mathrm{Ber}(\selection_k)$ is the Bernoulli distribution parameterized by the $k$th element $\selection_k \in [0,1]$ of the relaxed solution $\selection$.

While elegant in its simplicity, the consequence of employing the elementwise independent sampling model \eqref{iid_Bernoulli_sampling}
is that we may generate an estimate $\hat{\selection}$ that does
not satisfy the budget constraints of Problem \ref{prob:max-aug-alg-conn}. If we sample fewer than $K$
edges, then the monotonicity of $\lambda_2$ implies we could have obtained a
better solution simply by taking more edges (cf.\ eq.\
\eqref{eq:alg-conn-monotonicity}). On the other hand, we may also sample more
than $K$ edges, violating the budget constraints of the original problem.

\begin{small}
  \begin{algorithm}[t]
    \caption{Madow's Sampling Procedure \label{alg:madow}}
    \begin{algorithmic}[1]
      \Input $\selection \in [0,1]^m, \sum_k \selection_k = K$
      \Output A random sample $\hat{\selection} \sim p_x(\cdot \mid \mathds{1}_m\transpose b = K)$
      \Function{RoundMadow}{$\selection$}
      \State Define $\phi_0 \leftarrow 0$, $\phi_k \leftarrow \phi_{k-1} + \selection_k,\ k = 1, \ldots, m$
      \State Sample $U$ uniformly from the interval $[0,1]$
       \For {$i = 0, \dotsc, K-1$}
      \State Set $\hat{\selection}_k = 1$ if $\phi_{k-1} \leq U + i < \phi_k $
      \EndFor
      \State \Return $\hat{\selection}$
      \EndFunction
    \end{algorithmic}
  \end{algorithm}
\end{small}

To address this, we employ the systematic sampling scheme of
\citet{madow1949theory}, summarized in Algorithm \ref{alg:madow}.  In brief, this procedure generates samples from the \emph{conditional} distribution $p_x(b \mid \mathds{1}_m\transpose b = K)$ obtained from $p_x$ by requiring the generated samples $b$ to have \emph{exactly} $K$ nonzero elements.
Madow's sampling procedure has been used in other work, e.g.
\cite{paria2021leadcache}, in similar randomized rounding applications. In our
experimental results (Sec. \ref{sec:experimental-results}) we demonstrate that
this inexpensive rounding procedure gives a fairly dramatic improvement in
solution quality on graphs obtained from SLAM datasets over the ``nearest
neighbors'' approach. 

It is also important to note that there is no restriction on running Madow's procedure \emph{multiple} times. In our applications, we
run the procedure once for efficiency reasons (each computation of the objective
value amounts to another eigenvalue computation, which is as expensive as an
additional iteration of Frank-Wolfe). However, it would be perfectly reasonable
to run Madow's procedure multiple times, computing the value of the objective
for each sample, and returning only the best.

\subsection{Post-hoc suboptimality guarantees}\label{sec:guarantees}

Algorithm \ref{alg:mac} admits several \emph{post hoc} suboptimality
guarantees. Let $\optValAC$ be the optimal value of the original nonconvex
maximization in Problem \ref{prob:max-aug-alg-conn}. Since Problem
\ref{prob:relaxation} is a relaxation of Problem \ref{prob:max-aug-alg-conn}, in
the event that optimality attains for a vector $\selection^*$, we know:
\begin{equation}
  \objectiveF(\rounded{\selection^*}) \leq \optValAC \leq \objectiveF(\selection^*),
\end{equation}
where we use the generic rounding operation $\rounded{\cdot}$ since the result
holds independent of the choice of rounding scheme used. In turn, the
suboptimality of a rounded solution $\rounded{\selection^*}$ is bounded as
follows:
\begin{equation}
  \optValAC - \objectiveF(\rounded{\selection^*}) \leq \objectiveF(\selection^*) - \objectiveF(\rounded{\selection^*}).
\end{equation}
Consequently, in the event that $\objectiveF(\selection^*) -
\objectiveF(\rounded{\selection^*}) = 0$, we know that $\rounded{\selection^*}$
\emph{must } correspond to an optimal solution to Problem
\ref{prob:max-aug-alg-conn}.

The above guarantees apply in the event that we obtain a \emph{maximizer}
$\selection^*$ of Problem \ref{prob:relaxation}. This would seem to pose an
issue if we aim to terminate optimization before we obtain a verifiable,
globally optimal solution to Problem \ref{prob:relaxation} (e.g. in the presence
of real-time constraints). Since these solutions are not necessarily globally
optimal in the relaxation, we do not know if their objective value is larger or
smaller than the optimal solution to Problem \ref{prob:max-aug-alg-conn}.
However, we can in fact obtain per-instance suboptimality guarantees of the same
kind for \emph{any} feasible point $\selection \in [0,1]^m, \mathds{1}\transpose
\selection = K$ through the \emph{dual} of our relaxation (cf. \citet[Appendix
D]{lacoste2013block}). Here, we give a derivation of the dual upper bound which
uses only the concavity of $\objectiveF$.

Recall that since $\objectiveF$ is concave, for any $x, y \in [0,1]^m,\
\mathds{1}\transpose x = \mathds{1}\transpose y = K$ we have:
\begin{equation}
  \objectiveF(y) \leq \objectiveF(x) + \supergradient \transpose (y - x),
\end{equation}
where $g \in \partial \objectiveF(x)$ is any supergradient of $\objectiveF$ at $x$.
Consider then the following upper bound on the optimal value
$\objectiveF(\selection^*)$ computed at the feasible point $\selection$:
\begin{equation}\label{eq:dual-bound-deriv}
  \begin{aligned}
    \objectiveF(\selection^*) &\leq \objectiveF(\selection) + \supergradient \transpose (\selection^* - \selection) \\
    &\leq \max_{s \in [0,1]^m, \mathds{1}\transpose s = K} \objectiveF(\selection) + \supergradient\transpose(s - \selection) \\
    &= \objectiveF(\selection) - 
    \supergradient\transpose \selection + \max_{s \in [0,1]^m, \mathds{1}\transpose
      s = K} \supergradient \transpose s.
  \end{aligned}
\end{equation}
We observe that the solution to the optimization in the last line of
\eqref{eq:dual-bound-deriv} is \emph{exactly} the solution to the
direction-finding subproblem (Problem \ref{prob:dir-subproblem}). Letting
$\est{s}$ be a vector obtained as a solution to Problem
\ref{prob:dir-subproblem} at $\selection$ (with corresponding supergradient
$\supergradient$), we obtain the following \emph{dual} upper bound:
\begin{equation}\label{eq:dual-upper-bound}
  \dualF(\selection) \triangleq \objectiveF(\selection) + \supergradient \transpose (\est{s} - \selection). \\
\end{equation}
Now, from \eqref{eq:dual-bound-deriv}, we have $\dualF(\selection) \geq
\objectiveF(\selection^*)$ for any $\selection$ in the feasible set. In turn, it
is straightforward to verify that the following chain of inequalities hold for
any point $\selection$ in the feasible set of relaxation in Problem
\ref{prob:relaxation}:\footnote{In fact, this chain of inequalities holds after
  replacing the rounded solution $\rounded{\selection}$ with \emph{any} point
  from the feasible set of the original problem (Problem
  \ref{prob:max-aug-alg-conn}). This part of the inequality is simply a
  consequence of the fact that the feasible set for the original problem is a
  \emph{subset} of the feasible set for the relaxation.}
\begin{equation}
  \objectiveF(\rounded{\selection}) \leq \optValAC \leq \dualF(\selection),
\end{equation}
with the corresponding suboptimality guarantee:
\begin{equation}
  \optValAC - \objectiveF(\rounded{\selection}) \leq \dualF(\selection) - \objectiveF(\rounded{\selection}).
\end{equation}
Moreover, we can always recover a suboptimality bound on $\selection$ with
respect to the optimal value $\objectiveF(\selection^*)$ of the relaxed problem as:
\begin{equation} \label{eq:duality-gap-as-bound}
 \objectiveF(\selection^*) - \objectiveF(\selection) \leq \dualF(\selection) - \objectiveF(\selection)
\end{equation}
The expression appearing on the right-hand side of
\eqref{eq:duality-gap-as-bound} is the (Fenchel) \emph{duality gap}. Equation
\eqref{eq:duality-gap-as-bound} also motivates the use of the duality gap as a
stopping criterion for Algorithm \ref{alg:frank-wolfe-general}: if the gap is
sufficiently close to zero (e.g. to within a certain numerical tolerance), we
may conclude that we have reached an optimal solution $\selection^*$ to Problem
\ref{prob:relaxation}.

\section{Experimental Results}
\label{sec:experimental-results}

\begin{figure*}[t]
    \centering
    \includegraphics[width=1.0\linewidth]{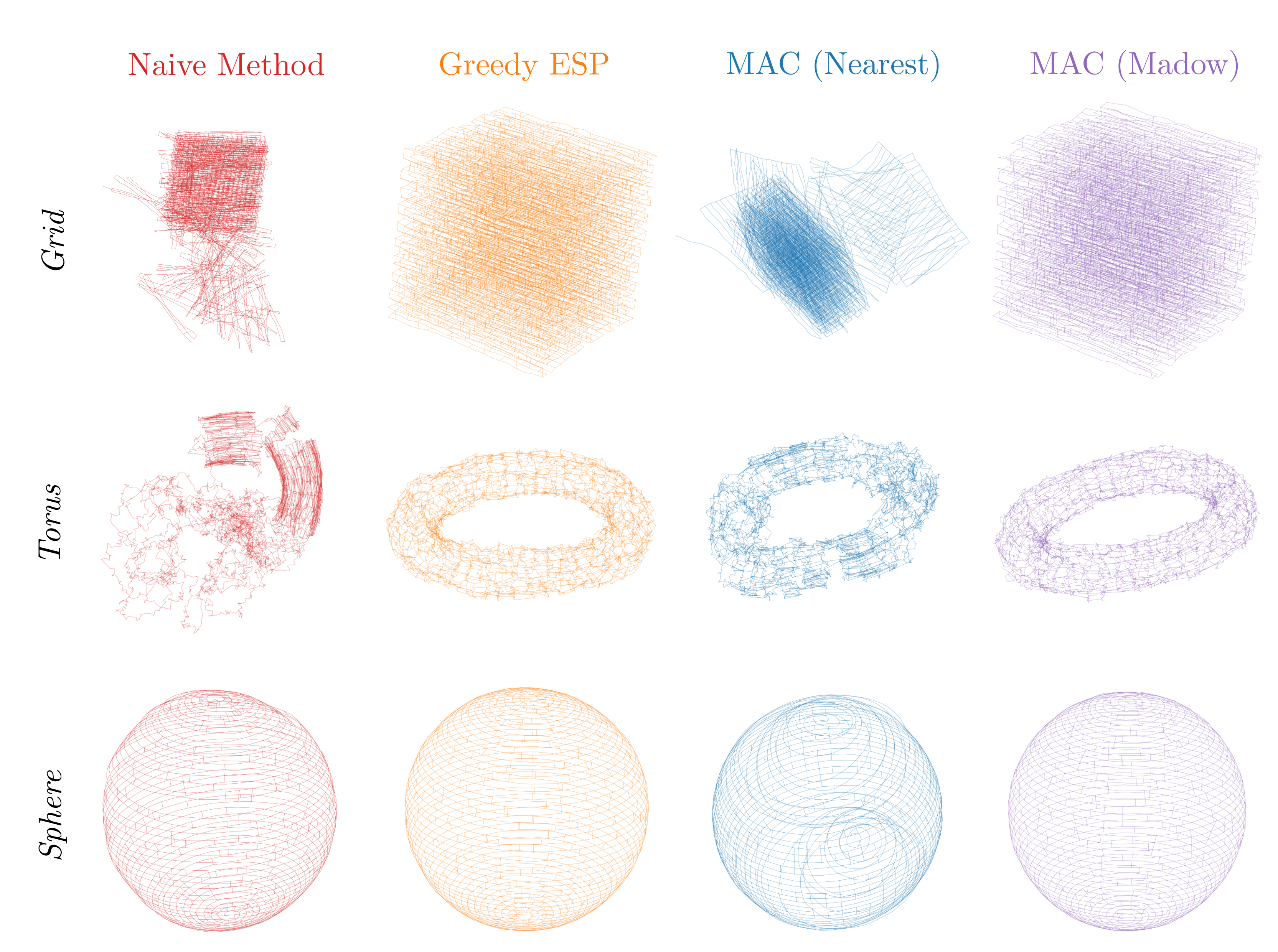}
    \caption{\textbf{Qualitative results for 3D pose-graph sparsification.}
      Pose-graph optimization results for the three 3D benchmark datasets after
      pruning all but 20\% of the original loop closures using each method:
      (Top) \Grid{}, (Middle) \Torus{}, (Bottom) \Sphere{}.}
    \label{fig:qualitative-3d}
\end{figure*}

\begin{figure*}
  \centering
    \begin{subfigure}{1.0\linewidth}
    \centering
    \includegraphics[width=0.25\linewidth]{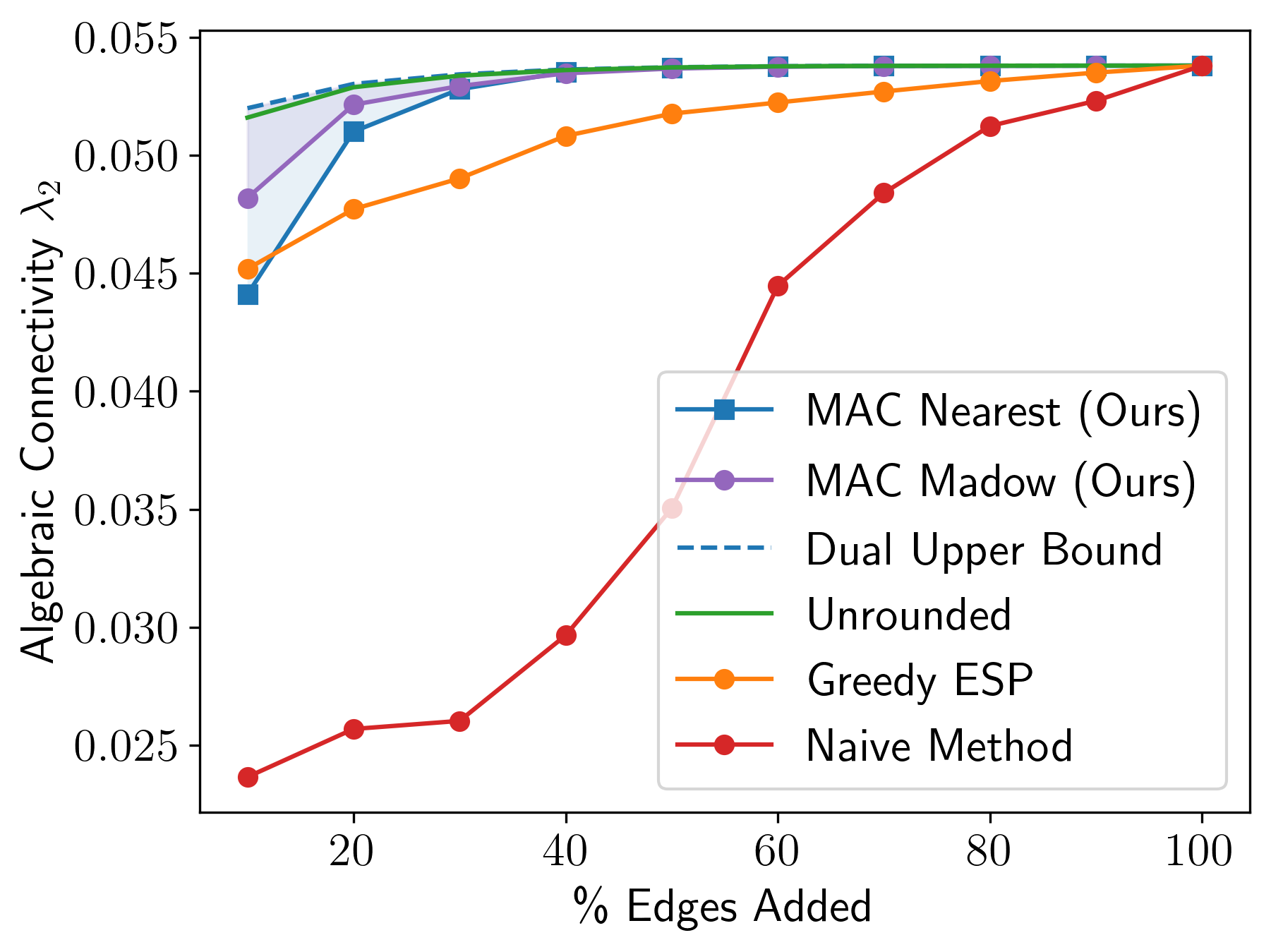}%
    \includegraphics[width=0.25\linewidth]{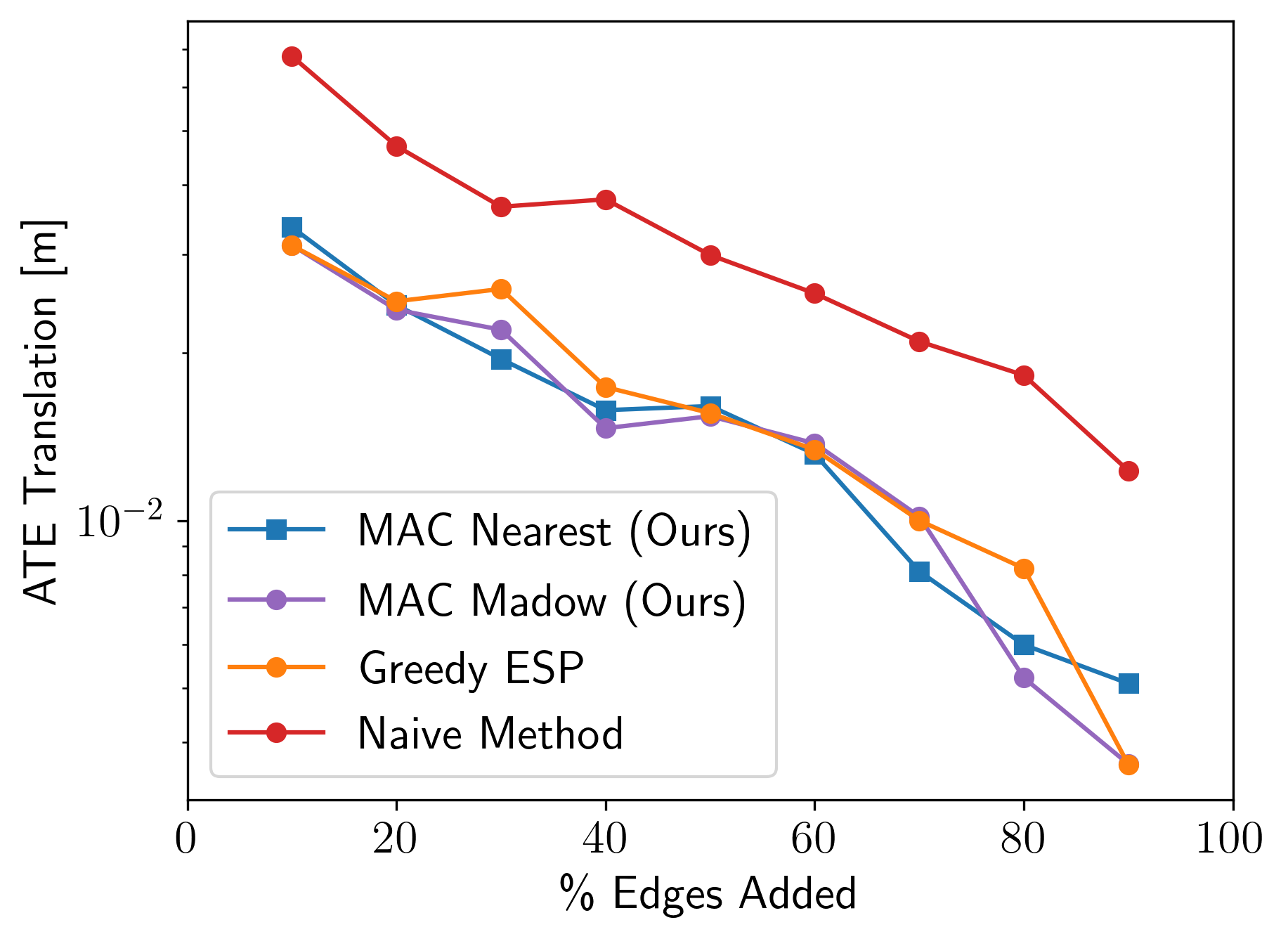}%
    \includegraphics[width=0.25\linewidth]{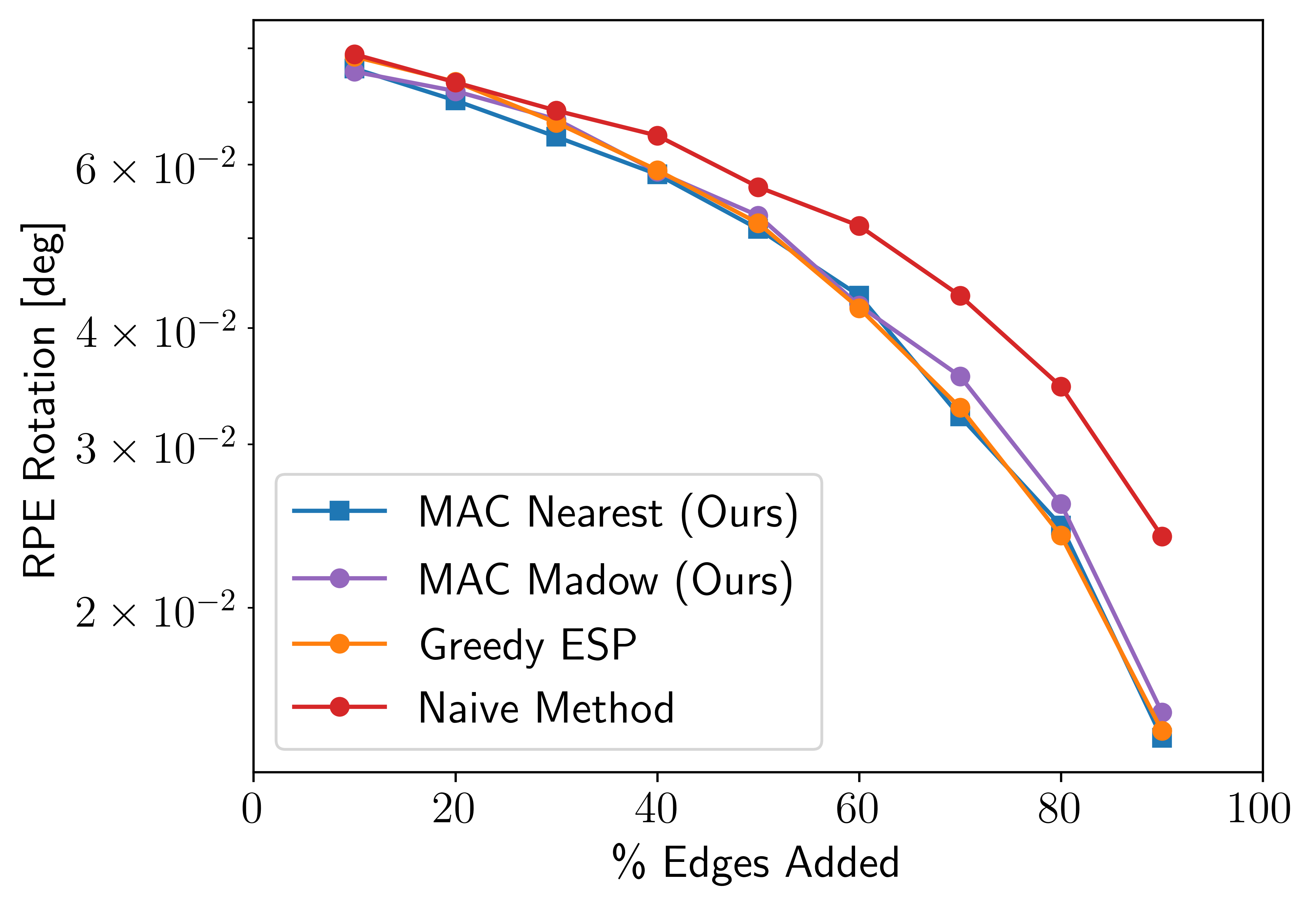}%
    \includegraphics[width=0.25\linewidth]{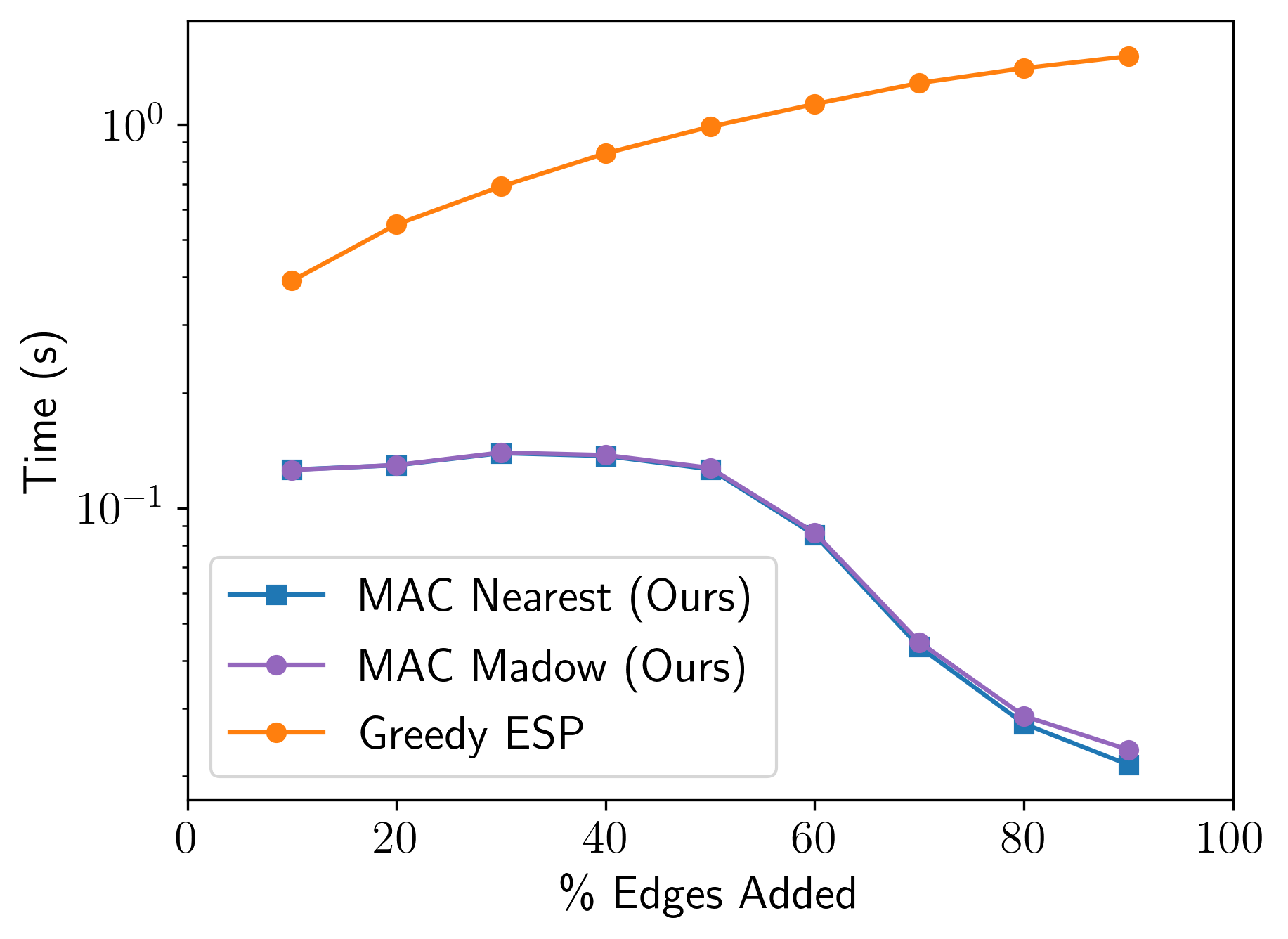}%
    \caption{\Intel{}\\ \label{fig:quantitative:intel}}
  \end{subfigure}
    \begin{subfigure}{1.0\linewidth}
    \centering
    \includegraphics[width=0.25\linewidth]{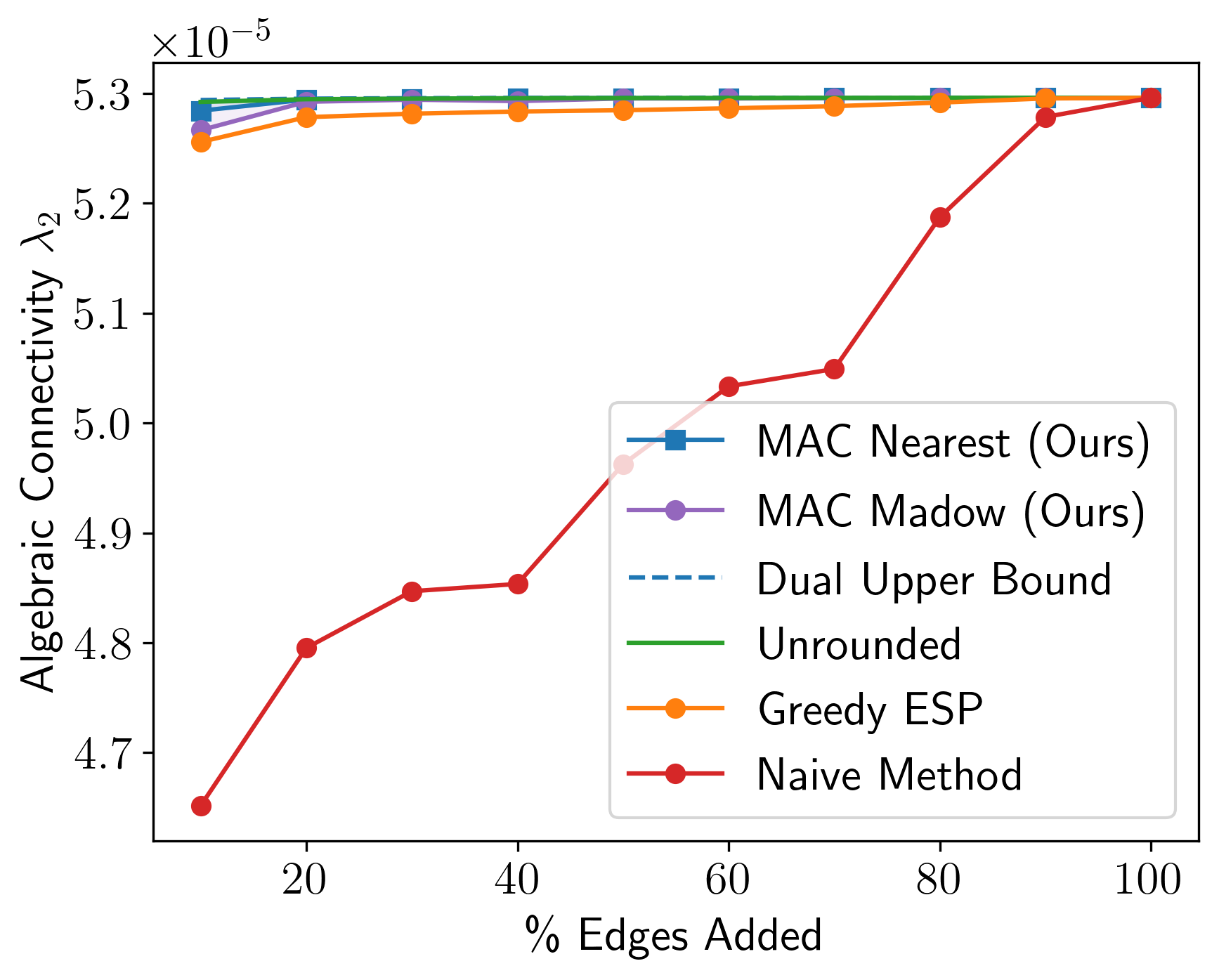}%
    \includegraphics[width=0.25\linewidth]{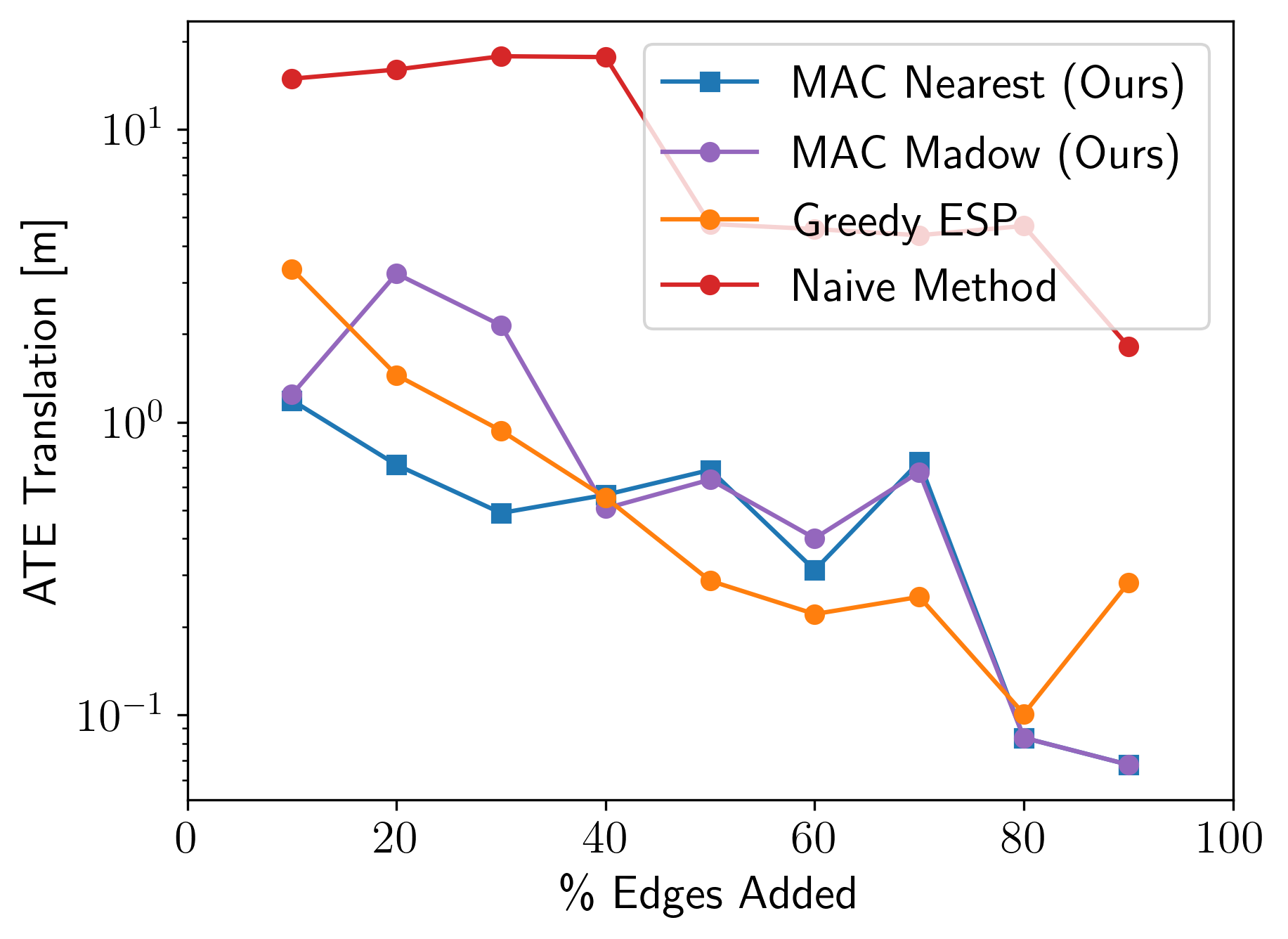}%
    \includegraphics[width=0.25\linewidth]{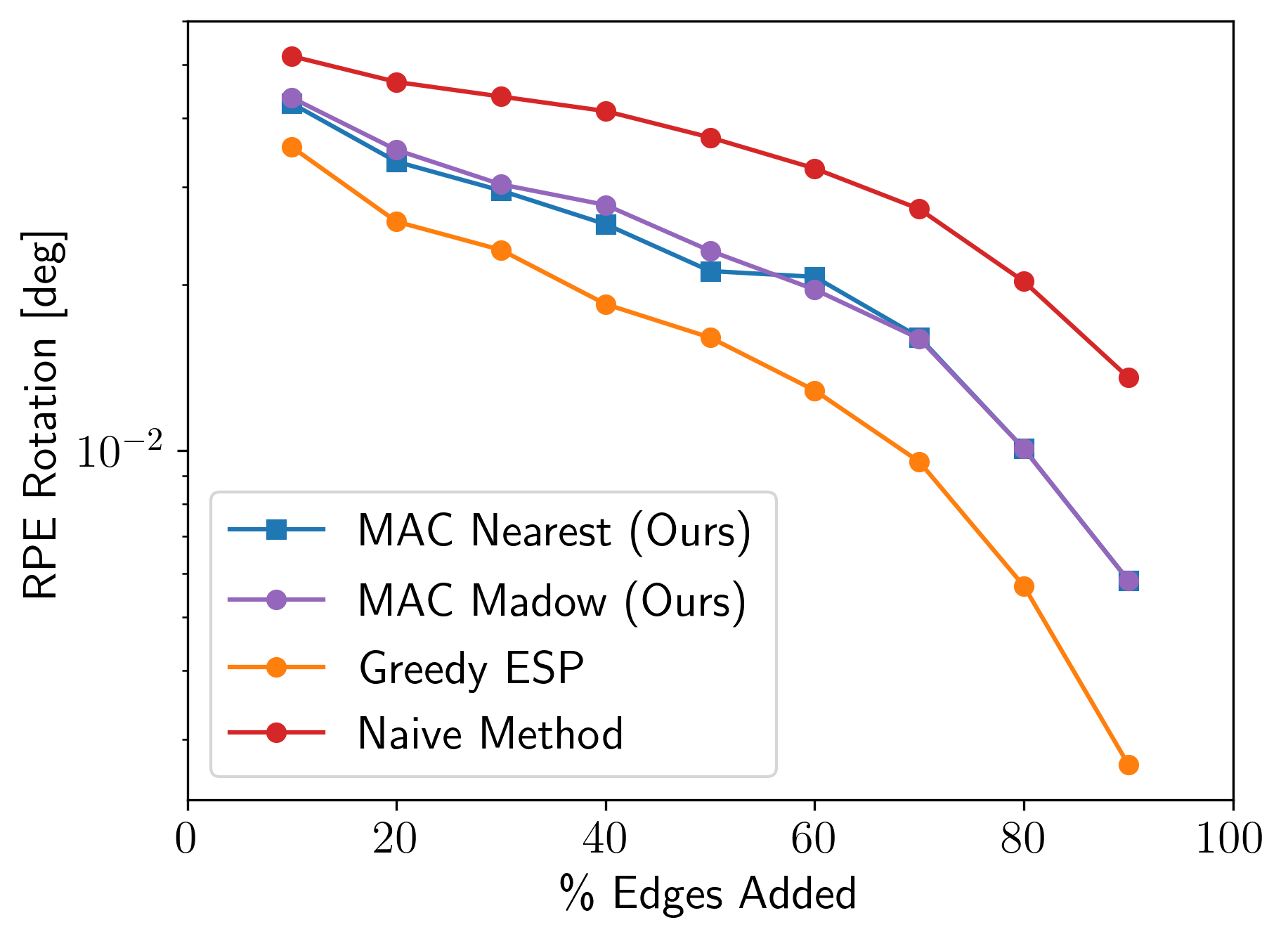}%
    \includegraphics[width=0.25\linewidth]{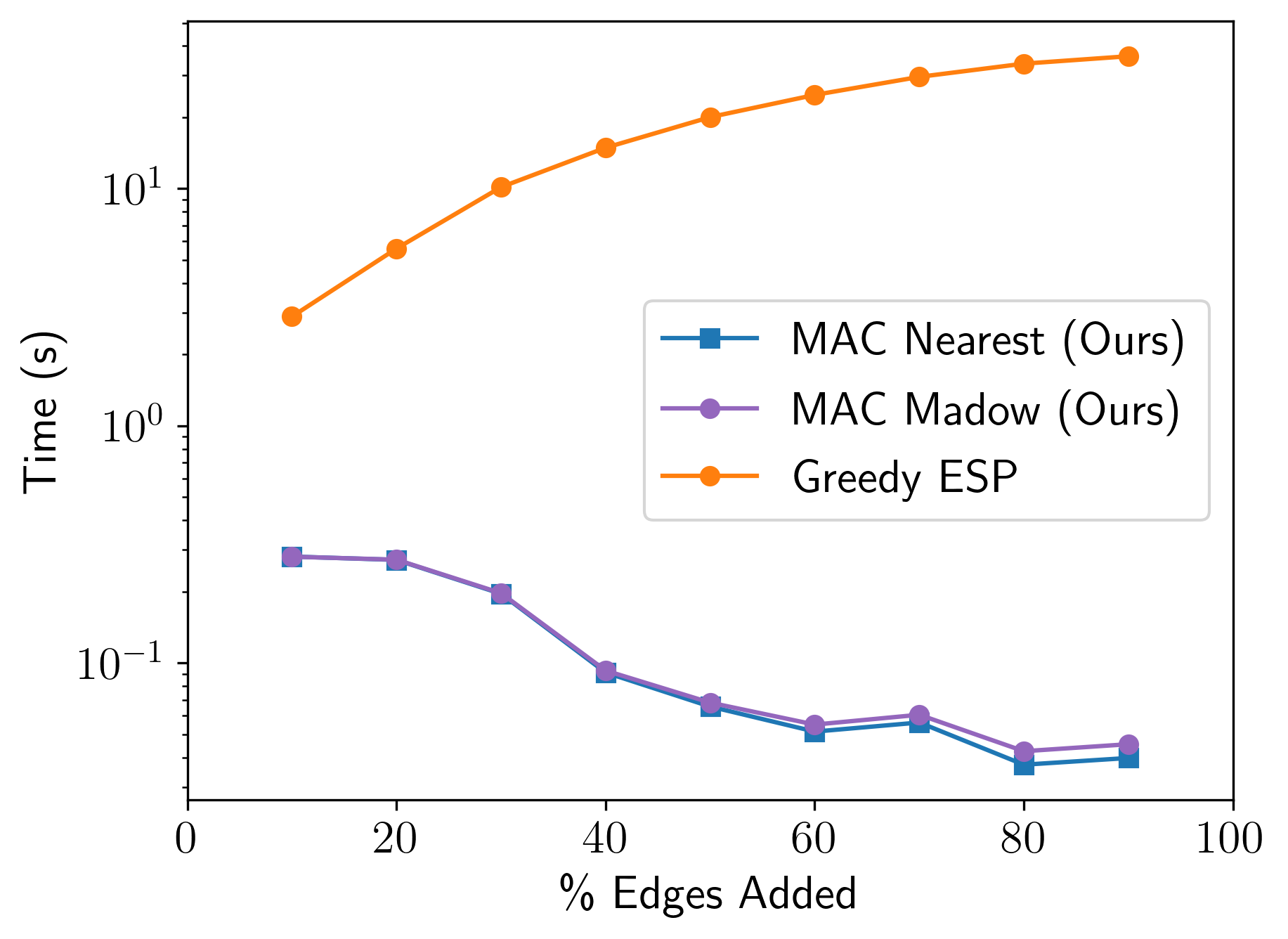}%
    \caption{\AIS2Klinik{}\\ \label{fig:quantitative:ais2klinik}}
  \end{subfigure}
  \begin{subfigure}{1.0\linewidth}
    \centering
    \includegraphics[width=0.25\linewidth]{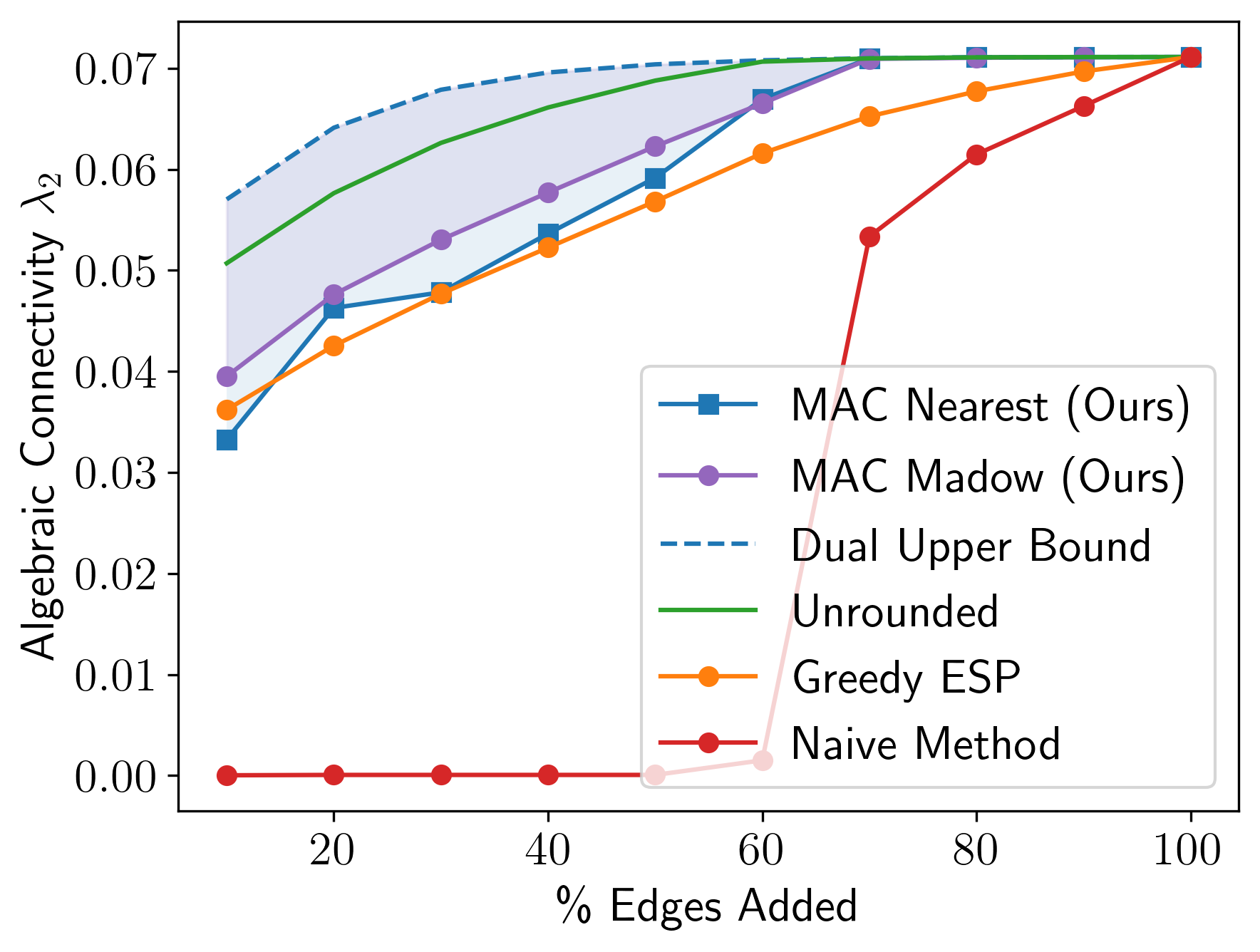}%
    \includegraphics[width=0.25\linewidth]{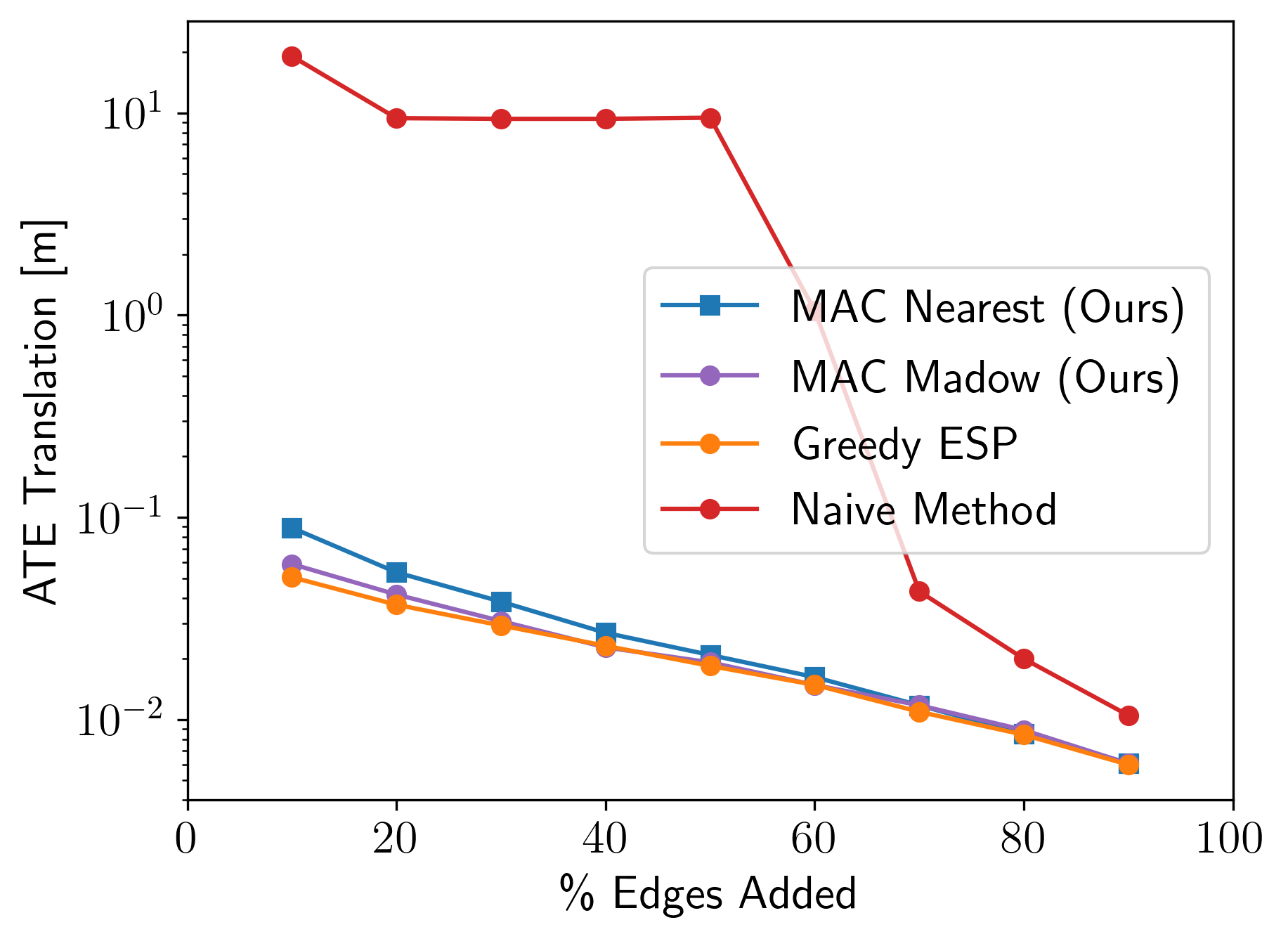}%
    \includegraphics[width=0.25\linewidth]{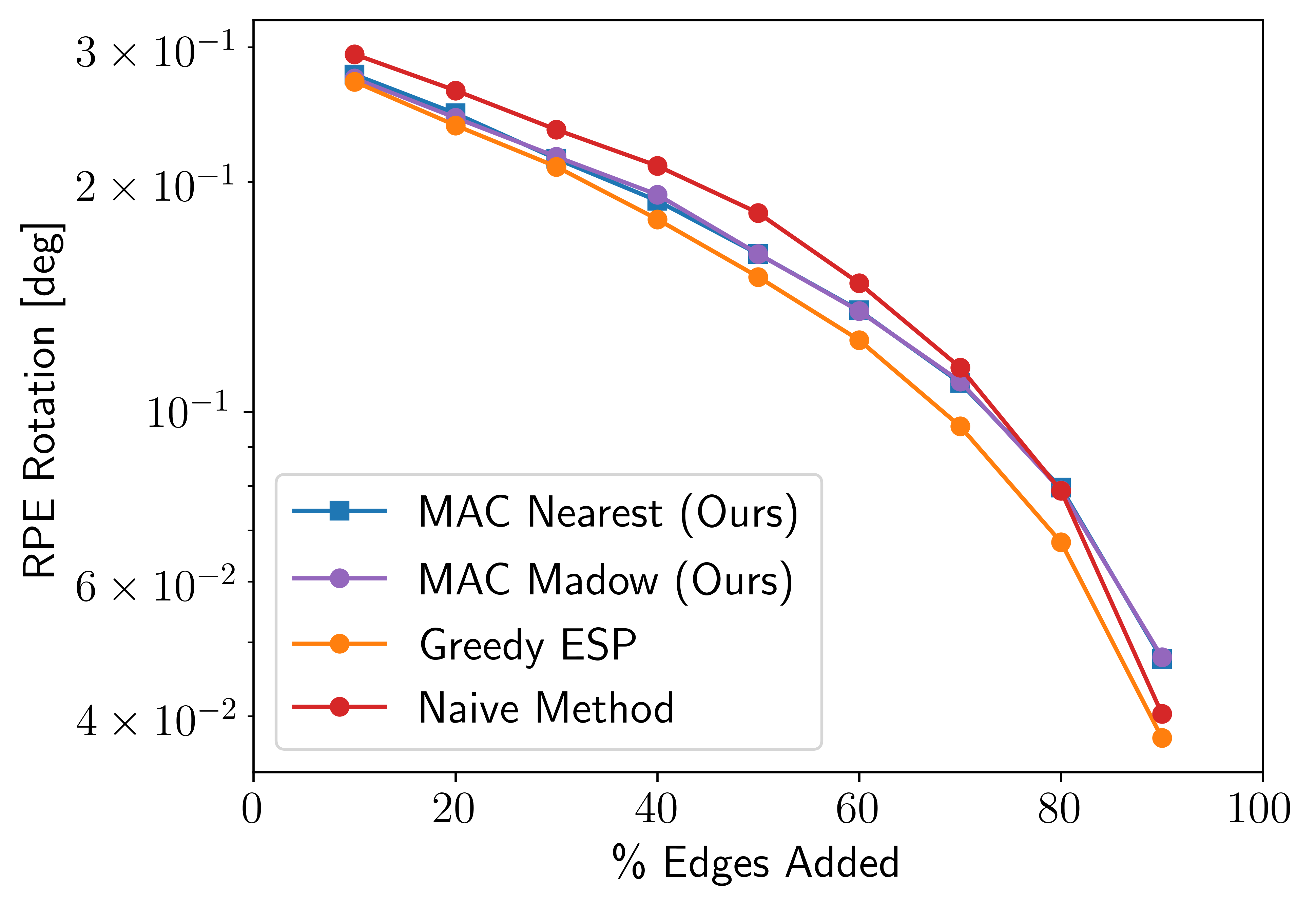}%
    \includegraphics[width=0.25\linewidth]{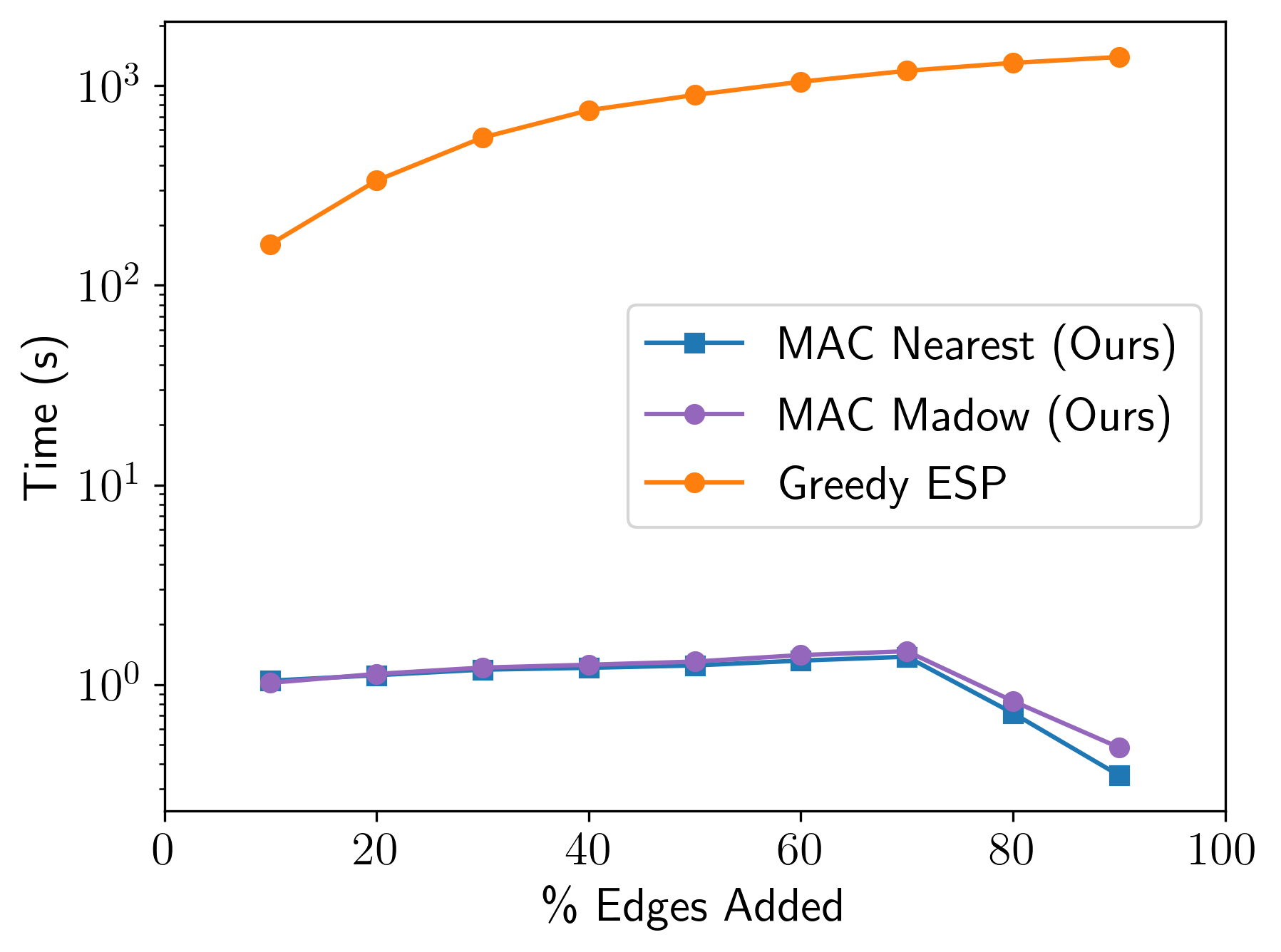}%
    \caption{\City10K{}\\ \label{fig:quantitative:city10k}}
  \end{subfigure}

  \caption{\textbf{Quantitative results for 2D pose-graph sparsification}.
    Pose-graph optimization results for (a) the \Intel{} dataset, (b) the
    \AIS2Klinik{} dataset, and (c) the \City10K{} dataset with varying degrees
    of sparsity (as percent of candidate edges added). Left to right: The algebraic connectivity
    of the graphs obtained using each method (larger is
    better) with the shaded regions indicating the suboptimality gap for each \MAC{} rounding procedure, the mean translation error and relative
    rotation error compared to a maximum-likelihood estimate computed for the
    \emph{full graph}, i.e. with all edges retained (smaller is better; note the
    logarithmic scale), and the computation time (logarithmic scale) for each
    approach. For 100\% loop closures, both algorithms return immediately, so no
    computation time is reported. \label{fig:quantitative-2d}}
\end{figure*}

\begin{figure*}
  \centering
    \begin{subfigure}{1.0\linewidth}
    \centering
    \includegraphics[width=0.25\linewidth]{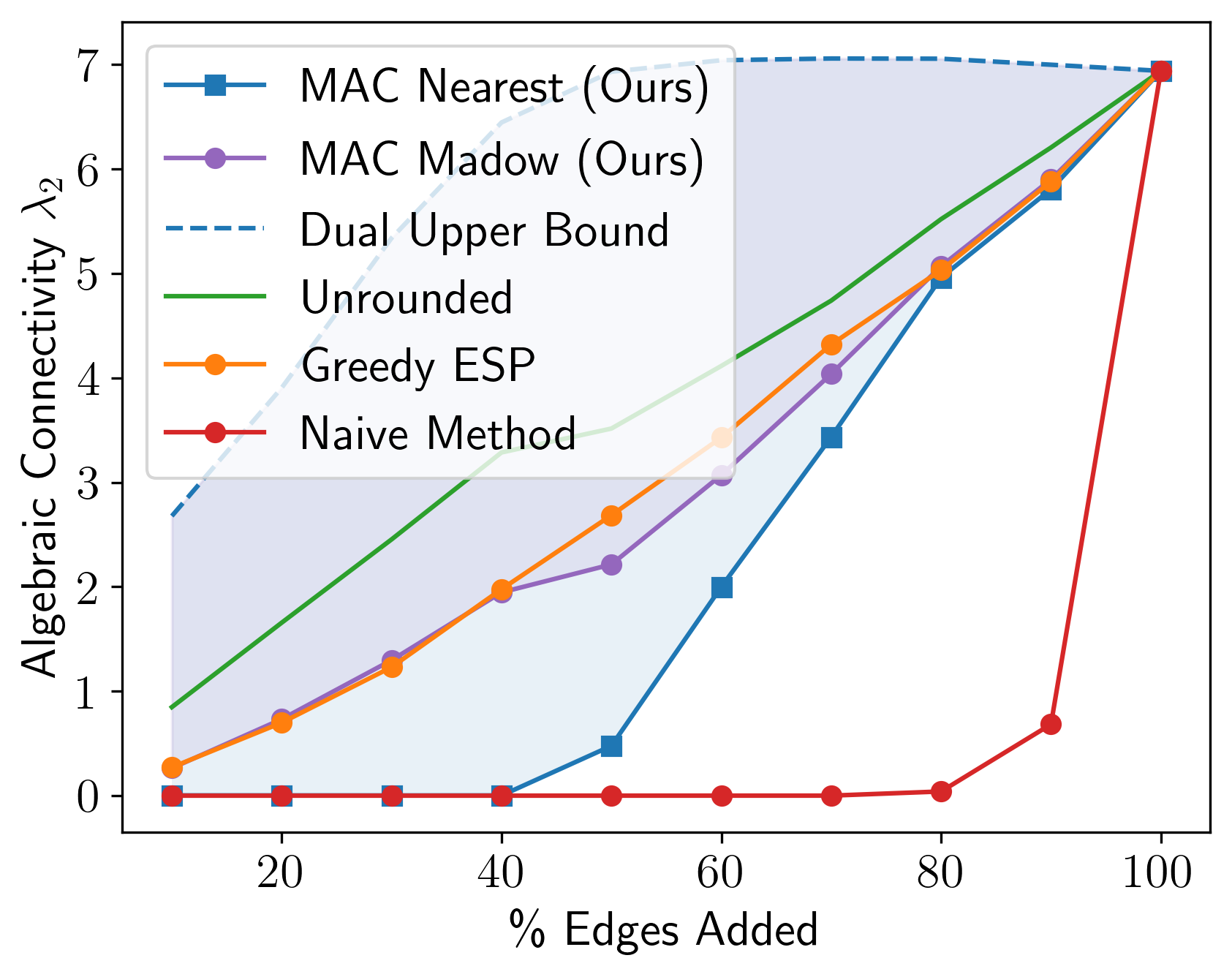}%
    \includegraphics[width=0.25\linewidth]{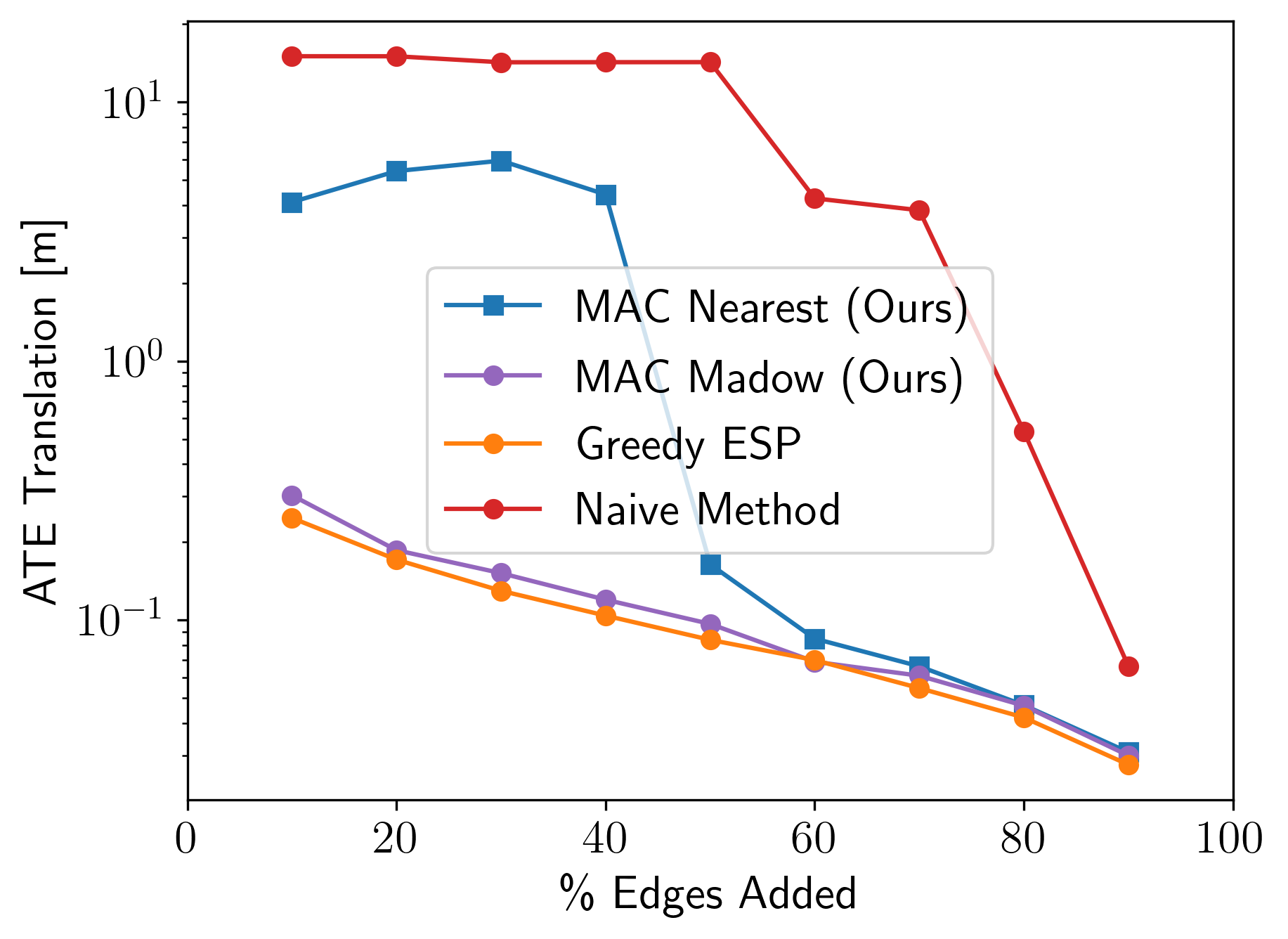}%
    \includegraphics[width=0.25\linewidth]{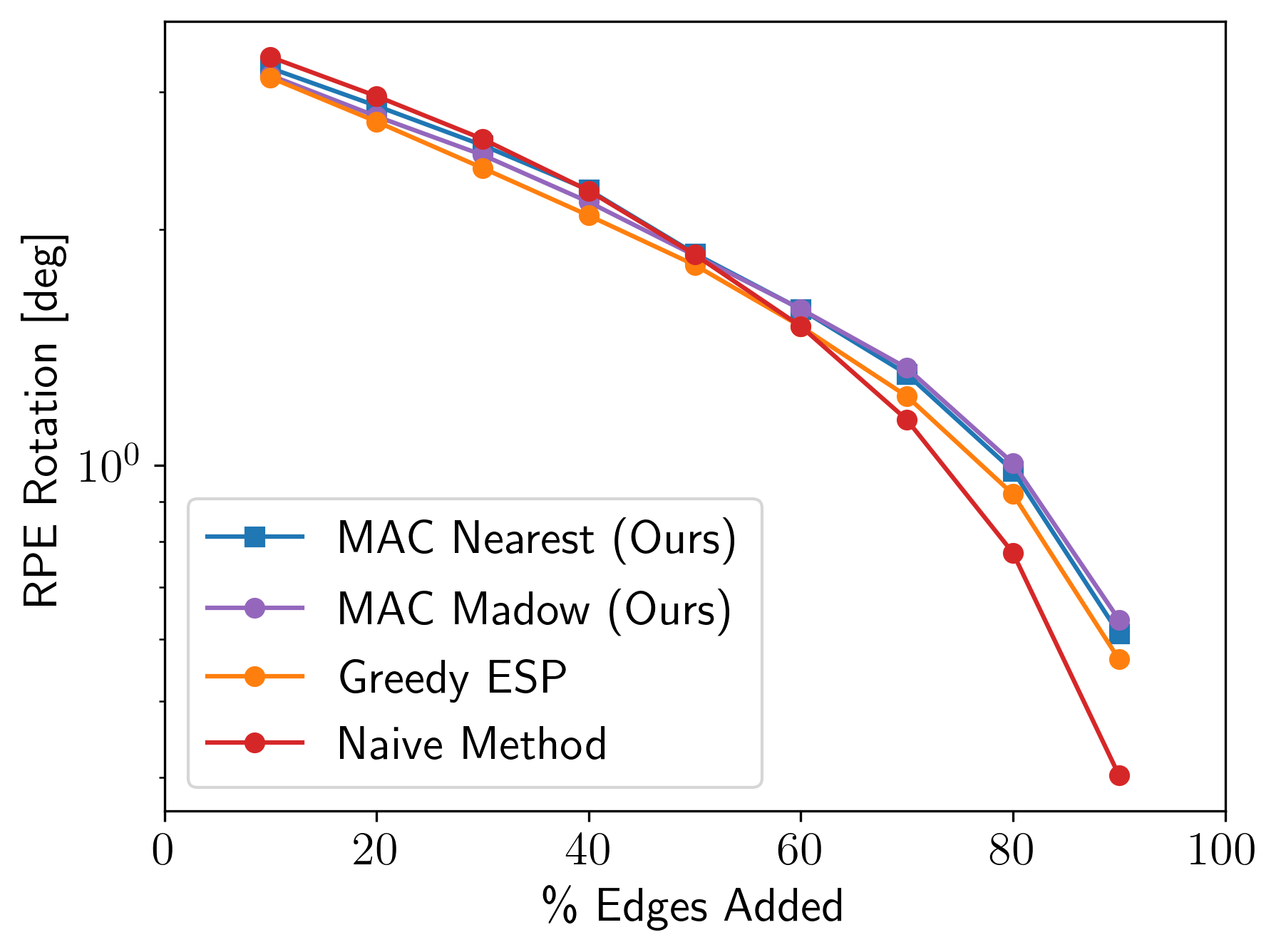}%
    \includegraphics[width=0.25\linewidth]{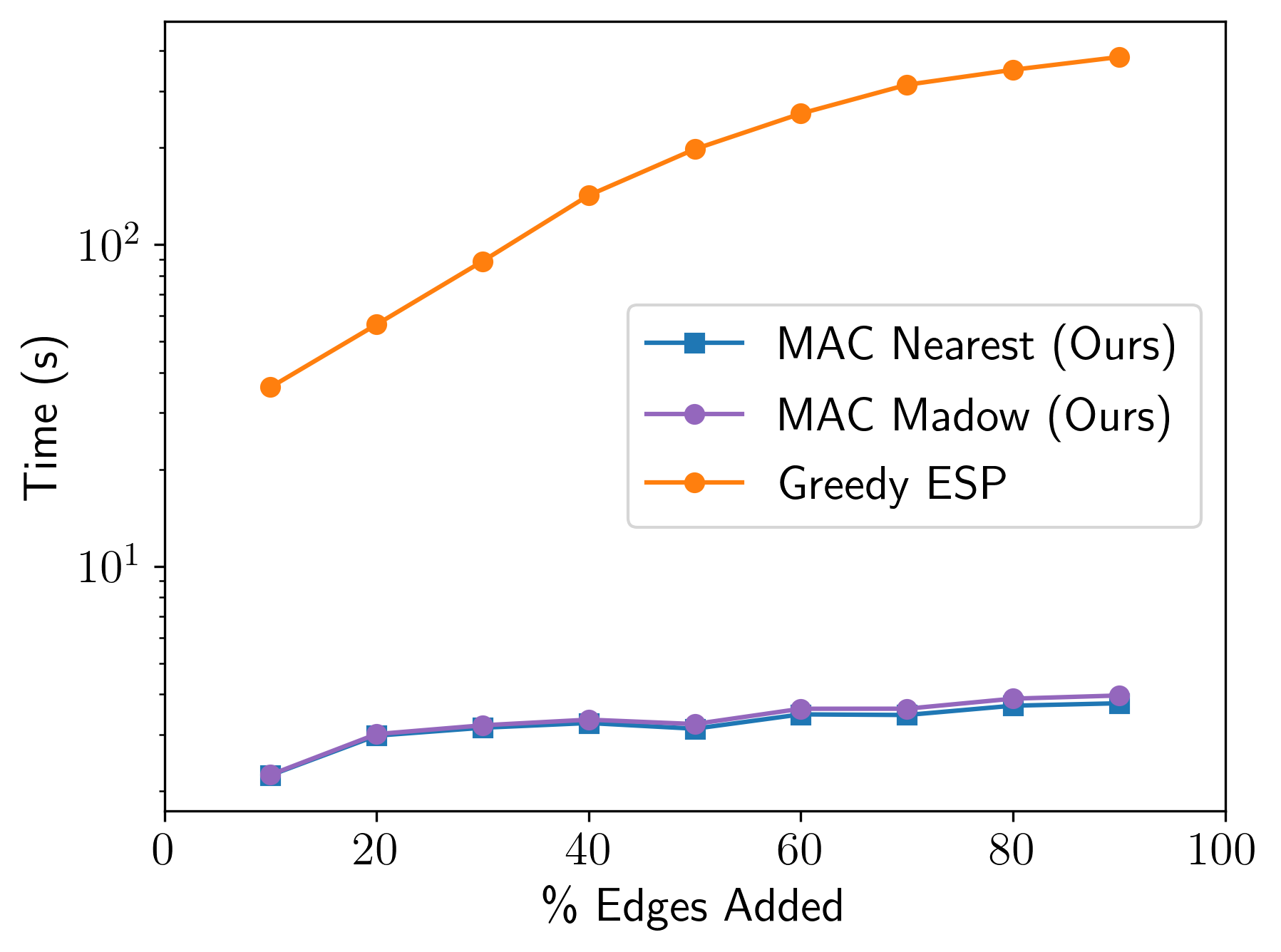}%
    \caption{\Grid{}\\ \label{fig:quantitative:grid}}
  \end{subfigure}
    \begin{subfigure}{1.0\linewidth}
    \centering
    \includegraphics[width=0.25\linewidth]{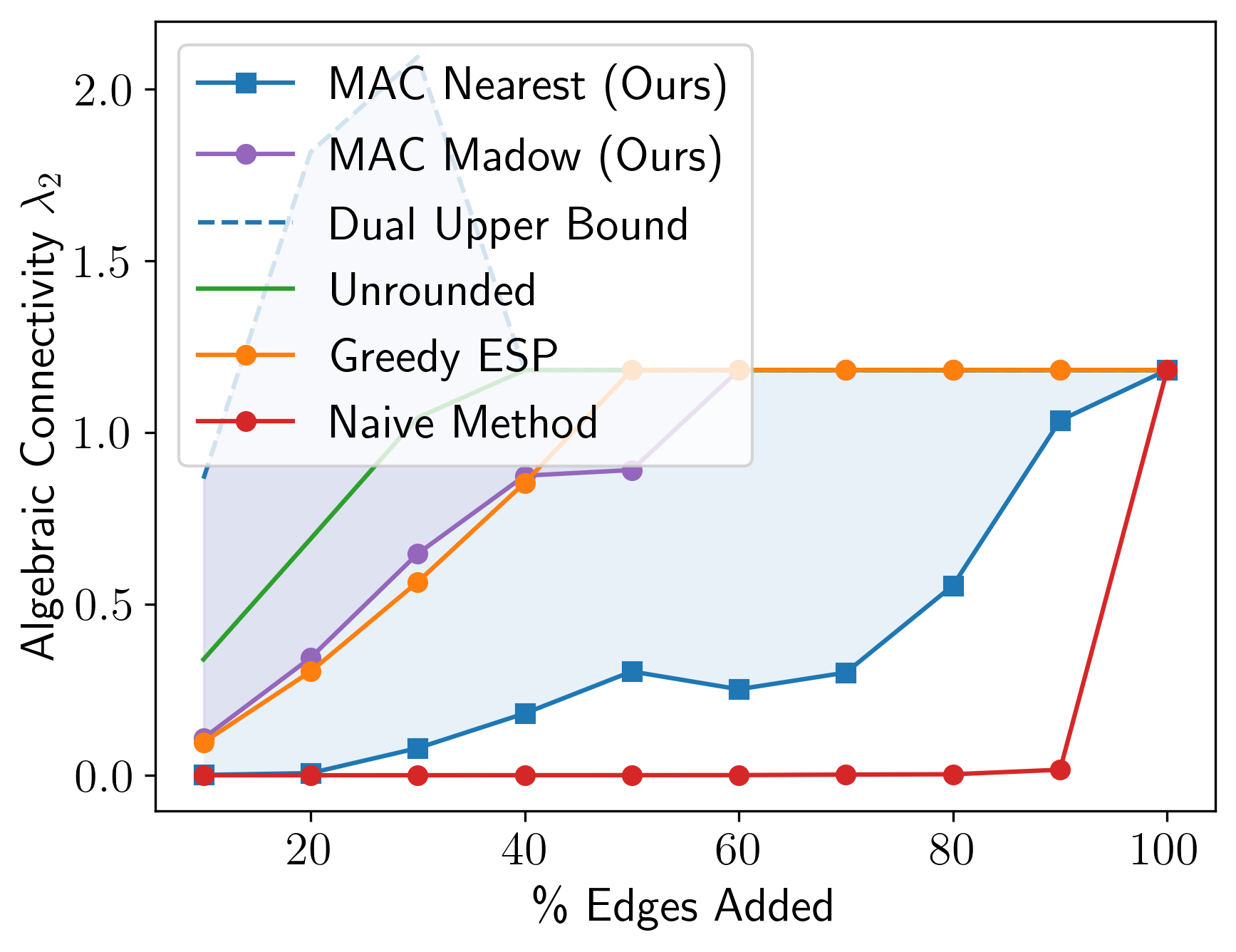}%
    \includegraphics[width=0.25\linewidth]{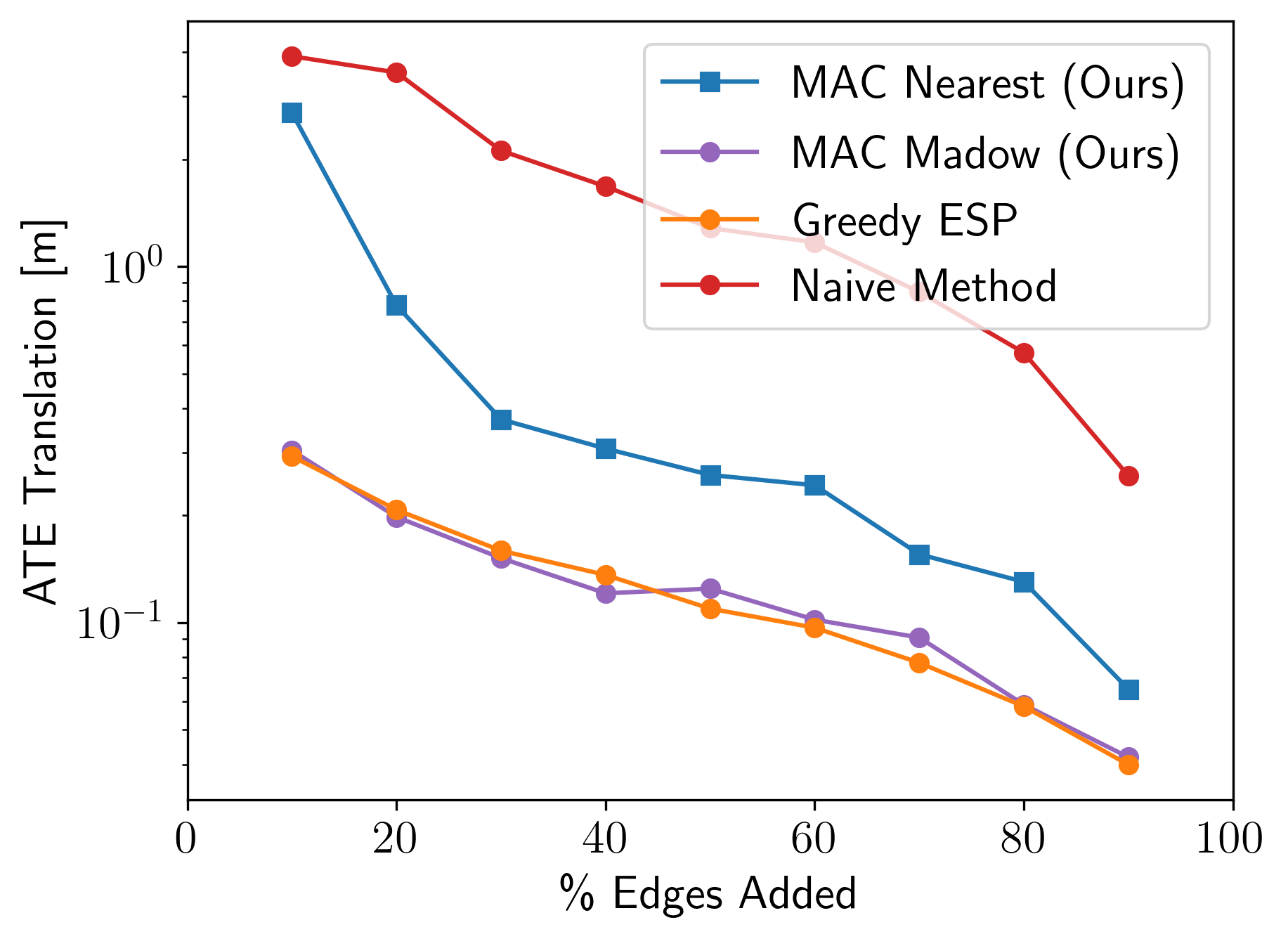}%
    \includegraphics[width=0.25\linewidth]{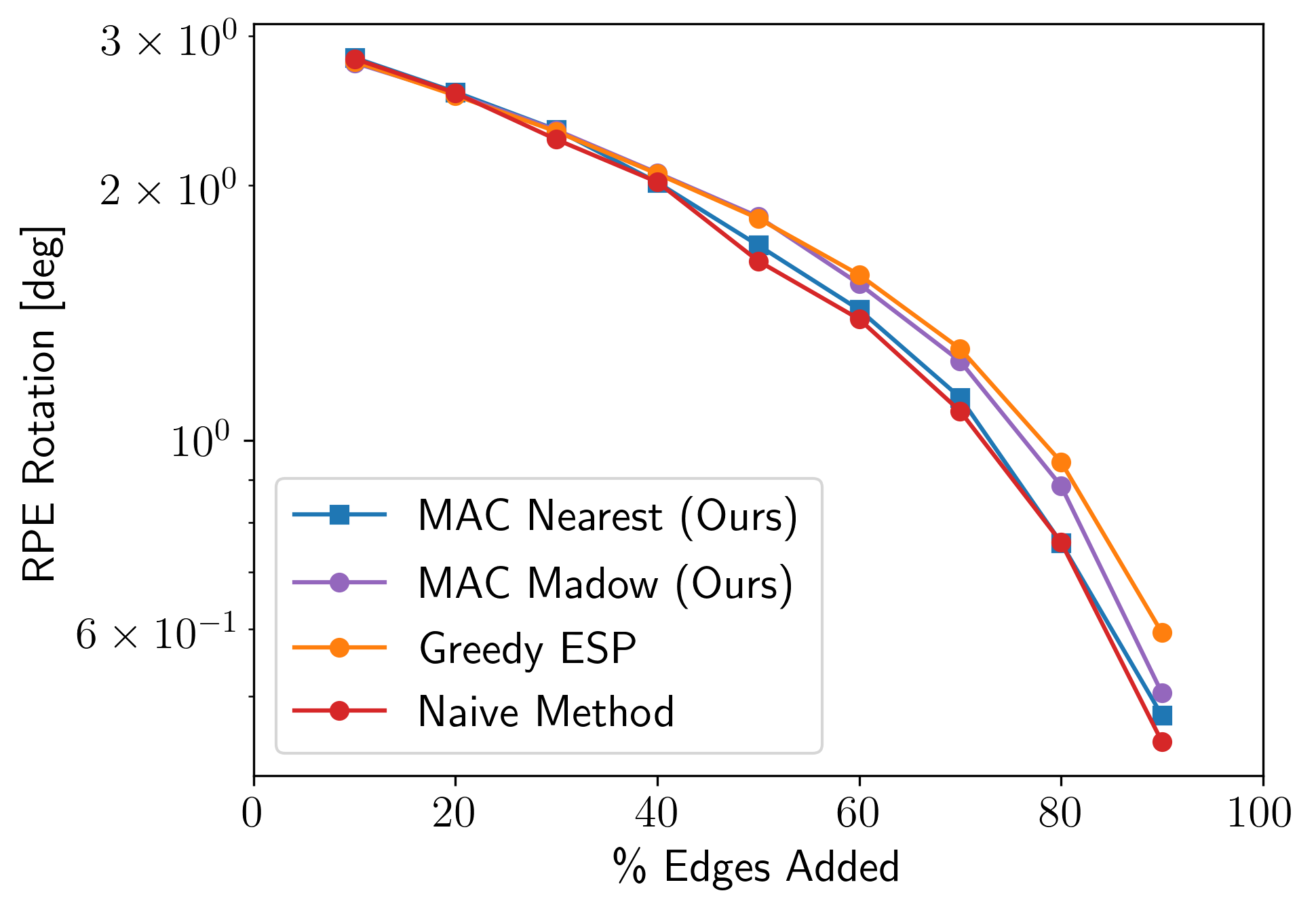}%
    \includegraphics[width=0.25\linewidth]{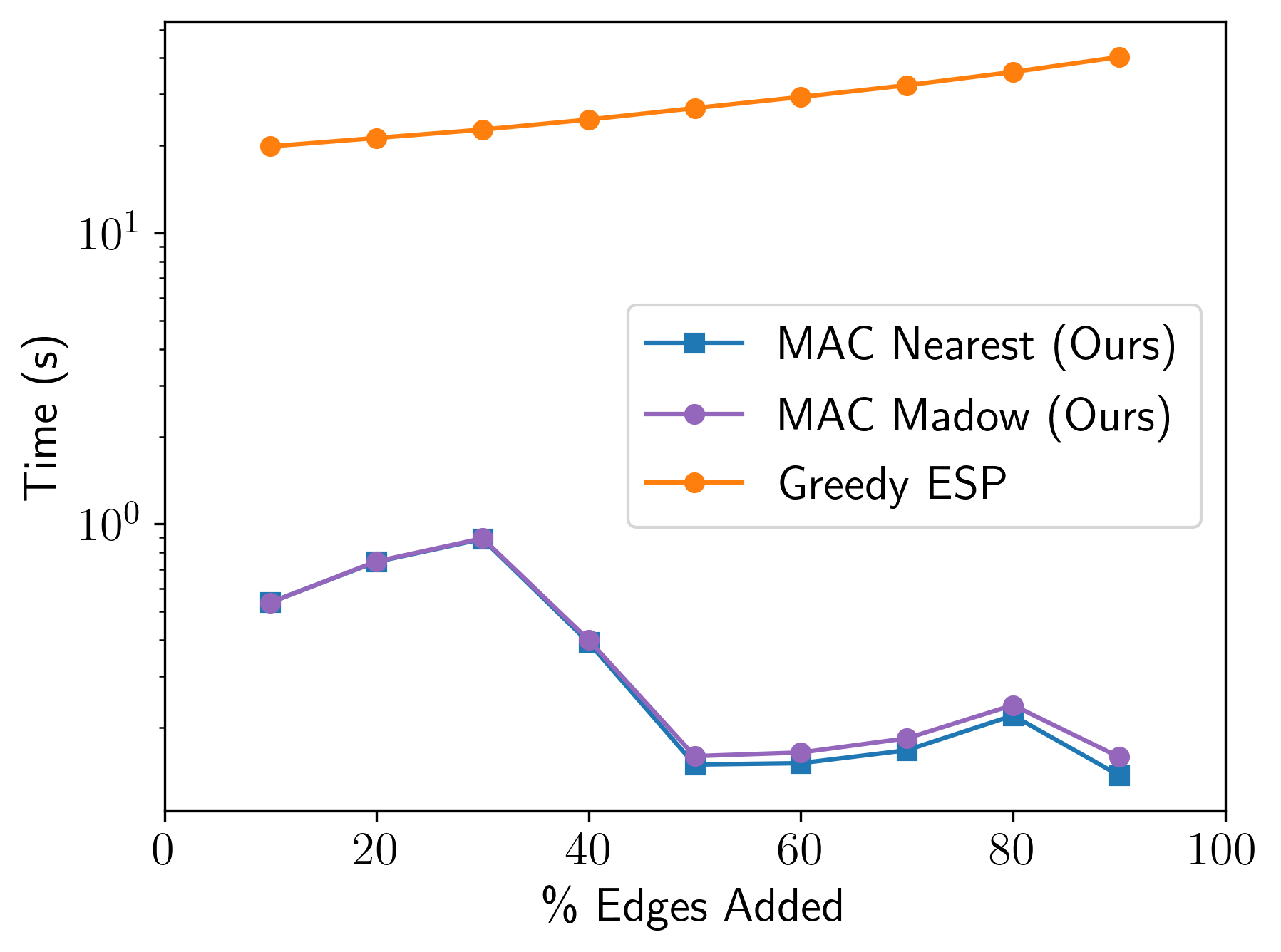}%
    \caption{\Torus{}\\ \label{fig:quantitative:torus}}
  \end{subfigure}
  \begin{subfigure}{1.0\linewidth}
    \centering
    \includegraphics[width=0.25\linewidth]{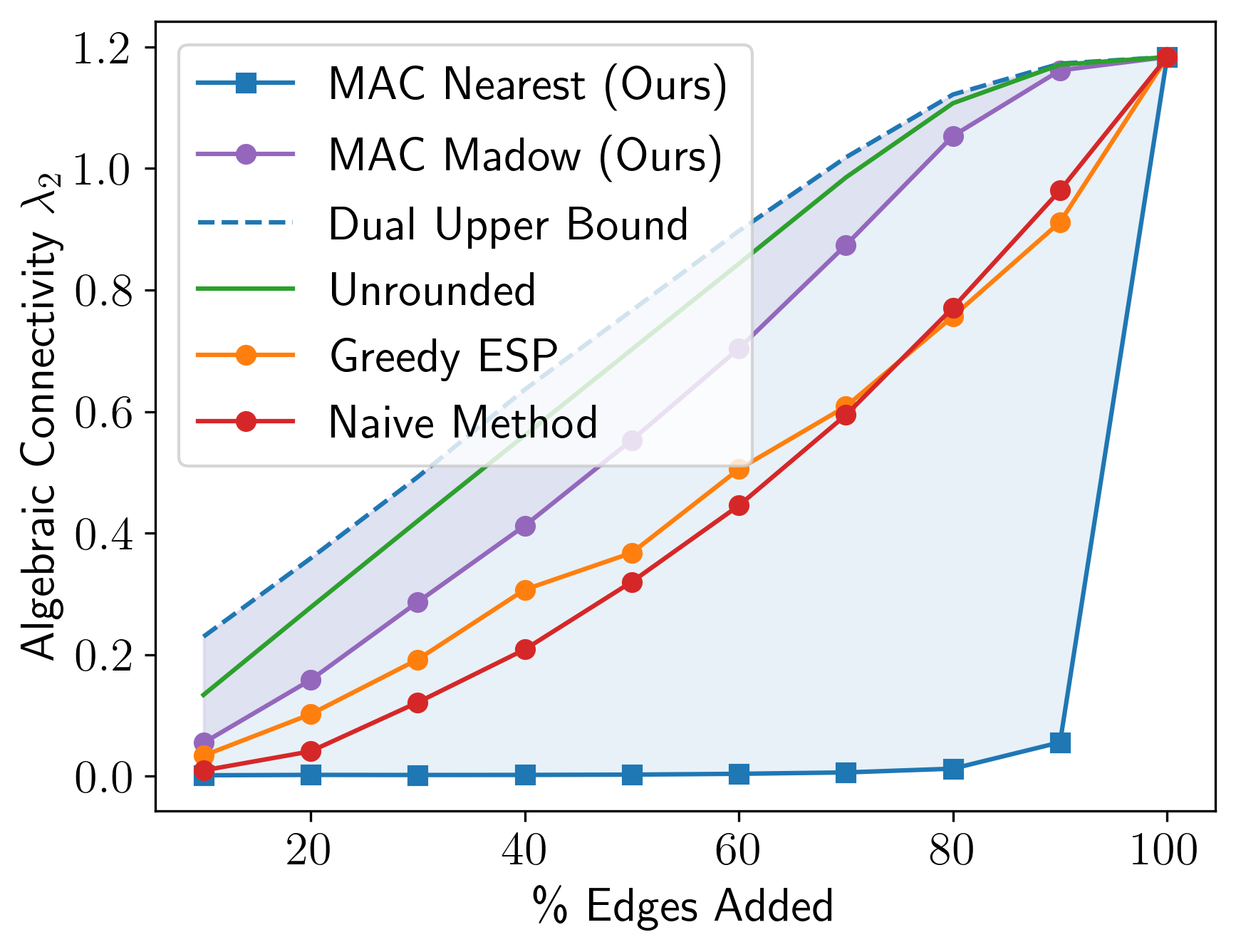}%
    \includegraphics[width=0.25\linewidth]{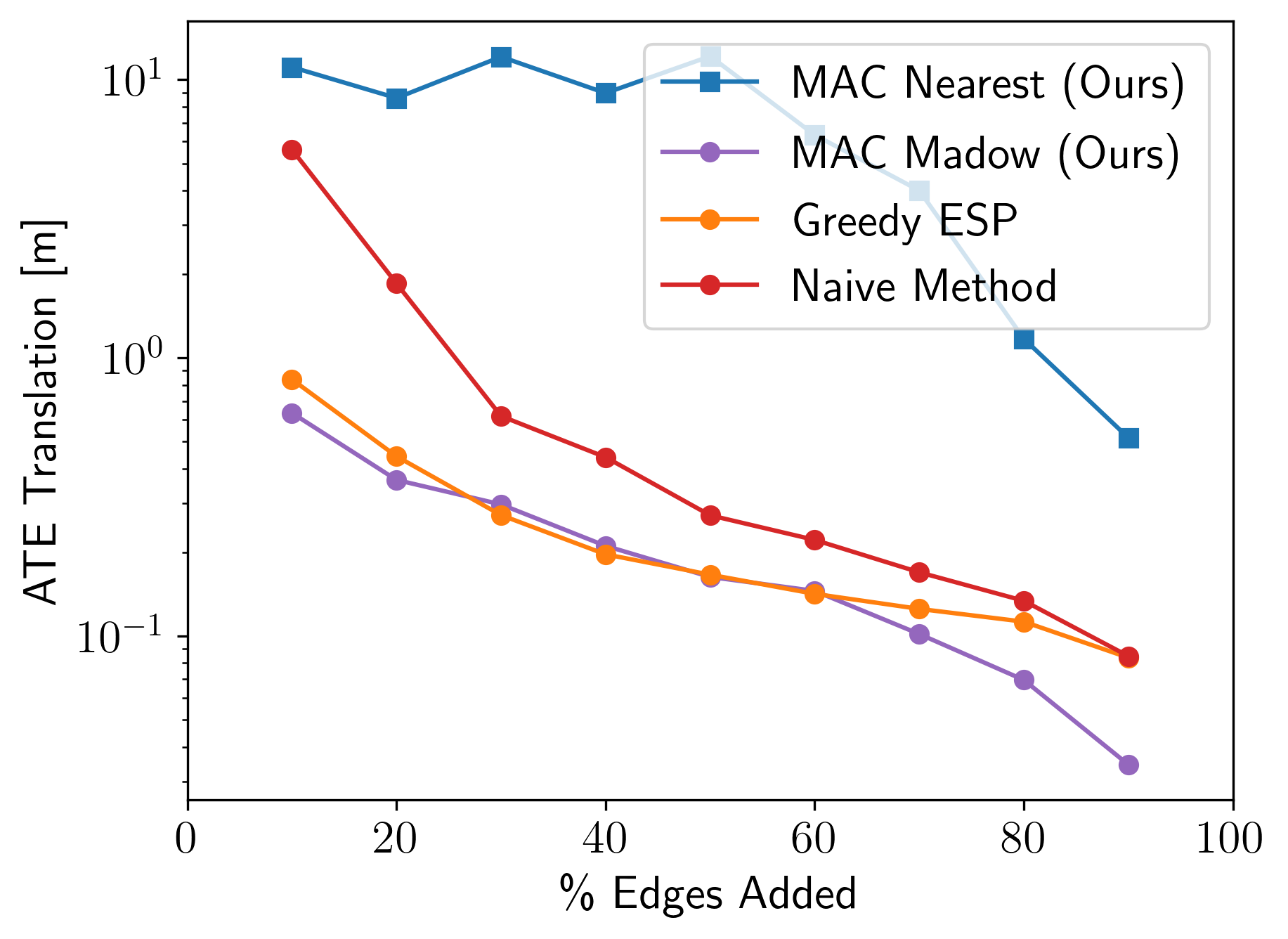}%
    \includegraphics[width=0.25\linewidth]{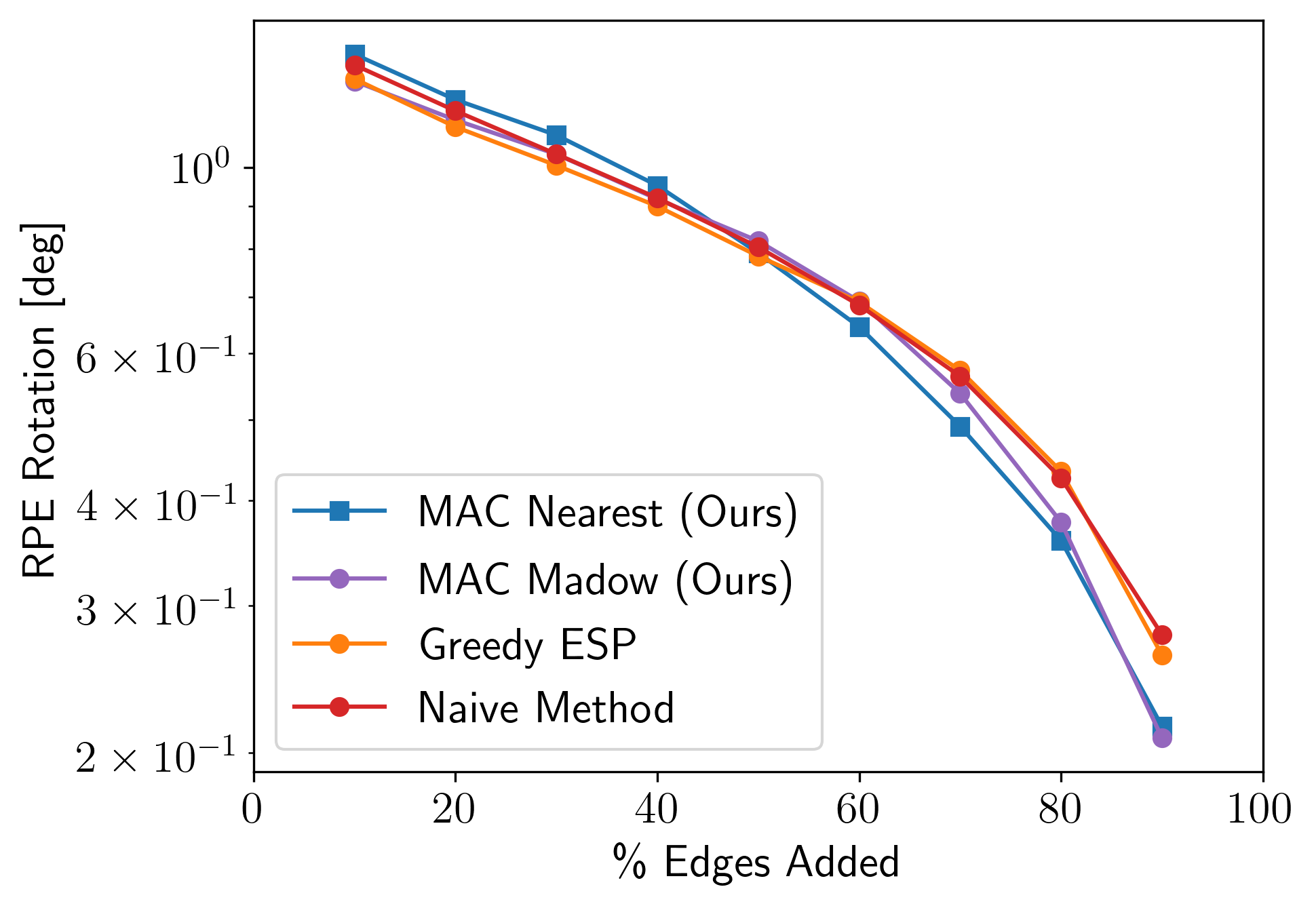}%
    \includegraphics[width=0.25\linewidth]{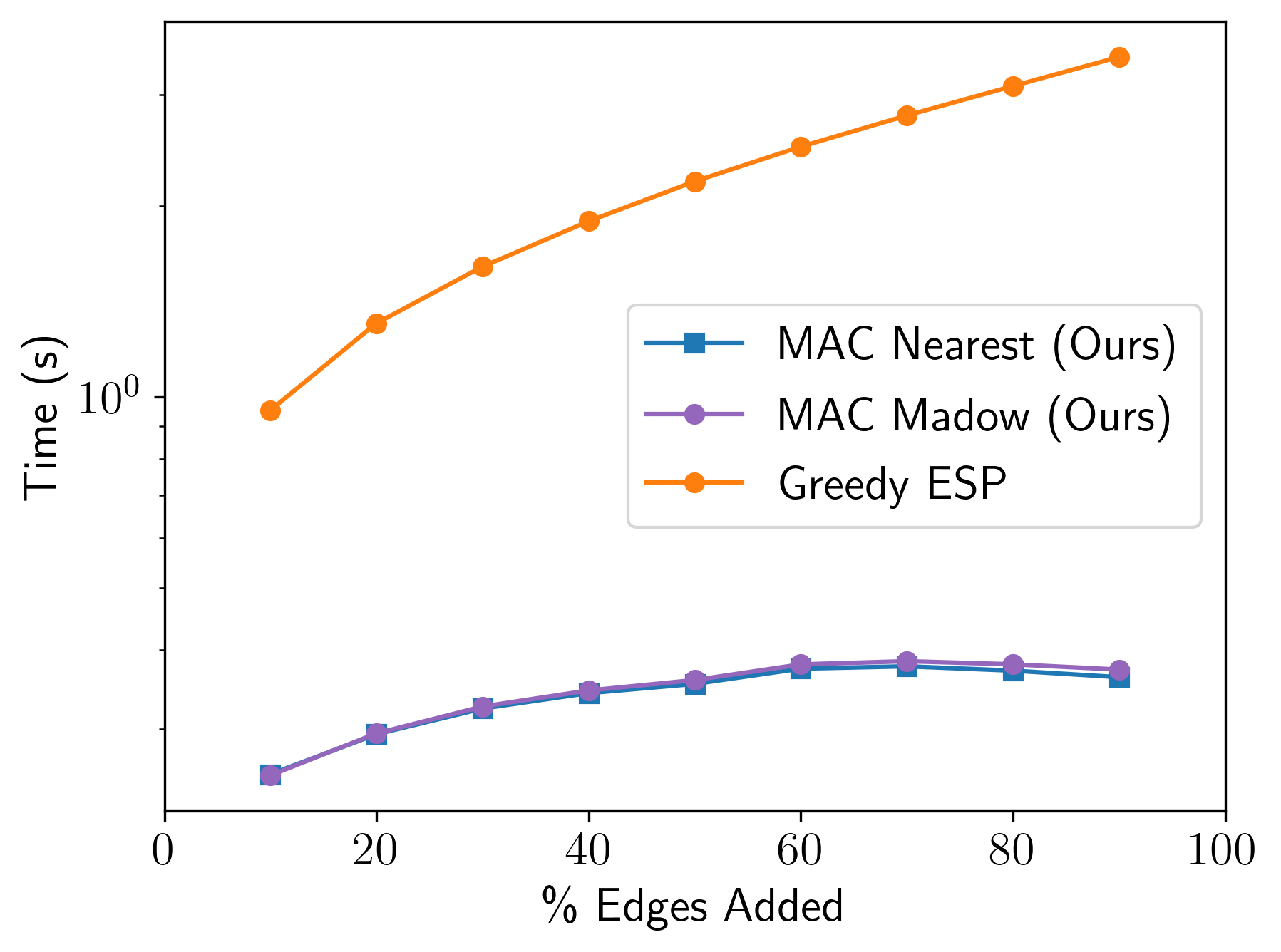}%
    \caption{\Sphere{}\\ \label{fig:quantitative:sphere}}
  \end{subfigure}
    \caption{\textbf{Quantitative results for 3D pose-graph sparsification}.
    Pose-graph optimization results for (a) the \Grid{} dataset, (b) the \Torus{}
    dataset, and (c) the \Sphere{} dataset with varying degrees of sparsity (as
    percent of candidate edges added). Left to right: The algebraic connectivity
    of the graphs obtained using each method (larger is
    better) with the shaded regions indicating the suboptimality gap for each \MAC{} rounding procedure, the mean translation error and relative rotation error compared to a maximum-likelihood estimate computed for the \emph{full graph}, i.e. with all edges
    retained (smaller is better; note the logarithmic scale), and the computation time
    (logarithmic scale)
    for each approach. For 100\% loop closures,
    both algorithms return immediately, so no computation time is
    reported. \label{fig:quantitative-3d}}
\end{figure*}

\begin{table}[t]
  \centering
  \setlength{\tabcolsep}{4pt}
  \begin{tabular}{c | c | c }
    \hline
    Dataset & \# Nodes & \# Candidate Edges  \\
    \hline
    \Intel{} & 1728 & 785 \\
    \Sphere{} & 2500 & 2500 \\
    \Torus{} & 5000 & 4049 \\
    \Grid{} & 8000 & 14237 \\
    \City10K{} & 10000 & 10688 \\
    \AIS2Klinik{} & 15115 & 1614 \\
    \hline   
  \end{tabular}
  \caption{Summary of the datasets used in our experiments. \label{table:datasets}}
\end{table}

We implemented the \MAC{} algorithm in Python and all computational experiments
were performed on a 2.4 GHz Intel i9-9980HK CPU. For computation of the Fiedler
value and the corresponding vector, we use TRACEMIN-Fiedler
\cite{manguoglu2010tracemin, sameh1982trace}. Where in our previous work
\cite{doherty2022spectral} we used an LU decomposition as a subprocedure of
TRACEMIN-Fiedler, in this paper we instead use the Cholesky decomposition
provided by Suitesparse \cite{cholmod}. Both options are exposed by the MAC
library.  We have also improved the computational efficiency of calculating the Fiedler vectors required to form the direction-finding subproblems in each iteration of the Frank-Wolfe method (Algorithm \ref{alg:frank-wolfe-general}): since the Fiedler
vectors obtained in subsequent iterations are often close, \emph{warm-starting} the TRACEMIN-Fiedler method using the Fiedler vector computed in the \emph{previous} iteration can result in substantial computational savings. In all experiments,
we run \MAC{} for a maximum of 20 iterations, or when the duality gap in
equation \eqref{eq:duality-gap-as-bound} reaches a tolerance of $10^{-8}$.

In the following sections, we present experimental results using \MAC{} for
sparsification of benchmark pose-graph SLAM datasets, as well as on real data
from the University of Michigan North Campus Long-Term (NCLT) Dataset
\cite{ncarlevaris-2015a}. In all of our experiments, we use odometry edges
(between successive poses) to form the base graph and loop closure edges as
candidate edges. We consider selection of $10\%, 20\%, \ldots, 100\%$ of the
candidate loop closure edges in the sparsification problem.

We compare both of the rounding procedures (from Section \ref{sec:rounding}),
along with a na\"ive heuristic method which selects simply the subset of
candidate edges with the largest edge weights. This simple heuristic approach
serves two purposes: First, it provides a baseline, topology-agnostic approach
to demonstrate the impact of considering graph connectivity in a sparsification
procedure; second, we use this method to provide a sparse initial estimate to
our algorithm. We also compare our approach with the ``greedy edge selection
problem'' (Greedy ESP) algorithm of \citet{khosoussi2019reliable} for
approximate D-optimal graph sparsification. In an effort to make the comparison
as fair as possible, we have implemented our own version of the Greedy ESP
algorithm in Python which makes use of lazy evaluation
\cite{krause2014submodular, minoux2005accelerated}, as well as incremental,
sparse Cholesky factorization, as proposed in \cite{khosoussi2019reliable}.

\subsection{Benchmark pose-graph SLAM datasets}\label{sec:benchmark}

We evaluated the \MAC{} algorithm using several benchmark pose-graph SLAM
datasets. We present results on six datasets in this paper (summarized in Table
\ref{table:datasets}). Three of the datasets we considered (\Intel{},
\AIS2Klinik{}, and \City10K{}) represent 2D SLAM problems (corresponding to
Problem \ref{se-mle} with $d=2$), and the remaining three (\Sphere{}, \Torus{},
and \Grid{}) represent 3D SLAM problems. The \Intel{} dataset and the
\AIS2Klinik{} dataset are both obtained from real data, while the remaining
datasets are synthetic. We apply \MAC{} to sparsify each graph to varying
degrees of sparsity and apply SE-Sync \cite{rosen2019se} to compute globally
optimal estimates of robot poses for the graphs with edge sets selected by each
method.\footnote{In all of our experiments, SE-Sync returned \emph{certifiably
    optimal} solutions to Problem \ref{se-mle}.}

Figure \ref{fig:qualitative-3d} gives a qualitative comparison of the results
from our approach as compared with the baselines on each of the 3D benchmark
datasets we considered. Interestingly, the Greedy ESP method and MAC with
randomized rounding (denoted MAC (Madow)) produce qualitatively very similar
results, despite optimizing for different criteria. Perhaps more surprisingly,
MAC with the ``nearest neighbors'' style of rounding (denoted MAC (Nearest))
fails quite dramatically on the \Grid{} dataset and similarly on the \Torus{}
dataset. We had not observed this phenomenon in our prior work
\cite{doherty2022spectral} where we had only thus far applied the method to 2D
datasets. This may be a consequence of the synthetic construction
of these datasets (and we have not observed similar failures on any real data). In
particular, each of these datasets is constructed in a manner that produces
symmetries in the graph topology. We do not explicitly consider the presence of
symmetries in our objective (though if these symmetries are known \emph{a
  priori}, it is possible to do so in principle, see, e.g. related discussion in
\cite{boyd2006convex}). In problems exhibiting symmetry, there may be multiple solutions $x^*_1, \ldots, x^*_p$ to Problem \ref{prob:max-aug-alg-conn} that achieve the global minimum.  In these cases, the concavity of the objective in \eqref{eq:relaxation} implies that the average $\bar{x} \triangleq (1/p) \sum_i x^*_i$ of these solutions will always achieve an objective value that is at least as good for the \emph{relaxed} problem (Problem \ref{prob:relaxation}) as that attained by the $x_i$ themselves.  In such cases, applying a simple top-$K$ rounding procedure may select edges that belong to multiple distinct optima $x^*_i$, $x^*_j$.  On the other hand, because the greedy method (by construction) builds its solutions sequentially (by selecting one edge at a time), as soon as a distinguishing edge belonging to one of the distinct optima has been added to the solution under construction (thus breaking the symmetry), the greedy method is likely to prefer selecting other edges belong to that specific configuration in subsequent iterations.  We conjecture that the randomness introduced in the Madow sampler helps to achieve a similar kind of symmetry breaking when rounding relaxed solutions of Problem \ref{prob:relaxation}.

For a quantitative comparison of each method, we report three performance
measures: (1) the algebraic connectivity $\lambda_2(\LapRotW(\selection))$ of
the graphs determined by each edge selection $\selection$; (2) the translational
part of the absolute trajectory error (ATE) between the SLAM solution computed from
the sparsified pose graph and the solution computed using the full pose graph
(i.e. keeping 100\% of the edges); and (3) the average of the rotational
component of the \emph{relative} pose error. We also report the computation time
required to obtain a solution for each method.

Figure \ref{fig:quantitative-2d} summarizes our quantitative results on each 2D
benchmark dataset. Our approach consistently achieves better connected graphs
(as measured by the algebraic connectivity). In most cases, a maximum of 20
iterations was enough to achieve solutions to the relaxation with algebraic
connectivity very close to the dual upper bound (and therefore nearly globally
optimal).

Beyond providing high-quality sparse measurement graphs, our approach is also
fast. For the \Intel{} dataset, all solutions were obtained in less than 250
milliseconds. Sparsifying the (larger) \AIS2Klinik{} dataset required up to 700
ms, but only around 100 ms when larger edge selections were allowed. In those
cases, the duality gap tolerance was reached and optimization could terminate.
The largest dataset (in terms of candidate edges) is the \City10K{} dataset,
with over 10000 loop closure measurements to select from. Despite this, our
approach produces near-optimal solutions in under 2 seconds.

With respect to the suboptimality guarantees of our approach, it is interesting
to note that on both the \Intel{} and \City10K{} datasets, the rounding
procedure introduces fairly significant degradation in algebraic connectivity -
particularly for more aggressive sparsity constraints. In these cases, it seems
that the Boolean relaxation we consider leads to fractional optimal solutions.
It is not clear in these cases whether the integral solutions obtained by
rounding are indeed suboptimal for the Problem \ref{prob:max-aug-alg-conn}, or
whether this is a consequence of the \emph{integrality gap} between
\emph{global} optima of the relaxation and of Problem
\ref{prob:max-aug-alg-conn}.\footnote{In general, even simply \emph{verifying}
  the global optimality of solutions to Problem \ref{prob:max-aug-alg-conn} is
  NP-Hard \cite{mosk2008maximum}.}

Figure \ref{fig:quantitative-3d} summarizes our quantitative results on each of
the 3D benchmark pose-graph optimization datasets. As previously discussed, the
\MAC{} (Nearest) method, which uses the ``nearest neighbors'' rounding
procedure, struggles on these synthetic datasets, whereas the \MAC{} (Madow)
approach using the randomized rounding procedure incurs significantly less
degradation in connectivity compared to the unrounded solution to the
relaxation. Interestingly, on both the \Grid{} and \Torus{} datasets, we also
observe that the dual upper bounds computed during optimization can be quite
loose. This is in contrast to the 2D datasets, where \MAC{} fairly consistently
achieved solutions, at least to the relaxation, which were quite close in
algebraic connectivity to the dual upper bounds.

\subsection{Real multi-session SLAM data}

In this section, we evaluate the performance of MAC on real large-scale
multi-session data from the University of Michigan North Campus Long-Term
Dataset \cite{ncarlevaris-2015a}. To generate pose graphs, we processed LiDAR
data from three sessions using Fast-LIO2 \cite{xu2022fast} for odometry with
loop closure measurements generated based on ScanContext \cite{kim2018scan}. For
loop closures between sessions, we use DiSCo-SLAM \cite{huang2021disco}. We
processed each session in order, so loop closures are generated from the second
session to the first as well as from the third session to both the first and
second sessions. This entire procedure was performed offline as a preprocessing
step to construct multi-session pose graphs. We then sparsified loop closures in
this dataset using \MAC{} (making no distinction between inter- and
intra-session loop closures). We use the same metrics for comparison as in our
evaluation on benchmark pose-graph SLAM datasets, but trajectory errors here are
computed with respect to the \emph{ground truth}, rather than the optimal
solution for the full graph.

An interesting difference between this dataset and the benchmark pose-graph
optimization datasets we considered in the previous section is that the loop
closures generated using ScanContext can be corrupted by outliers. In our
application, the \MAC{} algorithm itself does not account for any possibility of
outliers. In these examples, it was possible to mitigate the influence of
outlier loop closures simply by using a Cauchy robust kernel for all loop
closure measurements while applying \MAC{} without modification.\footnote{As an
  implementation note, we use GTSAM \cite{dellaert2012factor} for pose-graph
  optimization in these experiments, rather than SE-Sync \cite{rosen2019se} as
  in the previous section, in order to simplify the use of robust kernels for
  loop closures. However, the consequence of this is that one cannot verify the
  global optimality of maximum-likelihood estimators computed in this section.}

Qualitative comparisons of the results for the different methods are provided in
Fig. \ref{fig:nclt-pointclouds} and Fig. \ref{fig:nclt-offset-plot}. Both
figures show the case where $20\%$ of the loop closures are retained. Figure
\ref{fig:nclt-pointclouds} provides a top-down view of the LiDAR point clouds
reconstructed based on the pose estimates for graphs obtained using each
sparsification method. \MAC{} (with either rounding approach) and the Greedy ESP
method provide results that are nearly indistinguishable visibly, whereas the
na\"ive baseline fails to accurately anchor the trajectories from different
sessions relative to one another. The offset trajectories visualized in Figure
\ref{fig:nclt-offset-plot} give a closer look at this phenomenon. The na\"ive
method achieves high connectivity between some parts of the three sessions, but
poor connectivity in others. In contrast, the solution obtained by \MAC{}
appears to provide a more uniform distribution of loop closures across the
trajectories.

The corresponding quantitative comparison is given in Figure
\ref{fig:nclt-quantitative}. The comparison of the algebraic connectivity of
each solution once again demonstrates a slight degradation of solution quality
when using ``nearest neighbors'' rounding approach. These solutions similarly
perform worse in terms of trajectory error metrics. Notably, \MAC{} (Madow)
produces \emph{rounded} solutions whose algebraic connectivity is numerically
almost identical to the unrounded solutions. This suggests that the solution to the relaxation for this dataset is very close to the feasible
set for the original problem. Unfortunately, since we did not achieve
convergence of the objective value and the dual upper bound, we cannot verify
the optimality of these solutions. It is also interesting that the Greedy ESP
method achieves an algebraic connectivity that is extremely close to that of
\MAC{} (Madow) across all edge budgets. This is reflected in the trajectory
errors as well, where Greedy ESP and \MAC{} (Madow) are almost indistinguishable
in terms of trajectory error (with \MAC{} (Madow) occasionally performing
marginally better in terms of translation error and Greedy ESP occasionally
providing lower rotation errors). However, \MAC{} is able to achieve these results
two to three orders of magnitude faster than Greedy ESP,
depending upon the edge budget.

\begin{figure*}
\centering
    \begin{subfigure}{0.5\linewidth}
        \includegraphics[width=1.0\linewidth]{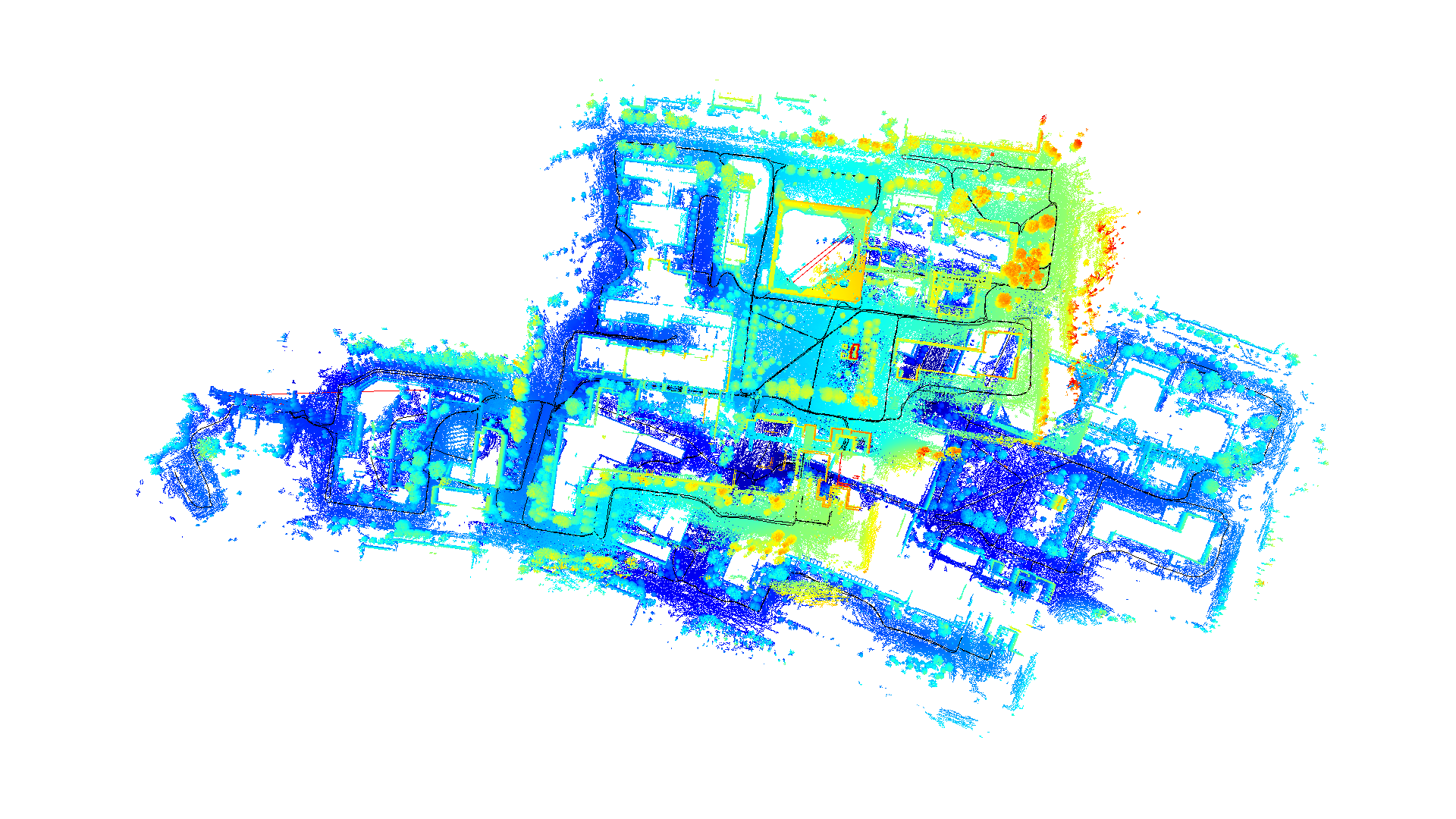}
        \caption{Na\"ive Method.}
    \end{subfigure}%
    \begin{subfigure}{0.5\linewidth}
        \includegraphics[width=1.0\linewidth]{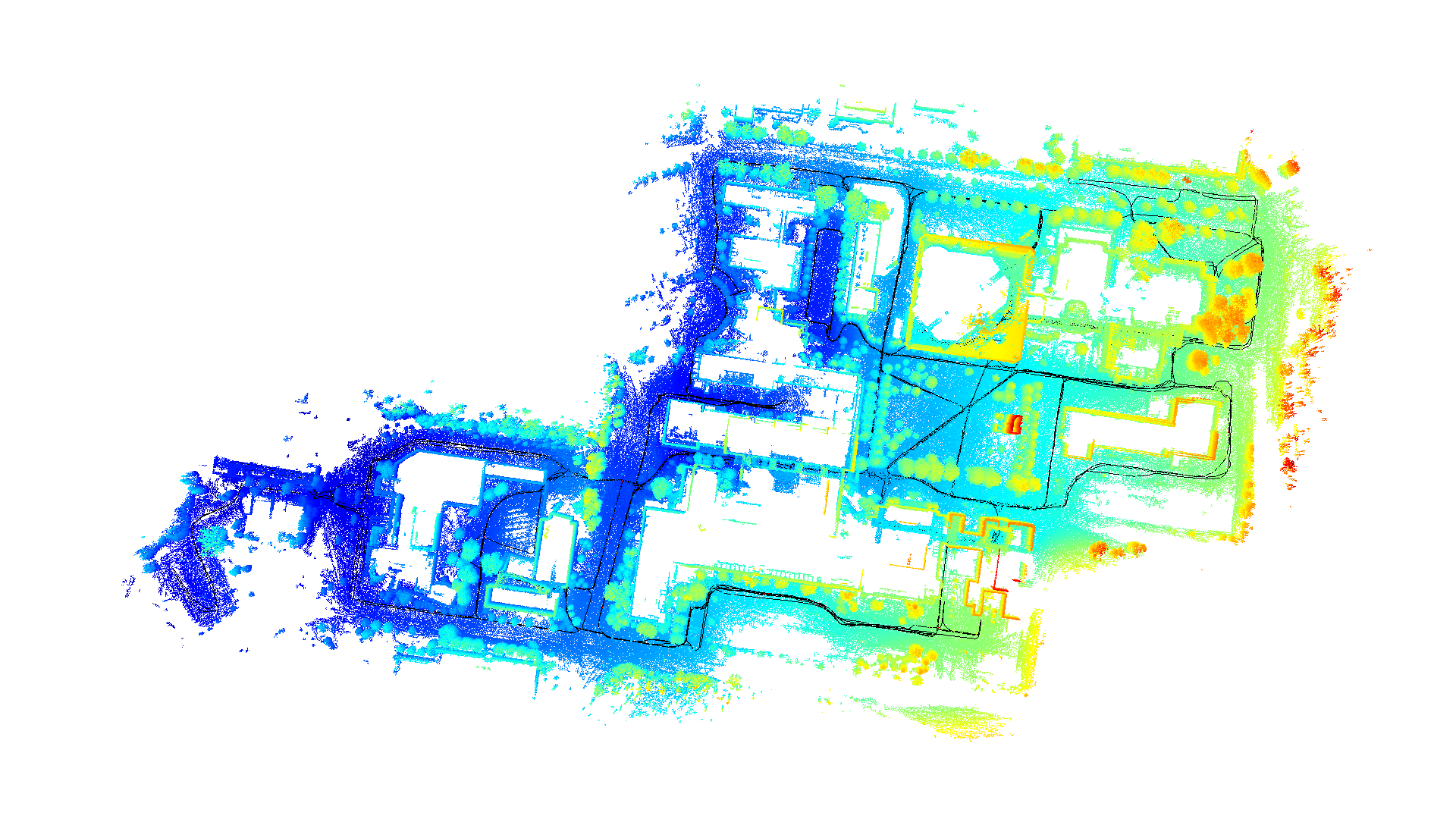}
        \caption{\MAC{} (Nearest).}
    \end{subfigure}
    \begin{subfigure}{0.5\linewidth}
        \includegraphics[width=1.0\linewidth]{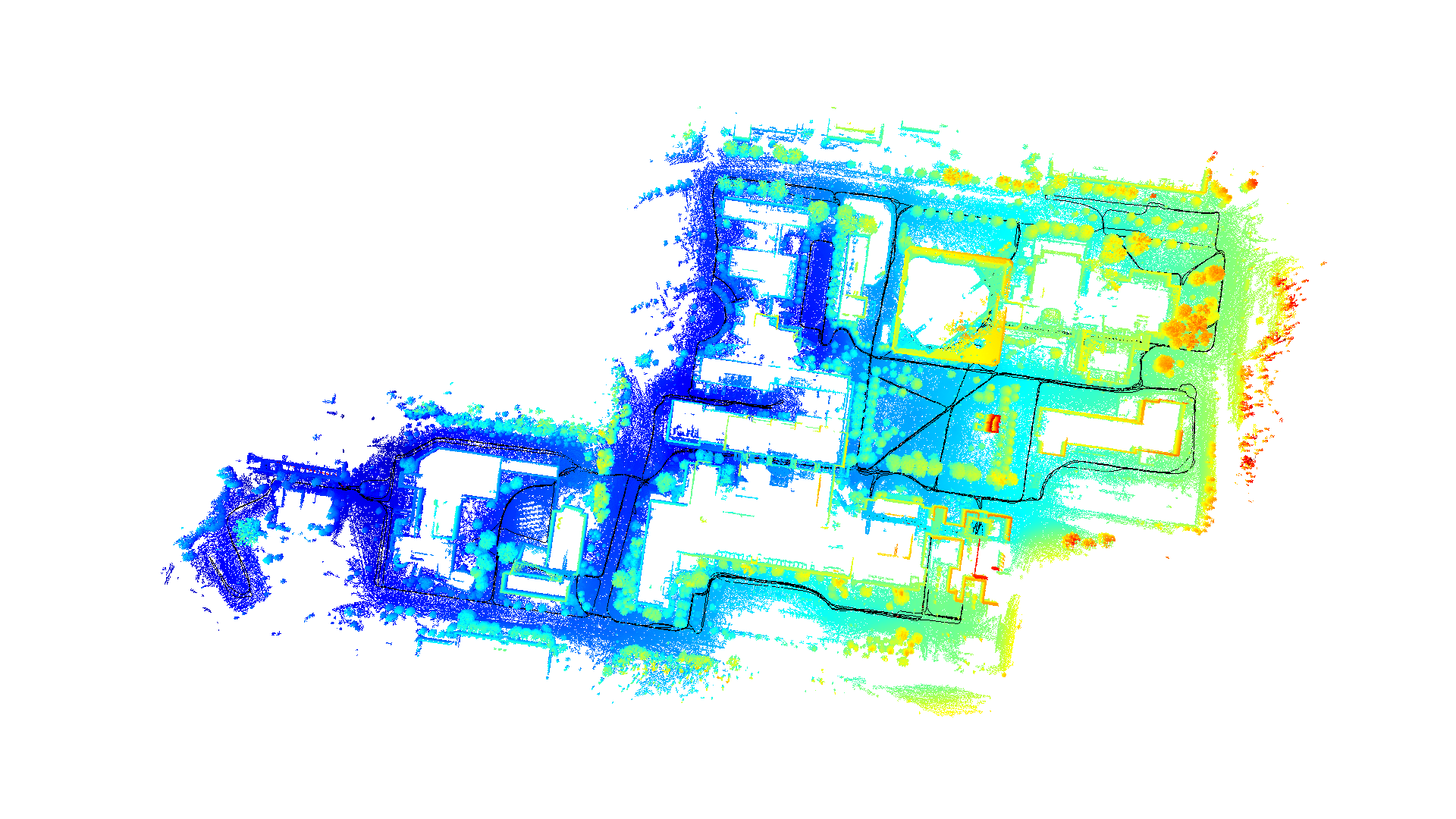}
        \caption{\MAC{} (Madow).}
    \end{subfigure}%
    \begin{subfigure}{0.5\linewidth}
        \includegraphics[width=1.0\linewidth]{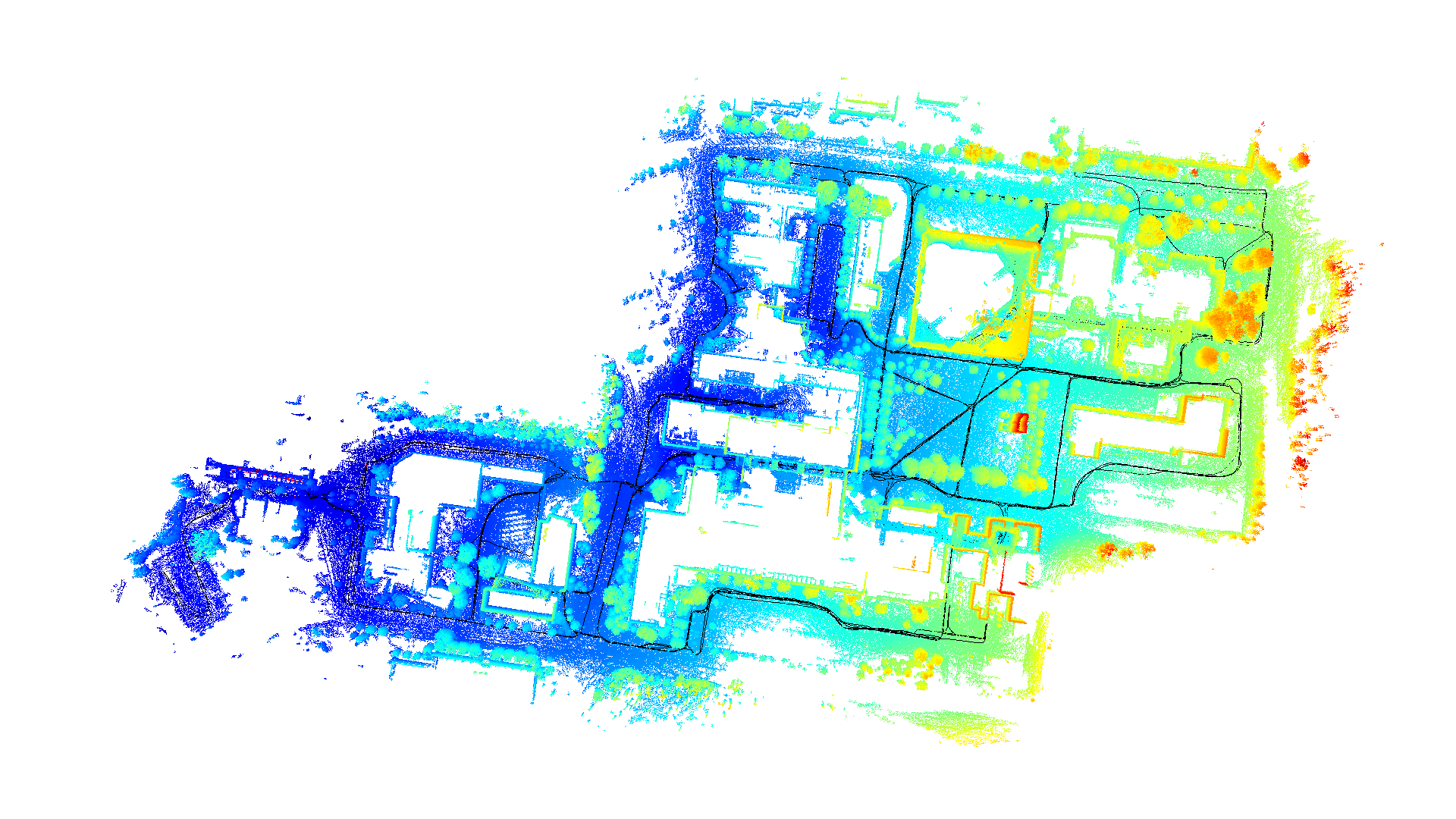}
        \caption{Greedy ESP \cite{khosoussi2019reliable}.}
    \end{subfigure}
    \caption{\textbf{Point cloud reconstructions results for the NCLT Dataset}. Top-down view of
      the LiDAR point clouds obtained from SLAM solutions computed using graphs
      sparsified by each approach to 20\% of the total loop closures in the
      dataset. Point clouds are colored by height in the $z$-axis from blue to
      red. \label{fig:nclt-pointclouds}}
\end{figure*}

\begin{figure*}
    \begin{subfigure}{0.5\linewidth}
        \includegraphics[width=0.9\linewidth]{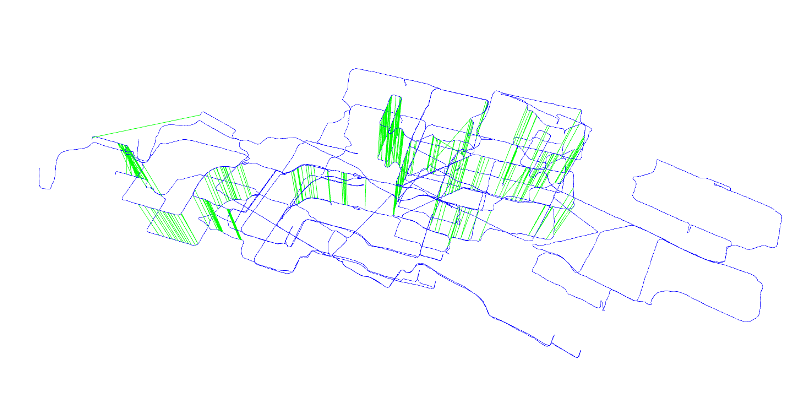}
        \caption{Na\"ive Method.}
    \end{subfigure}%
    \begin{subfigure}{0.5\linewidth}
        \includegraphics[width=1.0\linewidth]{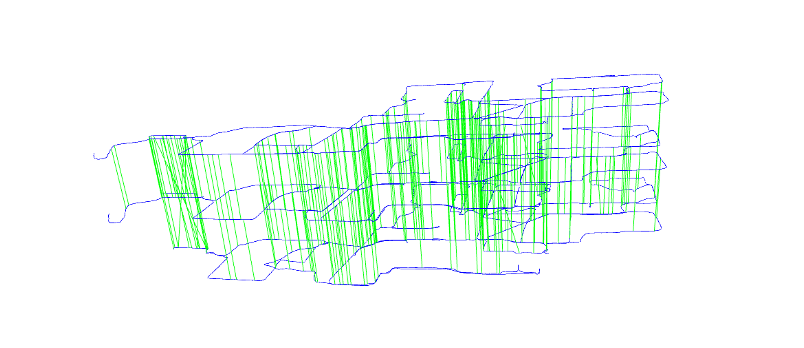}
        \caption{\MAC{} (Madow).}
    \end{subfigure}
    \caption{\textbf{Qualitative results}. Offset trajectory visualization
      comparing the results of the na\"ive baseline approach with \MAC{}
      (Madow). Separate trajectories are distributed on the $z$-axis and ordered
      temporally (i.e. with the trajectory from the first session being the
      lowest and proceeding upward). Odometry edges are displayed in blue, while
      loop closures are shown in green. Different viewpoints are used to
      highlight the impact of poor connectivity in the case of the na\"ive
      method. While the first and second sessions are relatively well-connected
      throughout, the third session is poorly anchored relative to the first
      two. In contrast, the solution obtained by \MAC{} (Madow) results in a
      qualitatively more even distribution of loop closure edges throughout, and
      in turn a higher-quality SLAM solution. \label{fig:nclt-offset-plot}}
\end{figure*}

\begin{figure*}
    \centering
    \includegraphics[width=0.25\linewidth]{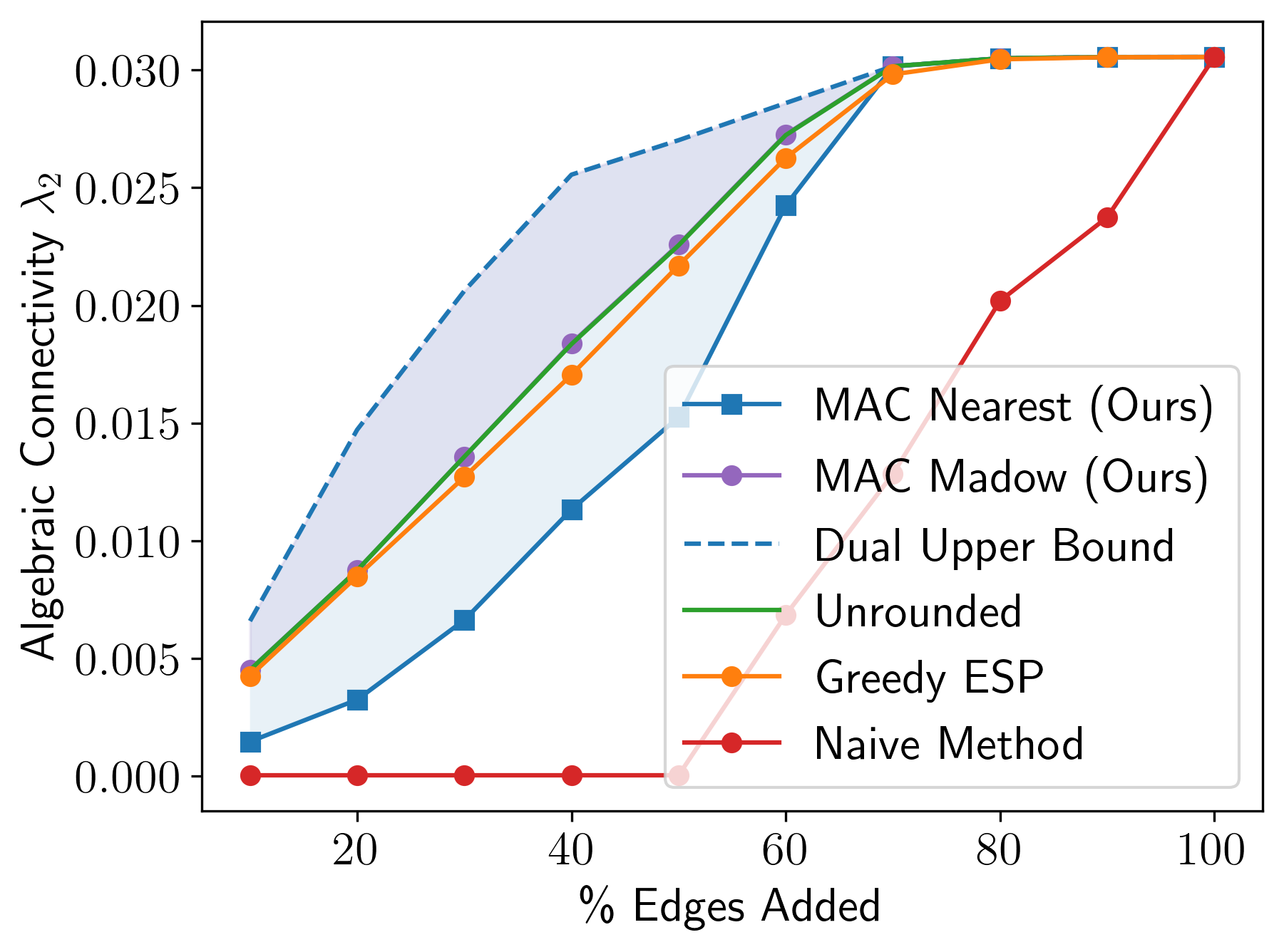}%
    \includegraphics[width=0.25\linewidth]{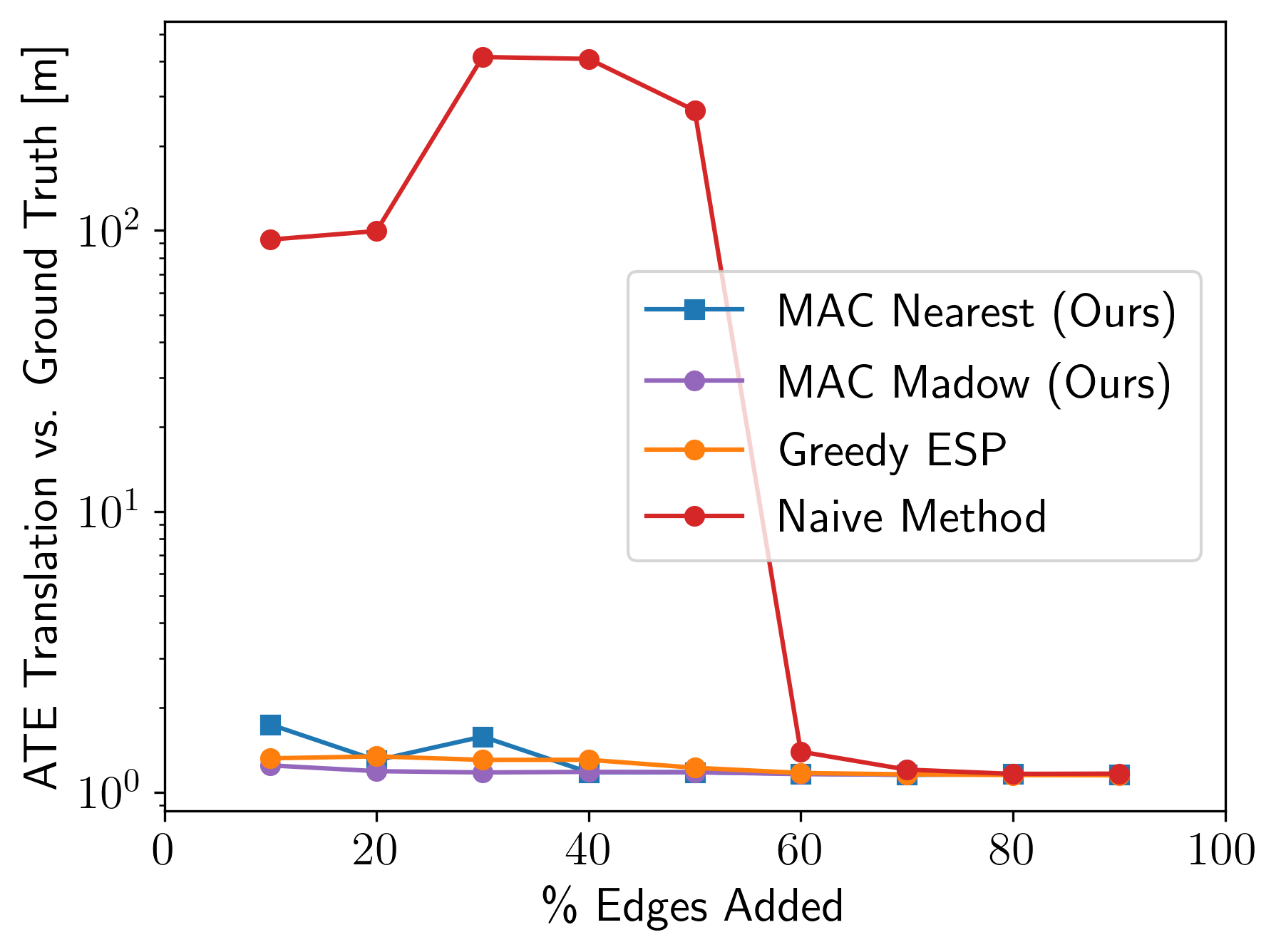}%
    \includegraphics[width=0.25\linewidth]{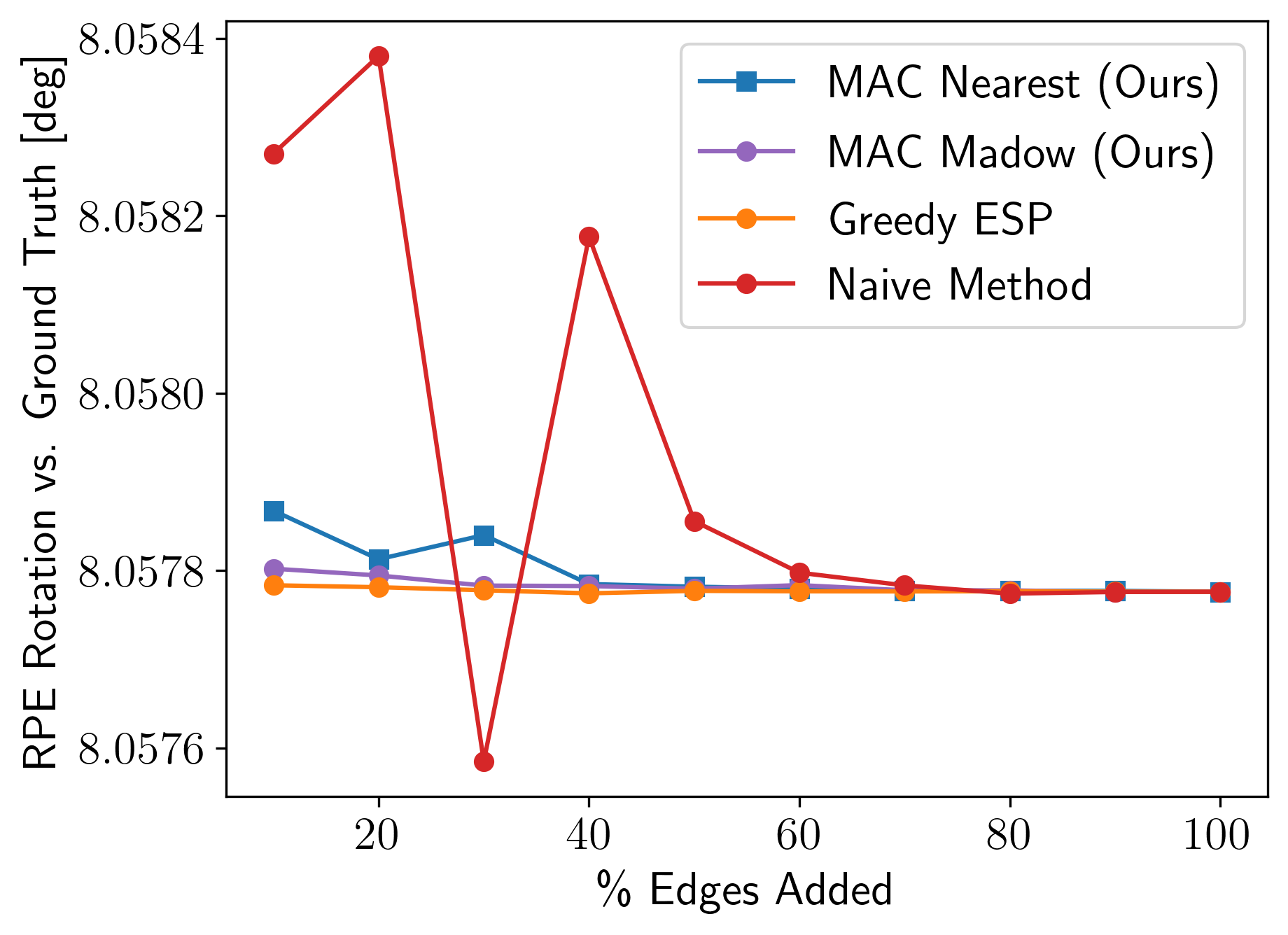}%
    \includegraphics[width=0.25\linewidth]{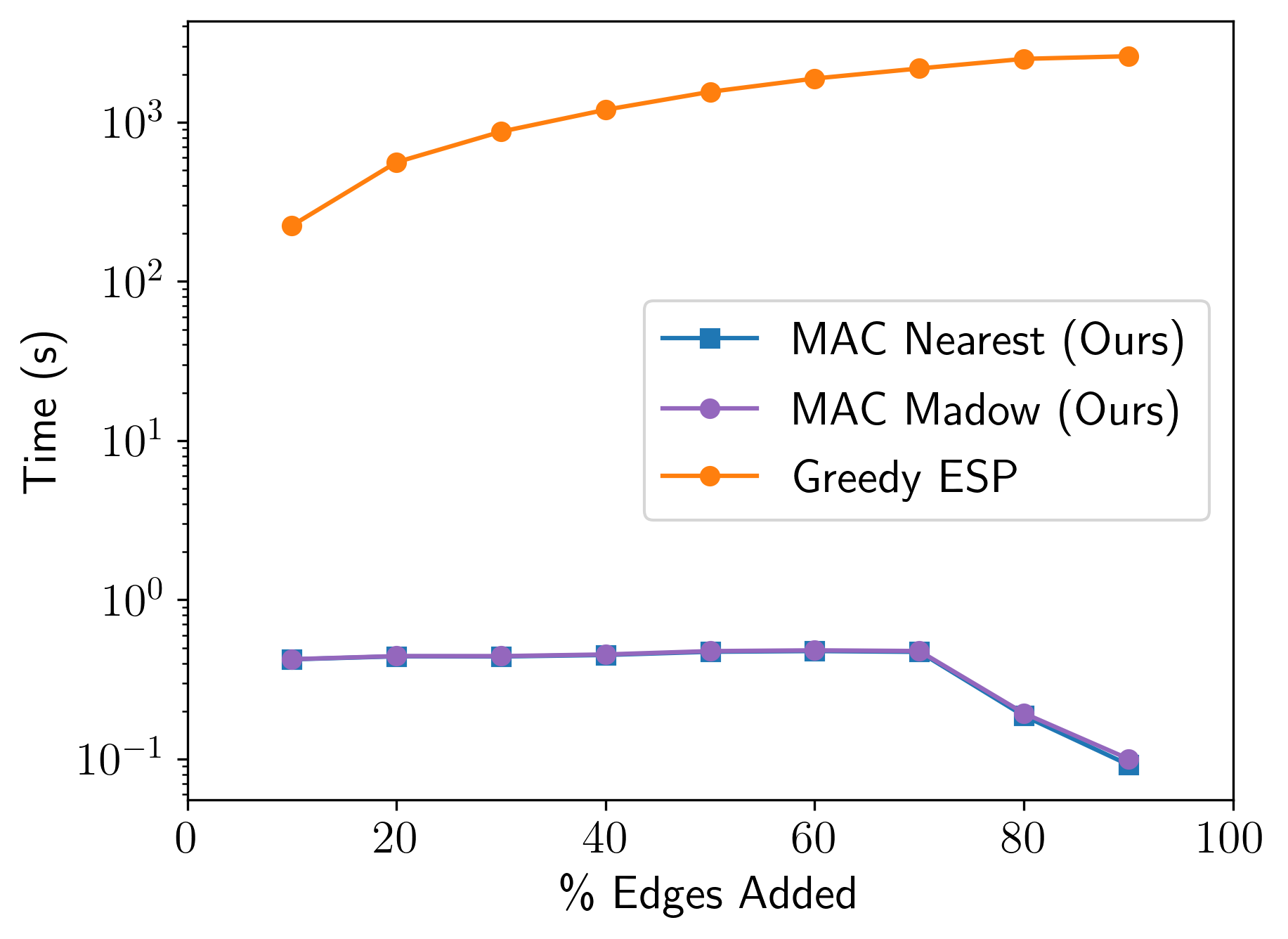}
    \caption{\textbf{Quantitative results for the NCLT Dataset}. Pose-graph
      optimization results for the NCLT dataset with varying degrees of sparsity
      (as percent of candidate edges added). Left to right: The algebraic connectivity
    of the graphs obtained using each method (larger is
    better) with the shaded regions indicating the suboptimality gap for each \MAC{} rounding procedure, the mean translation error and relative
      rotation error compared to the \emph{ground truth} (smaller is better;
      note that a logarithmic scale used for translation errors, but a linear
      scale is used for rotation errors), and the computation time (logarithmic
      scale) for each approach. For 100\% loop closures, both algorithms return
      immediately, so no computation time is reported.}
    \label{fig:nclt-quantitative}
\end{figure*}

\section{Conclusion and Future Work}
\label{sec:conclusion}

In this paper, we proposed an approach for graph sparsification by maximizing
algebraic connectivity, an important quantity appearing in many applications
related to graphs. In particular, the algebraic connectivity has been shown to
control the estimation error of pose-graph SLAM solutions, which motivates the
application of our algorithm to the setting of pose graph sparsification. Our
algorithm, \MAC{}, is based on a first-order optimization approach for solving a
convex relaxation of the maximum algebraic connectivity augmentation problem.
The algorithm itself is simple and computationally inexpensive, and, as we
showed, admits formal \emph{post hoc} performance guarantees on the quality of
the solutions it provides. In experiments on several benchmark pose-graph SLAM
datasets, our approach quickly produces high-quality sparsification results
which better preserve the connectivity of the graph and, consequently, the
quality of SLAM solutions computed using those graphs.

There are many exciting directions for future work on problems related to graph
design, algebraic connectivity, and applications to SLAM (or robotics more
broadly). From a mathematical standpoint, many practical questions about the
relationship between the algebraic connectivity maximization problem
\ref{prob:max-aug-alg-conn} and its relaxation \ref{prob:relaxation} remain. For
example, in several of the cases we observed, we were able to achieve solutions
that appear to be globally optimal with respect to the algebraic connectivity.
When this occurred, it was always in the regime where the number of edges to
select was relatively large. This was also only the case for a few datasets. It
would be interesting to know what properties of the graph under consideration
are related, in some sense, to the apparent ``difficulty'' of solving the
algebraic connectivity maximization problem by way of its relaxation in Problem
\ref{prob:relaxation}. If it is the case that solving problems where more edges
are allowed is somehow ``easier,'' could any performance benefit be gained by
solving a sequence of relaxations with increasingly restrictive values of $K$?
For example, would this help us find better solutions in the case
of smaller $K$, where we generally did not achieve verifiable global optimality
in the relaxation (or do so more quickly)?

It would also be useful to have a better understanding of
the error introduced during the rounding procedure. As we observed, the
introduction of Madow's systematic sampling method for rounding dramatically
improved the quality of our results in some cases. Is there anything we can say,
formally, about the error introduced by different rounding methods (or, for
example, whether an alternative rounding procedure might be preferred)?

The use of a randomized rounding procedure motivates some interesting additional
considerations. For example, by iterating random rounding for a
solution $\selection$ which assigns nonzero probability to every possible
$K$-sparse assignment $\hat{\selection}$, one is guaranteed to \emph{eventually}
obtain a globally optimal solution to Problem \ref{prob:max-aug-alg-conn}. This
provides an alternative perspective on the goal of the relaxation: namely, to
provide a sampling distribution which assigns high probability to globally
optimal solutions for Problem \ref{prob:max-aug-alg-conn}.

Given that our relaxation allows us to quickly compute approximate solutions
(and bounds on solution quality) for large problems, another potential area for
future research would be the use of \MAC{} in conjunction with a combinatorial
search strategy (e.g. branch-and-bound) or a mixed-integer optimization approach
(e.g. as in \cite{nagarajan2018maximizing, somisetty2023optimal}). In this
setting, \MAC{} may be able to accelerate optimization by quickly computing
(potentially coarse) bounds on solution quality while retaining the
\emph{global} optimality properties of a combinatorial method.

From an applications standpoint, we have given no suggestion in this paper about
how to select an edge budget $K$. We concede that the selection of a value of
$K$ will be highly dependent on the application setting of interest. For
example, \citet{lajoie2024swarm} determine their edge budget in Swarm-SLAM based
on communication constraints. In some cases, it may be desirable to dispose
entirely of a hard edge budget constraint, in favor of some smoother
regularization. \citet{nam2023spectral} provide an extension of \MAC{} to this
setting by regularizing based on the spectrum of the adjacency matrix, but in
order for the regularization parameter (trading off solution connectivity for
sparsity) to be interpretable, the eigenvalues of the Laplacian and adjacency
matrix appearing in their objective must be scaled by the same eigenvalues for
the full graph. In applications where the full graph is dense, this computation
may be expensive. Moreover, the nonlinearity of the Laplacian spectrum makes it
difficult to set a value for that regularization parameter which generalizes
across datasets.

Another formulation worth future investigation, inspired by the work of
\citet{khosoussi2019reliable}, would be to consider moving the algebraic
connectivity from the objective to a constraint:
\begin{equation}\label{eq:constraint-formulation}
  \begin{gathered}
    \min_{\selection \in \{0,1\}^m} \sum_{k=1}^m \selection_k \\
    \lambda_2(\selection) \geq \delta,
  \end{gathered}
\end{equation}
where $\delta$ is a parameter to be specified by a user. These kinds of
\emph{spectrally constrained} optimization problems were recently considered by
\citet{garner2023spectrally}, who give some potential solution methods for
general problems of this form, making it potentially a very fruitful starting
point for addressing this variant of the problem. Similar to
\cite{nam2023spectral}, interpreting $\delta$ requires scaling the problem (for
example, we could treat $\delta$ as a fraction of the algebraic connectivity for
the graph obtained by keeping all the edges). It would be helpful to come up
with an interpretable alternative of the problem in equation
\eqref{eq:constraint-formulation} that does not require computations related to
the full (potentially dense) graph.

There are a number of straightforward variants of Problem
\ref{prob:max-aug-alg-conn} that would lend themselves to useful applications.
For example, while we consider uniform costs for each edge, it would not be
difficult to consider edges having different (appropriately normalized) costs.
This would be useful in cases where $K$ corresponds to a communication budget
and different amounts of data are required to transmit information about
different edges, such as in multi-robot SLAM. Similarly motivated by multi-robot
SLAM applications, it is often the case that transmission of a single piece of
data is enough to establish more than one edge. This can occur when a single
keyframe from one robot can ``close the loop'' against multiple keyframes from
another robot (or vice versa). By attaching a single selection variable
$\selection_k$ to a \emph{collection} of edges that one would obtain by sharing
some data, the \MAC{} algorithm can be applied without modification to this
setting of ``coupled'' edges.\footnote{In fact, the sensor package design framework proposed by \citet{kaveti2023oasis} is explicitly formulated in terms of such multi-measurement selection variables, since any sensor selected for inclusion in a robot's sensor suite will generally produce \emph{multiple} measurements at run-time.}

The computational performance of \MAC{} motivates its potential application to
\emph{incremental} sparsification. That is, where graph edges are accumulated
dynamically over time and sparsification must be performed to ensure bounded
computation time and memory requirements. The design of incremental
sparsification methods necessarily introduces several new performance
considerations above and beyond a single edge budget parameter $K$. One might
desire a fixed edge budget, or allow for the number of edges in the graph to
grow at a prescribed rate (perhaps as a function of the number of nodes). The
application of \MAC{} to the incremental setting is also largely unchanged from
its statement in Algorithm \ref{alg:mac}: we may simply run the ``batch''
version of \MAC{} to sparsify the graph periodically as new nodes and edges are
added. Consequently, while the \MAC{} software library implements some
rudimentary examples of incremental sparsification, we do not discuss this
setting in our paper. Nonetheless, this is an exciting area for future
applications of \MAC{}.

Finally, in this work we consider only the removal of measurement graph
\emph{edges}. For applications like long-term SLAM, an important aspect of
future work will be to combine these procedures with methods for \emph{node}
removal (e.g. \cite{johannsson12rssw, carlevarisbianco13iros,
  carlone2014eliminating}). An exciting line of inquiry in this direction will
be whether methods can be developed for node sparsification (or sparsifying
nodes and edges) that preserve some guarantees on graph quality (and, in turn,
SLAM quality). \citet{loukas2019graph} presents some methods that
may serve as starting points in this direction.

\section*{Acknowledgments}

The authors thank Drs. Kasra Khosoussi and Tonio Ter\'an Espinoza for their
feedback and enthusiastic discussions during the development of this work. We
also thank Dr. Khosoussi for his advice during our implementation of the Greedy
ESP method.

\appendices

\section{Proofs of the Main Theorems}

\subsection{Supergradients of the Fiedler value}\label{app:gradients}

In this subsection we prove Theorem \ref{thm:supergradient}, which provides a simple formula for a supergradient of  $\objectiveF(\selection) = \lambda_2(\LapRotW(\selection))$.

Our proof is based upon a characterization of the subdifferential of a sum of maximal eigenvalues due to \citet{overton1993optimality}, which we now briefly recall.  Let $\lambda \colon \Sym^n \to \R^n$ be the mapping that assigns to each symmetric matrix $X$ its vector of eigenvalues, counted with multiplicity and sorted in nondecreasing order:
\begin{equation}
\lambda_1(X) \le \dotsb \le \lambda_n(X).
\end{equation}
Similarly, given $\kappa \in [n]$, we write $\sigma_{\kappa} \colon \Sym^n \to \R$ for the (convex) function that assigns to each symmetric matrix $X$ the sum of its $\kappa$ largest eigenvalues:
\begin{equation}
\sigma_{\kappa}(X) \triangleq \sum_{i = 1}^{\kappa} \lambda_{n - i + 1}(X).
\end{equation}
Finally, for $d \ge 1$ and $0 \le \tau \le d$, define the set:
\begin{equation}
\Phi_{d, \tau} \triangleq \left \lbrace U \in \Sym^d \mid 0 \preceq U \preceq I, \: \tr(U) = \tau \right \rbrace.
\end{equation}

\begin{thm}[Theorem 3.5 of \cite{overton1993optimality}]
\label{subdifferential_of_sum_of_maximum_eigenvalues_thm}
Let $X \in \Sym^n$ and 
\begin{equation}
\label{symmetric_eigendecomposition_for_subdifferential_of_sum_of_maximal_eigenvalues}
X = Q \Diag(\lambda) Q\transpose
\end{equation}
be a symmetric eigendecomposition of $X$.  Fix $\kappa \in [n]$, and suppose that $X$'s eigenvalues $\lambda \triangleq \lambda(X)$ satisfy:
\begin{multline}
\lambda_n \ge \dotsb \ge \lambda_{n - r + 1} \\
> \lambda_{n - r} = \dotsb = \lambda_{n - \kappa + 1} = \dotsb =  \lambda_{n - (r+t) + 1} \\
> \lambda_{n - (r+t) + 2} \ge \dotsb \ge \lambda_1;
\end{multline}
that is, there are $r$ eigenvalues strictly greater than the $\kappa$th-largest eigenvalue $\lambda_{n - \kappa + 1}$, and $\lambda_{n - \kappa + 1}$ has multiplicity $t$.  Finally, let $Q_l \in \R^{n \times r}$ denote the final $r$ columns of $Q$ (i.e.\ the eigenvectors associated with the top $r$-dimensional invariant subspace of $X$), and $Q_p$ the preceding block of $t$ columns (i.e.\ the set of $t$ eigenvectors associated with the $\kappa$th eigenvalue $\lambda_{n - \kappa + 1}$). Then the subdifferential of $\sigma_\kappa$ at $X$ is:
\begin{equation}
\partial \sigma_{\kappa}(X) = \left \lbrace Q_l Q_l\transpose + Q_p U Q_p \transpose \mid U \in \Phi_{t, \kappa - r} \right \rbrace. 
\end{equation}
\end{thm}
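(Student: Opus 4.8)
The plan is to recognize $\sigma_\kappa$ as the \emph{support function} of a compact convex set and then identify its subdifferential with the associated set of maximizers. Define
\begin{equation}
\mathcal{W}_\kappa \triangleq \left\lbrace W \in \Sym^n \mid 0 \preceq W \preceq I,\ \tr(W) = \kappa \right\rbrace,
\end{equation}
equip $\Sym^n$ with the trace inner product $\langle A, B \rangle \triangleq \tr(AB)$, and recall \emph{Ky Fan's maximum principle}, which asserts that
\begin{equation}
\sigma_\kappa(X) = \max_{W \in \mathcal{W}_\kappa} \langle X, W \rangle.
\end{equation}
First I would prove this representation directly: writing $\tilde{W} \triangleq Q\transpose W Q$ and using \eqref{symmetric_eigendecomposition_for_subdifferential_of_sum_of_maximal_eigenvalues}, the objective becomes $\langle X, W \rangle = \sum_i \lambda_i \tilde{W}_{ii}$, which depends on $W$ only through the diagonal $d_i \triangleq \tilde{W}_{ii}$. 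Since $0 \preceq \tilde{W} \preceq I$ forces each $d_i \in [0,1]$ and $\tr(\tilde{W}) = \kappa$ forces $\sum_i d_i = \kappa$, the maximization collapses to a linear program over a box-and-sum constraint whose optimum places unit weight on the $\kappa$ largest eigenvalues, recovering $\sum_{i=1}^\kappa \lambda_{n-i+1}(X) = \sigma_\kappa(X)$.

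With this representation in hand, the second step is purely convex-analytic: the support function of a nonempty compact convex set $C$ is finite and convex everywhere, and its subdifferential at any $X$ is exactly the exposed face $\argmax_{W \in C} \langle X, W \rangle$ (cf.\ e.g.\ \cite{rockafellar2015convex}). Applying this with $C = \mathcal{W}_\kappa$ yields
\begin{equation}
\partial \sigma_\kappa(X) = \left\lbrace W \in \mathcal{W}_\kappa \mid \langle X, W \rangle = \sigma_\kappa(X) \right\rbrace,
\end{equation}
so it remains only to describe this maximizing set explicitly in terms of the eigendecomposition of $X$.

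The final---and most delicate---step is to pin down the structure of the maximizers. Partition the eigenindices into the block $L$ of the $r$ eigenvalues strictly exceeding $\lambda_{n-\kappa+1}$, the block $P$ of the $t$ eigenvalues equal to $\lambda_{n-\kappa+1}$, and the block $S$ of the remaining (strictly smaller) eigenvalues. A transfer (exchange) argument on the linear program of the first step shows that any maximizer must have diagonal (in the eigenbasis) $d_i = 1$ on $L$, $d_i = 0$ on $S$, and arbitrary $d_i \in [0,1]$ on $P$ summing to $\kappa - r$. The key structural fact I would then invoke is that a positive-semidefinite matrix with a zero diagonal entry has the entire corresponding row and column equal to zero: applying this to $\tilde{W} \succeq 0$ on the block $S$, and to $I - \tilde{W} \succeq 0$ on the block $L$ (where $(I - \tilde{W})_{ii} = 0$), forces all off-diagonal couplings involving $L$ or $S$ to vanish. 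Hence every maximizer is block-diagonal of the form $\tilde{W} = \Diag(I_r, U, 0)$ with $U \in \Sym^t$, $0 \preceq U \preceq I$, and $\tr(U) = \kappa - r$, i.e.\ $U \in \Phi_{t, \kappa - r}$; transforming back by $Q$ (the $S$-block contributes nothing) gives $W = Q_l Q_l\transpose + Q_p U Q_p\transpose$, which is precisely the claimed set. I expect this block-vanishing argument to be the crux: establishing the support-function representation and the argmax identity are standard, whereas showing that the \emph{off-diagonal} entries are determined---not merely the diagonal---relies essentially on the semidefiniteness of both $\tilde{W}$ and $I - \tilde{W}$ together with the saturated diagonal entries on $L$ and $S$.
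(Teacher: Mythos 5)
Your proposal is correct, but note that the paper itself offers no proof of this statement: it is imported verbatim as Theorem 3.5 of Overton and Womersley \cite{overton1993optimality}, and the paper's appendix only \emph{applies} it (with $\kappa = 2$, $r \in \lbrace 0, 1 \rbrace$) to derive the Fiedler-value supergradient formula. What you have supplied is a complete, self-contained derivation via the standard modern route: Ky Fan's maximum principle realizes $\sigma_\kappa$ as the support function of the ``Fantope'' $\mathcal{W}_\kappa = \lbrace W \in \Sym^n \mid 0 \preceq W \preceq I,\ \tr(W) = \kappa \rbrace$, the subdifferential of a support function of a compact convex set is its exposed face $\argmax_{W \in \mathcal{W}_\kappa} \langle X, W \rangle$, and the face is then characterized explicitly in the eigenbasis. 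All three steps check out. In particular, you correctly identified the crux: the rotated maximizer $\tilde{W} = Q\transpose W Q$ is pinned down only on its diagonal by the linear-programming reduction, and the off-diagonal structure must be extracted from the fact that a positive-semidefinite matrix with a vanishing diagonal entry has the entire corresponding row and column equal to zero, applied to $\tilde{W}$ on the small-eigenvalue block and to $I - \tilde{W}$ on the large-eigenvalue block; this yields exactly the block form $\Diag(I_r, U, 0)$ with $U \in \Phi_{t, \kappa - r}$, hence $W = Q_l Q_l\transpose + Q_p U Q_p\transpose$ after conjugating back. Two small points worth making explicit if you write this up: (i) your exchange argument on the diagonal LP silently uses $r < \kappa \le r + t$, which is precisely the content of the hypothesis that $\lambda_{n - \kappa + 1}$ lies in the multiplicity-$t$ block (so mass can always be transferred from $P \cup S$ onto an unsaturated index in $L$, and from $S$ onto an unsaturated index in $L \cup P$); and (ii) the claimed set equality also needs the easy converse inclusion --- every matrix of the stated block form is feasible and attains $\sum_{i \in L} \lambda_i + (\kappa - r)\lambda_{n-\kappa+1} = \sigma_\kappa(X)$ --- which is immediate from your LP analysis but should be stated. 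Compared with simply citing \cite{overton1993optimality} as the paper does, your argument buys a transparent convex-analytic proof using only Fan's principle and elementary facts about positive-semidefinite matrices, at the cost of a page of bookkeeping.
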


\begin{proof}[Proof of Theorem \ref{thm:supergradient}]
Let 
\begin{equation}
\begin{gathered}
\eta \colon [0,1]^m \to \R \\
\eta(x) \triangleq \sigma_2(-L(x))
\end{gathered}
\end{equation}
where $L \colon [0,1]^m \to \PSD^n$ is the PSD-matrix-valued affine map defined in \eqref{graph_laplacian_function_eq}.  Then
\begin{equation}
\label{relation_between_eta_and_negative_lambda_2}
\eta(x) = -\lambda_1(L(x)) - \lambda_2(L(x)) = -\lambda_2(L(x))
\end{equation}
since $\lambda_1(L(x)) = 0$ as $L(x)$ is a graph Laplacian.  Note that \eqref{relation_between_eta_and_negative_lambda_2} implies $f(x) = \lambda_2(L(x))$ is concave, since $\eta(x) = -f(x)$ is convex.  Moreover, since the domain of $\sigma_2(x)$ is all of $\R^n$ and $L(x)$ is smooth, the subdifferential chain rule (cf.\ Example 7.27 and Theorem 10.6 of \cite{Rockafellar2009}) implies:
\begin{equation}
\label{subdifferential_of_eta}
\partial \eta(x) = \left \lbrace -\nabla L(x)^*[W] \mid W \in \partial \sigma_2(-L(x))\right \rbrace,
\end{equation}
where $^*$ denotes the adjoint.

Now let $q_2 \in \R^n$ be any normalized eigenvector of $\lambda_2(L(x))$ that is orthogonal to $\ones$.  Since 
\begin{equation}
\label{relation_between_eigenvalues_of_L_and_negative_L}
\lambda_{n - i + 1}(-L(x)) = -\lambda_{i}(L(x)) \quad \quad \forall i \in [n]
\end{equation}
and $\ones$ is an eigenvector of $\lambda_1(L(x)) = 0$, we may factor $-L(x)$ as:
\begin{equation}
\label{factorization_for_subdifferential}
-L(x) = Q \Diag\left[\lambda(-L(x))\right] Q\transpose,
\end{equation}
where $Q$ has the form:
\begin{equation}
\label{basis_matrix_for_proof_of_subdifferential}
Q = 
\begin{pmatrix}
P & q_2 & \frac{1}{\sqrt{n}} \ones
\end{pmatrix} \in \Orthogonal(n).
\end{equation}

We claim that:
\begin{equation}
\label{element_of_subdifferential_of_sigma2}
V \triangleq \frac{1}{n}\ones \ones\transpose + q_2q_2\transpose \in \partial \sigma_2(-L(x)).
\end{equation}
To prove this, we will consider two cases.

\textbf{Case 1: $\lambda_2(L(x)) > 0$.}  Since $\lambda_1(L(x)) = 0$ (as $L(x)$ is a graph Laplacian), equation \eqref{relation_between_eigenvalues_of_L_and_negative_L} shows that the maximum eigenvalue of $-L(x)$ is simple.  Consequently, applying Theorem \ref{subdifferential_of_sum_of_maximum_eigenvalues_thm} with factorization \eqref{factorization_for_subdifferential}--\eqref{basis_matrix_for_proof_of_subdifferential} (and $r = 1$), we find that:
\begin{equation}
\label{subdifferential_for_lambda_2_greater_than_0}
\partial \sigma_2(-L(x)) =\left \lbrace \frac{1}{n}\ones \ones\transpose + Q_p U Q_p\transpose \mid U \in \Phi_{t,1} \right \rbrace.
\end{equation}
In particular, taking
\begin{equation}
U = 
\begin{pmatrix}
0_{(t-1) \times (t-1)} & \\
& 1
\end{pmatrix} \in \Phi_{t,1}
\end{equation}
in \eqref{subdifferential_for_lambda_2_greater_than_0} and recalling \eqref{basis_matrix_for_proof_of_subdifferential} establishes \eqref{element_of_subdifferential_of_sigma2}.

\textbf{Case 2:  $\lambda_2(L(x)) = 0$.}  In this case $\lambda_1(L(x)) =  \lambda_2(L(x)) = 0$.  Consequently, applying Theorem \ref{subdifferential_of_sum_of_maximum_eigenvalues_thm} with factorization \eqref{factorization_for_subdifferential}--\eqref{basis_matrix_for_proof_of_subdifferential} (and $r = 0$), we find that:
\begin{equation}
\label{subdifferential_for_lambda_2_equal_to_0}
\partial \sigma_2(-L(x)) =\left \lbrace Q_p U Q_p\transpose \mid U \in \Phi_{t,2} \right \rbrace.
\end{equation}
In particular, taking
\begin{equation}
U = 
\begin{pmatrix}
0_{(t-2) \times (t-2)} & \\
& I_2
\end{pmatrix} \in \Phi_{t,2}
\end{equation}
in \eqref{subdifferential_for_lambda_2_equal_to_0} and recalling \eqref{basis_matrix_for_proof_of_subdifferential} establishes \eqref{element_of_subdifferential_of_sigma2}.

Equations \eqref{subdifferential_of_eta} and \eqref{element_of_subdifferential_of_sigma2} together imply that:
\begin{equation}
y \triangleq -\nabla L(x)^*[V] \in \partial \eta(x).
\end{equation}
Moreover, it follows from \eqref{graph_laplacian_function_eq} that the elements of $y$ are given explicitly by:
\begin{equation}
\label{elementwise_definition_for_y}
\begin{split}
y_k &= \left \langle -\LapRotWC_{k}, \: \frac{1}{n}\ones \ones\transpose + q_2 q_2\transpose \right \rangle \\
&= -\frac{1}{n} \ones\transpose \LapRotWC_{k}\ones - q_2\transpose \LapRotWC_{k} q_2\\
&= - q_2\transpose \LapRotWC_{k} q_2,
\end{split}
\end{equation}
since $\ones \in \ker(\LapRotWC_{k})$ for all $k$.  Comparing \eqref{elementwise_definition_for_y} and \eqref{eq:supergradient} shows that $y = -g$, and thus $-g \in \partial \eta(x)$.  Finally, recalling that $\eta(x) = -\lambda_2(L(x))$ [cf.\ \eqref{relation_between_eta_and_negative_lambda_2}] shows that $g$ as defined in \eqref{eq:supergradient} is a supergradient of $\lambda_2(L(x))$, as claimed.
\end{proof}

\subsection{Solving the direction-finding subproblem (Problem \ref{prob:dir-subproblem})}\label{app:dir-subproblem}

This appendix aims to prove the claim that \eqref{eq:dir-subproblem-opt}
provides an optimal solution to the linear program in Problem
\ref{prob:dir-subproblem}.

\begin{proof}[Proof of Theorem \ref{thm:dir-subproblem}]
Rewriting the objective from Problem \ref{prob:dir-subproblem} in terms
of the elements of $s$ and $\supergradient(\selection)$, we have:
\begin{equation}\label{eq:dir-subproblem-objective-deriv}
  \begin{aligned}
    \supergradient\transpose s &= \sum_{k=1}^m \supergradient_k s_k, \\
    &= \sum_{k=1}^m s_k q_2\transpose \LapRotWC_k q_2,
  \end{aligned}
\end{equation}
where in the last line we have used the definition of $\supergradient_k$ in
\eqref{eq:supergradient}. Now, since the component-wise Laplacian matrices
$\LapRotWC_k \succeq 0$ for each edge $k$, every component of the supergradient
must always be nonnegative, i.e. $\supergradient_k \geq 0$. Further,
since $0 \leq s_k \leq 1$, the objective in
\eqref{eq:dir-subproblem-objective-deriv} is itself a sum of nonnegative terms.
From this, it follows directly that the objective in
\eqref{eq:dir-subproblem-objective-deriv} is maximized (subject to the
constraint that $\sum_{k=1}^m s_k = K$) specifically by selecting (i.e., by
setting $s_k = 1$) each of the $K$ largest components of
$\supergradient(\selection)$, giving the result in
\eqref{eq:dir-subproblem-opt}.
\end{proof}

\bibliographystyle{IEEEtranN}
\begin{footnotesize}
\bibliography{references}
\end{footnotesize}

\vfill

\end{document}